\newcommand{\new}[1]{\textcolor{blue}{#1}}    
\newcommand{\statespace}{\mathcal{S}}
\newcommand{\optionspace}{\mathcal{O}}
\newcommand{\dynamicsmodeltarget}{m^p}
\newcommand{\rewardmodeltarget}{m^r}
\newcommand{\durationmodeltarget}{m^l}
\newcommand{\dynamicsmodel}{M^p}
\newcommand{\rewardmodel}{M^r}
\newcommand{\durationmodel}{M^l}
\DeclareMathOperator*{\argmax}{\arg\!\max}
\newtheorem{theorem}{Theorem}
\newtheorem{lemma}{Lemma}
\newtheorem{assumption}{Assumption}
\newcommand{\bbE}{\mathbb{E}}
\newcommand{\bbR}{\mathbb{R}}
\newcommand{\bbI}{\mathbb{I}}
\newcommand{\calS}{\mathcal{S}}
\newcommand{\calA}{\mathcal{A}}
\newcommand{\calR}{\mathcal{R}}
\newcommand{\calO}{\mathcal{O}}
\newcommand{\calI}{\mathcal{I}}
\newcommand{\calL}{\mathcal{L}}
\newcommand{\calF}{\mathcal{F}}
\newcommand{\calH}{\mathcal{H}}
\newcommand{\calM}{\mathcal{M}}
\newcommand{\norm}[1]{\left\lVert#1\right\rVert}
\newcommand{\abs}[1]{\left\lvert#1\right\rvert}
\newcommand{\cardS}{\abs{\calS}}
\newcommand{\cardO}{\abs{\calO}}
\title{Average-Reward Learning and Planning with Options}
\author{%
  Yi Wan$^\dagger$, Abhishek Naik$^\dagger$, Richard S. Sutton$^{\dagger\ddagger}$\\
  \texttt{\{wan6,anaik1,rsutton\}@ualberta.ca}
}
\begin{document}

\maketitle

\vspace{-2.8\baselineskip}
\begin{center}
  \begin{tabular}{c@{\qquad}c}
    \llap{$^\dagger$}University of Alberta, Amii
    & \llap{$^\ddagger$}DeepMind\\
    Edmonton, Canada
    & Edmonton, Canada
  \end{tabular}
\end{center}
\vspace{1\baselineskip}

\begin{abstract}
We extend the options framework for temporal abstraction in reinforcement learning from discounted Markov decision processes (MDPs) to average-reward MDPs.
Our contributions include general convergent off-policy inter-option learning algorithms, intra-option algorithms for learning values and models, as well as sample-based planning variants of our learning algorithms.
Our algorithms and convergence proofs extend those recently developed by Wan, Naik, and Sutton.  
We also extend the notion of option-interrupting behavior from the discounted to the average-reward formulation.
We show the efficacy of the proposed algorithms with experiments on a continuing version of the Four-Room domain. 

\end{abstract}

\section{Introduction}

Reinforcement learning (RL) is a formalism of trial-and-error learning in which an agent interacts with an environment to learn a behavioral strategy that maximizes a notion of reward. 
In many problems of interest, a learning agent may need to predict the consequences of its actions over multiple levels of temporal abstraction. 
The \textit{options} framework provides a way for defining courses of actions over extended time scales, and for learning, planning, and representing knowledge with them (Sutton, Precup, \& Singh 1999, 
Sutton \& Barto 2018). 
The options framework was originally proposed within the \textit{discounted} formulation of RL in which the agent tries to maximize the expected discounted return from each state. 
We extend the options framework from the discounted formulation to the \textit{average-reward} formulation in which the goal is to find a policy that maximizes the rate of reward.

The average-reward formulation is of interest because, once genuine function approximation is introduced, there is no longer a well-defined discounted formulation of the continuing RL problem (see Sutton \& Barto 2018, Section 10.4; Naik et al. 2019).
If we want to take advantage of options in acting, learning, and planning in the continuing (non-episodic) RL setting, then we must extend options to the average-reward formulation.

Given a Markov decision process (MDP) and a fixed set of options, learning and planning algorithms can be divided into two classes.
The first class consists of \textit{inter}-option algorithms, which enable an agent to learn or plan with options instead of primitive actions. 
Given an option, the learning and planning updates for this option in these algorithms occur only \textit{after} the option's actual or simulated execution. 
Algorithms in this class are also called semi-MDP (SMDP) algorithms because given an MDP, the decision process that selects among a set of options, executing each to termination, is an SMDP (Sutton et al. 1999).
The second class consists of algorithms in which learning or planning updates occur after each state-action transition \textit{within} options' execution --- these are called \textit{intra}-option algorithms. From a single state-action transition, these algorithms can learn or plan to improve the values or policies for \emph{all} options that may generate that transition, and are therefore potentially more efficient than SMDP algorithms.

Several inter-option (SMDP) learning algorithms have been proposed for the average-reward formulation (see, e.g., Das et al.~1999, Gosavi 2004, Vien \& Chung 2008). 
To the best of our knowledge, Gosavi's (2004) algorithm is the only proven-convergent \textit{off-policy} inter-option learning algorithm. 
However, its convergence proof requires the underlying SMDP to have a special state that is recurrent under all stationary policies. 
Recently, Wan, Naik, and Sutton (2021) proposed Differential Q-learning, an off-policy control learning algorithm for average-reward MDPs that is proved to converge without requiring any special state. 
We extend this algorithm and its convergence proof from primitive actions to options and highlight some challenges we faced in developing \textit{inter-option Differential Q-learning}.
For planning, we propose \textit{inter-option Differential Q-planning}, which is the first convergent \textit{incremental} (sampled-based) planning algorithm. 
The existing proven-convergent inter-option planning algorithms (e.g., Schweitzer 1971, Puterman 1994, Li \& Cao 2010) are not incremental because they perform a full sweep over states for each planning step. 

Additionally, the literature lacks intra-option learning and planning algorithms within the average-reward formulation for both values and models.
We fill this gap by proposing such algorithms in the average-reward formulation and provide their convergence results. 
These algorithms are stochastic approximation algorithms solving the average-reward intra-option value and model equations, which are also introduced in this paper for the first time.

Sutton et al.~(1999) also introduced an algorithm to improve an agent's behavior given estimated option values. 
Instead of letting an option execute to termination, this algorithm involves potentially interrupting an option's execution to check if starting a new option might yield a better expected outcome. 
If so, then the currently-executing option is terminated, and the new option is executed. 
Our final contribution involves extending this notion of an \textit{interruption} algorithm from the discounted to the average-reward formulation.

\section{Problem Setting}
\label{sec: background}

We formalize an agent's interaction with its environment by a finite Markov decision process (MDP) $\calM$ and a finite set of options $\calO$. The MDP is defined by the tuple $\calM \doteq (\calS, \calA, \calR, p)$, where $\calS$ is a set of states, $\calA$ is a set of actions, $\calR$ is a set of rewards, and $p : \calS \times \calR \times \calS \times \calA \to [0, 1]$ is the dynamics of the environment. 
Each option $o$ in $\calO$ has two components: the option's \textit{policy} $\pi^o: \calA \times \calS \to [0, 1]$, and a probability distribution of the option's \textit{termination} $\beta^o: \calS \to [0, 1]$.
For simplicity, for any $s \in \calS, o \in \calO$, we use $\pi(a \mid s, o)$ to denote $\pi^o(a, s)$ and $\beta(s, o)$ to denote $\beta^o(s)$. Sutton et al.'s (1999) options additionally have an \textit{initiation} set that consists of the states at which the option can be initiated. To simplify the presentation in this paper, we allow all options to be initiated in all states of the state space; the algorithms and theoretical results can easily be extended to incorporate initiation from specific states. 

In the continuing (non-episodic) setting, the agent-environment interactions go on forever without any resets.
If an option $o$ is initiated at time $t$ in state $S_t$, then the action $A_t$ is chosen according to the option's policy $\pi(\cdot \mid S_t, o)$. The agent then observes the next state $S_{t+1}$ and reward $R_{t+1}$ according to $p$. 
The option terminates at $S_{t+1}$ with probability $\beta(S_{t+1}, o)$ or continues with action $A_{t+1}$ chosen according to $\pi(\cdot \mid S_{t+1}, o)$. It then possibly terminates in $S_{t+2}$ according to $\beta(S_{t+2}, o)$, and so on. 
At an option termination, one way to govern an agent's behavior is to choose a new option according to a hierarchical policy
$\mu_b : \mathcal{S} \times \mathcal{O} \mapsto [0, 1]$. In this case, when an option terminates at time $t$, the next option is selected stochastically according to $\mu_b (\cdot | S_t)$. 
The option initiates at $S_t$ and terminates at $S_{t+K}$, where $K$ is a random variable denoting the number of time steps the option executed. 
At $S_{t+K}$, a new option is again chosen according to $\mu_b (\cdot | S_{t+K})$, and so on. 
We use the notation $O_t$ to denote whatever option is being executed at time step $t$. 
Note that $O_t$ will remain the same for as many steps as the option executes.
Also note that actions are a special case of options: every action $a$ is an option $o$ that terminates after exactly one step ($\beta(s,o)=1,\ \forall s$) and whose policy is to pick $a$ in every state ($\pi(a \mid s,o)=1,\ \forall s$).

Let $T_n$ denote the time step when the ${n-1}\textsuperscript{th}$ option terminates and the $n\textsuperscript{th}$ option is chosen. 
Denote the $n\textsuperscript{th}$ option by $\hat O_n \doteq O_{T_n}$, its starting state by $\hat S_n \doteq S_{T_n}$, the cumulative reward during its execution by $\hat R_n \doteq \sum_{t = T_n + 1}^{T_{n+1}} R_{t}$, the state it terminates in by $\hat S_{n+1} \doteq S_{T_{n+1}}$, and its length by $\hat L_n \doteq T_{n+1} - T_n$. 
Note that every option's length is a random variable taking values among positive integers. 
The option's transition probability is then defined as $\hat p(s', r, l \mid s, o) \doteq \Pr(\hat S_{n+1} = s', \hat R_{n} = r, \hat L_n = l \mid \hat S_{n} = s, \hat O_{n} = o)$. 
Throughout the paper, we assume that the expected execution time of every option starting from any state is finite.

An MDP $\calM$ and a set of options $\calO$ results in an SMDP $\hat \calM = (\calS, \calO, \hat \calL, \hat \calR, \hat p)$, where $\hat \calL$ is the set of all possible lengths of options and $\hat \calR$ is the set of all possible options' cumulative rewards. 
For this SMDP, the \textit{reward rate} of a policy $\mu$ given a starting state $s$ and option $o$ can be defined as $r^C(\mu)(s, o) \doteq \lim_{t \to \infty} \bbE_\mu [\sum_{i = 1}^t R_i \mid S_0 = s, O_0 = o ] / t$. 
Alternatively, at the level of option transitions, $r(\mu)(s, o) \doteq \lim_{n \to \infty} \bbE_\mu [\sum_{i = 0}^n \hat R_i \mid \hat S_0 = s, \hat O_0 = o ] / \bbE_\mu [\sum_{i = 0}^n \hat L_i \mid \hat S_0 = s, \hat O_0 = o ]$. Both the limits exist and are equivalent (Puterman's (1994) propositions 11.4.1 and 11.4.7) under the following assumption: 
\begin{assumption}\label{assu: unichain}
The Markov chain induced by any stationary policy in the MDP $(\calS, \calO, \hat \calR, p')$ is unichain, where $p'(s', r \mid s, o) \doteq \sum_{l} \hat p (s', r,\,l \mid s, o)\ \forall\ s', r, s, o$.
\end{assumption}

\textbf{Note:} In a unichain MDP, there could be some states that only occur a finite number of times in a single stream of experience. In other words, these states are \textit{transient} under all stationary policies. 
Thus, their values can not be correctly estimated by \textit{any} learning algorithm. 
However, this inaccurate value estimation is not a problem because the decisions made in these transient states do not affect the reward rate. 
We refer to the non-transient states as \emph{recurrent} states and denote their set by $\calS' \subseteq \calS$.

Under Assumption~\ref{assu: unichain}, the reward rate does not depend on the starting state-option pair and hence we can denote it by just $r(\mu)$. 
The optimal reward rate can then be defined as $r_* \doteq \sup_{\mu \in \Pi} r(\mu)$,
where $\Pi$ denotes the set of all policies.
The differential option-value function for a policy $\mu$ is defined for all $s \in \calS, o \in \calO$ as $q_\mu(s, o) \doteq \bbE_\mu[R_{t+1} - r(\mu) + R_{t+2} - r(\mu) + \cdots \mid S_t = s, O_t = o\,].$
The \textit{evaluation} and \emph{optimality} equations for SMDPs, as given by Puterman (1994), are: 
\begin{align}
    q(s, o) & = \sum_{s', r,\,l} \hat p(s', r,\,l \mid s, o) \big( r - \bar r \cdot l +  \sum_{o'} \mu(o' | s') q (s', o') \big), \label{eq: SMDP Bellman evaluation equation} \\
    q(s, o) & = \sum_{s', r,\,l} \hat p(s', r,\,l \mid s, o) \big( r - \bar r \cdot l  + \max_{o'} q (s', o') \big), \label{eq: SMDP Bellman optimality equation}
\end{align}
where $q$ and $\bar{r}$ denote estimates of the option-value function and the reward rate respectively. If Assumption~\ref{assu: unichain} holds, the SMDP Bellman equations have a unique solution for $\bar r$ — $r(\mu)$ for evaluation and $r_*$ for control — and a unique solution for $q$ only up to a constant (Schweitzer \& Federgruen 1978).
Given an MDP and a set of options, the goal of the \textit{prediction} problem is, for a given policy $\mu$, to find the reward rate $r(\mu)$ and the differential value function (possibly with some constant offset). 
The goal of the \textit{control} problem is to find a policy that achieves the optimal reward rate $r_*$. 


\section{Inter-Option Learning and Planning Algorithms} \label{sec: inter-option}

In this section, we present our inter-option learning and planning, prediction and control algorithms, which extend Wan et al.'s (2021) differential learning and planning algorithms for average-reward MDPs from actions to options. 
We begin with the control learning algorithm and then move on to the prediction and planning algorithms.

Consider Wan et al.'s (2021) control learning algorithm:
\begin{align*}
    Q_{t+1}(S_t, A_t) \doteq Q_{t}(S_t, A_t) + \alpha_{t} \delta_{t}, \quad \bar R_{t+1} \doteq \bar R_{t} + \eta \alpha_{t} \delta_{t},
\end{align*}
where $Q$ is a vector of size $\abs{\calS \times \calA}$ that approximates a solution of $q$ in the Bellman optimality equation for MDPs, $\bar R$ is a scalar estimate of the optimal reward rate, $\alpha_t$ is a step-size sequence, $\eta$ is a positive constant, and $\delta_t$ is the temporal-difference (TD) error: $\delta_t \doteq R_t - \bar R_t + \max_a Q_t (S_{t+1}, a) - Q_t(S_{t}, A_t).$ The most straightforward inter-option extension of Differential Q-learning is:
\begin{align}
    Q_{n+1}(\hat S_n, \hat O_n) & \doteq Q_{n}(\hat S_n, \hat O_n) + \alpha_{n} \delta_{n}, \label{eq: possible extension of Diff Q} \\
    \bar R_{n+1} & \doteq \bar R_{n} + \eta \alpha_{n} \delta_{n}, \label{eq: possible extension of Diff bar R}
\end{align}
where $Q$ is a vector of size $\abs{\calS \times \calO}$ that approximates a solution of $q$ in \eqref{eq: SMDP Bellman optimality equation}, $\bar R$ is a scalar estimate of $r_*$, $\alpha_n$ is \emph{a} step-size sequence, and $\delta_n$ is the TD error:
\begin{align} \label{eq: Inter-option Differential Q-learning attempt TD error}
    \delta_n & \doteq \hat R_n - \hat L_n \bar R_n + \max_o Q_n (\hat S_{n+1}, o) - Q_n(\hat S_{n}, \hat O_n).
\end{align}
Such an algorithm is prone to instability because the \textit{sampled} option length $\hat L_{n}$ can be quite large, and any error in the reward-rate estimate $\bar R_n$ gets multiplied with the potentially-large option length. 
Using small step sizes might make the updates relatively stable, but at the cost of slowing down learning for options of shorter lengths. 
This could make the choice of step size quite critical, especially when the range of the options' lengths is large and unknown. 
Alternatively, inspired by Schweitzer (1971), we propose scaling the updates by the \textit{estimated} length of the option being executed:
\begin{align}
    Q_{n+1}(\hat S_n, \hat O_n) & \doteq Q_{n}(\hat S_n, \hat O_n) + \alpha_{n} \delta_{n} / L_n(\hat S_n, \hat O_n), \label{eq: Inter-option Differential TD-learning Q}\\
    \bar R_{n+1} & \doteq \bar R_{n} + \eta \alpha_{n} \delta_{n}/L_n(\hat S_n, \hat O_n), \label{eq: Inter-option Differential TD-learning R bar}
\end{align}
where $\alpha_n$ is a step-size sequence, $L_n(\cdot,\cdot)$ comes from an additional vector of estimates $L: \calS \times \calO \to \bbR$ that approximates the expected lengths of state-option pairs, updated from experience by:
\begin{align}
    L_{n+1}(\hat S_n, \hat O_n) \doteq L_{n}(\hat S_n, \hat O_n) + \beta_n (\hat L_n - L_{n}(\hat S_n, \hat O_n)), \label{eq: Inter-option Differential TD-learning L}
\end{align}
where $\beta_n$ is an another step-size sequence. 
The TD-error $\delta_n$ in \eqref{eq: Inter-option Differential TD-learning Q} and \eqref{eq: Inter-option Differential TD-learning R bar} is
\begin{align} \label{eq: Inter-option Differential Q-learning TD error}
    \delta_n & \doteq \hat R_n - L_n(\hat S_n, \hat O_n) \bar R_n + \max_o Q_n (\hat S_{n+1}, o) - Q_n(\hat S_{n}, \hat O_n),
\end{align}
which is different from \eqref{eq: Inter-option Differential Q-learning attempt TD error} with the estimated expected option length $L_n(\hat S_n, \hat O_n)$ being used instead of the sampled option length $\hat L_n$.
(\ref{eq: Inter-option Differential TD-learning Q}–\ref{eq: Inter-option Differential Q-learning TD error}) make up our \textit{inter-option Differential Q-learning} algorithm. 

Similarly, our prediction learning algorithm, called \textit{inter-option Differential Q-evaluation}, 
also has update rules (\ref{eq: Inter-option Differential TD-learning Q}–\ref{eq: Inter-option Differential TD-learning L}) with the TD error:
\begin{align} \label{eq: Inter-option Differential TD-learning TD error}
    \delta_n \doteq \hat R_n - L_n(\hat S_n, \hat O_n) \bar R_n + \sum_o \mu(o \mid \hat S_{n+1}) Q_n (\hat S_{n+1}, o) - Q_n(\hat S_{n}, \hat O_n).
\end{align}

\begin{theorem}[Convergence of inter-option algorithms; informal]\label{thm: Inter-option Differential Methods}
If Assumption~\ref{assu: unichain} holds, step sizes are decreased appropriately, all state-option pairs $(s, o)$ in $ \calS'$ and $\calO$ are visited for an infinite number of times, and the relative visitation frequency between any two pairs is finite: 
\vspace{-2mm}
\begin{enumerate}\itemsep-1mm
    \item inter-option Differential Q-learning (\ref{eq: Inter-option Differential TD-learning Q}–\ref{eq: Inter-option Differential Q-learning TD error}) converges almost surely, $\bar R_n$ to $r_*$ and $Q_n(s, o)$ to a solution of $q(s, o)$ in (\ref{eq: SMDP Bellman optimality equation}) for all $s \in \calS', o \in \calO$, and $r(\mu_n)$ to $r_*$, where $\mu_n$ is a greedy policy w.r.t.\ $Q_n$, 
    \item inter-option Differential Q-evaluation (\ref{eq: Inter-option Differential TD-learning Q}–\ref{eq: Inter-option Differential TD-learning L}, \ref{eq: Inter-option Differential TD-learning TD error}) converges almost surely, $\bar R_n$ to $r(\mu)$ and $Q_n(s, o)$ to a solution of of $q(s, o)$ in (\ref{eq: SMDP Bellman evaluation equation}) for all $s \in \calS', o \in \calO$.
\end{enumerate}
\end{theorem}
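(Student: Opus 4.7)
The plan is to combine a stochastic approximation argument for the length estimates with an SMDP extension of Wan et al.'s (2021) RVI-style convergence proof. I would first verify that the $L$-update is a plain Robbins--Monro estimator for $\ell(s,o) \doteq \bbE[\hat L_n \mid \hat S_n = s, \hat O_n = o]$: under Assumption~\ref{assu: unichain}, the stated visitation conditions, and the finite-expected-length assumption from Section~\ref{sec: background}, $L_n(s,o) \to \ell(s,o)$ almost surely for every $(s,o) \in \calS' \times \calO$. Because every option takes at least one step, $\ell(s,o) \ge 1$, so $L_n$ stays bounded away from zero along trajectories and the $1/L_n$ scaling in the $Q$ and $\bar R$ updates is safely well-defined.

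Next I would analyze the coupled iteration as a two-timescale (or slowly-perturbed single-timescale, depending on the precise step-size hypothesis implied by ``decreased appropriately'') stochastic approximation. Replacing $L_n$ by its limit $\ell$ in the TD error, the conditional expectation of $\delta_n$ given $(\hat S_n, \hat O_n) = (s, o)$ equals $\bar r(s,o) - \ell(s,o)\,\bar R_n + \sum_{s'} \hat p(s' \mid s,o)\, \max_{o'} Q_n(s',o') - Q_n(s, o)$ in the control case, with the analogous $\mu$-weighted expression in the evaluation case. After division by $\ell(s,o)$ these are precisely the residuals of the SMDP operators in \eqref{eq: SMDP Bellman optimality equation} and \eqref{eq: SMDP Bellman evaluation equation} against $Q_n$, with $\bar R_n$ playing the role of the unknown reward rate and $1/\ell(s,o)$ acting as a bounded, state-option-dependent step-size factor. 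I would then invoke an SMDP analog of Wan et al.'s argument: establish boundedness of $(Q_n, \bar R_n)$ via a Borkar--Meyn scaling argument, identify the limiting ODE whose drift is the SMDP Bellman residual together with the $\eta$-projected reward-rate component, and show that its equilibrium set is exactly $\{(q, r_*) : q \text{ solves } \eqref{eq: SMDP Bellman optimality equation}\}$ for control, and the analogous set for evaluation, modulo the all-ones direction. Uniqueness of $\bar R$ at equilibrium is supplied by Schweitzer--Federgruen as already cited in the text. The convergence $r(\mu_n) \to r_*$ then follows from finiteness of the set of deterministic greedy policies, together with the fact that every policy greedy with respect to $q_*$ attains $r_*$.

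The main obstacle will be the SMDP adaptation of this attractor analysis. Wan et al.'s proof leans on non-expansiveness of the MDP Bellman operator in a span seminorm; the SMDP operator with its $\ell(s,o)\,\bar R$ correction and the non-uniform $1/\ell(s,o)$ scaling must be shown to inherit the analogous contractive-in-span structure, with a unique equilibrium modulo the constant direction and matching reward-rate component. The two-timescale coupling that tolerates the transient error $L_n - \ell$ is technical but standard once this operator-level step is established, and the evaluation case (2) should then drop out as a simpler linear specialization of the control case (1) in which the $\max$ is replaced by the $\mu$-weighted average and the attractor analysis avoids any nonsmoothness.
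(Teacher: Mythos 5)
Your overall architecture is close to the paper's: the paper also first disposes of the length estimates by a standard stochastic-approximation argument ($L_n \to l$), treats the residual $L_n - l$ as an asymptotically vanishing perturbation $\epsilon_n$ rather than via a genuine two-timescale decomposition, identifies the $1/l(s,o)$-scaled update with an RVI-Q-style iteration whose expected backup is a convex combination $\frac{1}{l(s,o)}\max_{o'}Q(\cdot,o') + \frac{l(s,o)-1}{l(s,o)}Q(s,o)$ of the identity and the SMDP operator (hence still a max-norm and span non-expansion, which resolves the ``main obstacle'' you flag but do not close), and finishes $r(\mu_n)\to r_*$ via the quantitative bound $\abs{r_*-r(\mu_n)}\le sp(TQ_n-Q_n)$ rather than your finiteness-of-greedy-policies argument (both work once pointwise convergence of $Q_n$ is in hand).

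The genuine gap is in your attractor analysis. You propose to run the coupled $(Q_n,\bar R_n)$ iteration through an ODE argument whose equilibrium set is ``$\{(q,r_*)\}$ \ldots modulo the all-ones direction.'' But standard asynchronous stochastic approximation theorems (Borkar's, which the paper invokes) only guarantee convergence to a compact connected internally chain transitive invariant set of the limiting ODE; when the equilibria form a continuum along $e$, every connected subset of that line qualifies, so this route does not deliver convergence of $Q_n$ to a single solution of \eqref{eq: SMDP Bellman optimality equation} — it permits $Q_n$ to wander indefinitely along the constant direction. The paper closes this by exploiting the exact algebraic coupling of the two updates: since every increment to $\bar R_n$ is $\eta$ times the increment to $\sum_{s,o}Q_n(s,o)$, one has identically $\bar R_n = \eta\sum Q_n - (\eta\sum Q_0 - \bar R_0)$, which eliminates $\bar R_n$ from the iteration and replaces it by the functional $f(Q)=\eta\sum Q$. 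The resulting single ODE has a \emph{unique} globally asymptotically stable equilibrium $q_\infty$, pinned down by $\eta(\sum q_\infty - \sum Q_0) = r_* - \bar R_0$, and only then do the standard SA conclusions yield pointwise convergence of $Q_n$ and hence of $\bar R_n$ to $r_*$. Your proposal is missing this reduction (or an equivalent device for selecting a point on the equilibrium line), and without it the stated conclusion does not follow from the argument as sketched.
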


\begin{wrapfigure}{r}{.28\textwidth}
    \vspace{-5mm}
    \centering
    \includegraphics[width=0.28\textwidth]{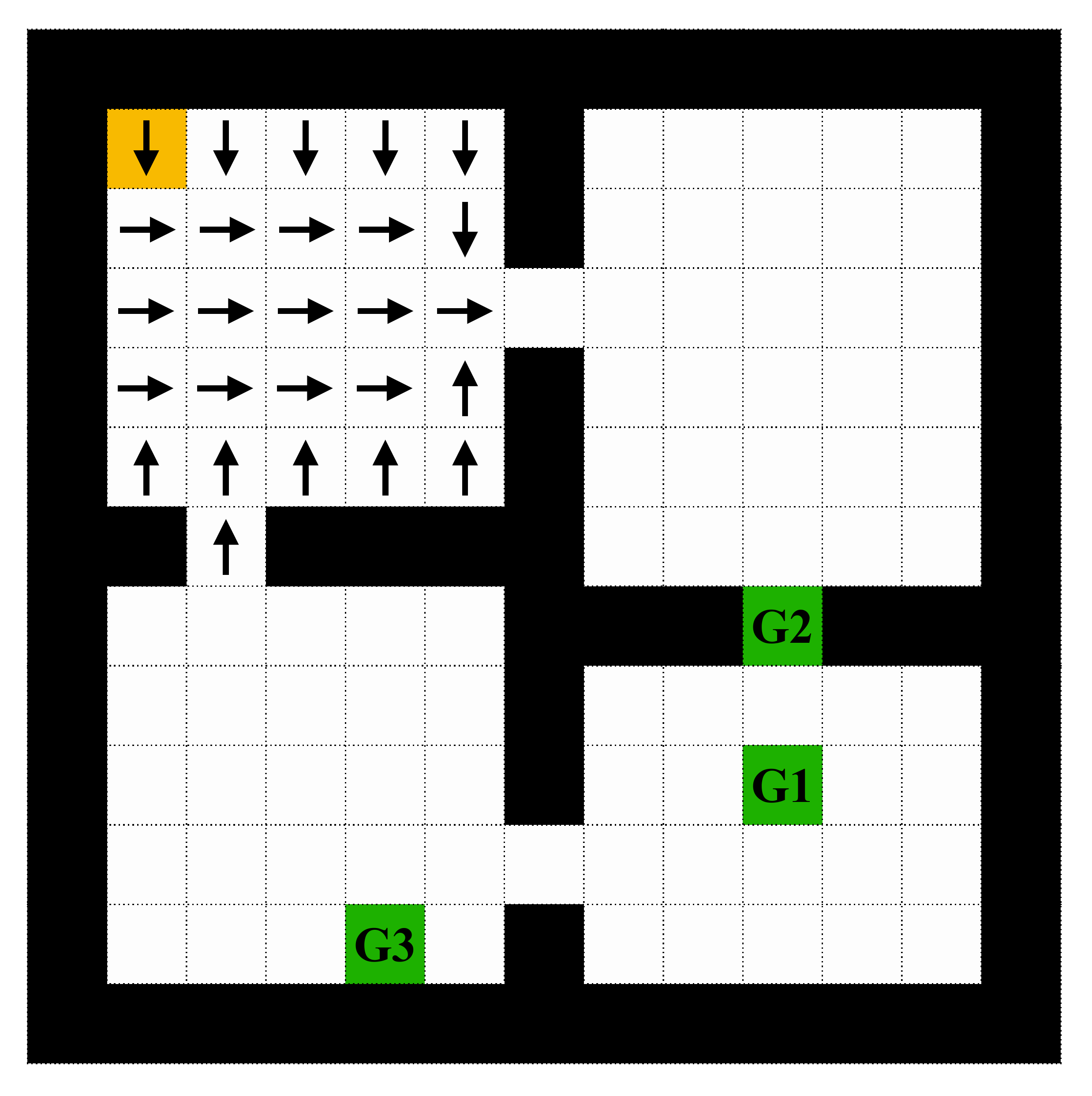}
    \caption{A continuing variant of the Four-Room domain where the task is to repeatedly go from the yellow start state to one of the three green goal states. There is one goal state per experiment, chosen to demonstrate particular aspects of the proposed algorithms. Also shown is an option policy to go to the upper hallway cell; more details in-text.}
    \label{fig: four rooms}
    \vspace{-8mm}
\end{wrapfigure}

The convergence proofs for the inter-option (as well as the subsequent intra-option) algorithms are based on a result that generalizes Wan et al.'s (2021) and Abounadi et al.'s (2001) proof techniques from primitive actions to options. We present this result in \cref{app: general rvi q}; the formal theorem statements and proofs in \cref{app: inter option}. 

\textbf{Remark}: It is important for the scaling factor in the algorithm to be the expected option length $L_n(\hat S_n, \hat O_n)$ and not the sampled option length $\hat{L}_n$. 
Scaling the updates by the expected option lengths ensures that fixed points of the updates are the solutions of \eqref{eq: SMDP Bellman optimality equation}. 
This is not guaranteed to be true when using the sampled option length. 
We discuss this in more detail in \cref{app: additional discussion: Other Attempts on Extending Differential Q-learning}.

The inter-option planning algorithms for prediction and control are similar to the learning algorithms except that they use simulated experience generated by a (given or learned) model instead of real experience.
In addition, they only have two update rules, \eqref{eq: Inter-option Differential TD-learning Q} and \eqref{eq: Inter-option Differential TD-learning R bar}, not \eqref{eq: Inter-option Differential TD-learning L}, because the model provides the expected length of a given option from a given state (see \cref{sec: intra-option model} for a complete specification of option models). 
The planning algorithms and their convergence results are presented in \cref{app: inter option}.


\textbf{Empirical Evaluation.} We tested our inter-option Differential Q-learning with Gosavi's (2004) algorithm as a baseline in a variant of Sutton et al.'s (1999) Four-Room domain (shown in \cref{fig: four rooms}). 
The agent starts in the yellow cell.
The goal states are indicated by green cells. Every experiment in this paper uses only one of the green cells as a goal state; the other two are considered as empty cells.
There are four primitive actions of moving \texttt{up}, \texttt{down}, \texttt{left}, \texttt{right}. 
The agent receives a reward of +1 when it moves into the goal cell, 0 otherwise.

In addition to the four primitive actions, the agent has eight options that take it from a given room to the hallways adjoining the room. 
The arrows in \cref{fig: four rooms} illustrate the policy of one of the eight options. 
For this option, the policy in the empty cells (not marked with arrows) is to uniformly-randomly pick among the four primitive actions. 
The termination probability is 0 for all the cells with arrows and 1 for the empty cells. 
The other seven options are defined in a similar way. 
Denote the set of primitive actions as $\calA$ and the set of hallway options as $\calH$. 
Including the primitive actions, the agent has 
12 options in total. 

In the first experiment, we tested inter-option Differential Q-learning with three different sets of options, $\calO \in \{\calA, \calH, \calA + \calH\}$. 
The task was to reach the green cell \texttt{G1}, which the agent can achieve with a combination of options and primitive actions. 
The shortest path to \texttt{G1} from the starting state takes 16 time steps, hence the best possible reward rate for this task is 1/16 $=$ 0.0625. 
The agent used an $\epsilon$-greedy policy with $\epsilon = 0.1$. 
For each of the two step-sizes $\alpha_n$ and $\beta_n$, we tested five choices: $2^{-x}, x \in \{1,3,5,7,9\}$.
In addition, we tested five choices of $\eta: 10^{-x}, x \in \{0,1,2,3,4\}$.
$Q$ and $\bar R$ were initialized to 0, $L$ to 1.
Each parameter setting was run for 200,000 steps and repeated 30 times.
The left subfigure of \cref{fig: learning curves and sensitivity curves for inter-option learning} shows a typical learning curve for each of the three sets of options, with $\alpha=2^{-3}$, $\beta=2^{-1}$, and $\eta = 10^{-1}$. 
The parameter study for $\calO = \calA + \calH$ w.r.t.\ $\alpha$ and $\eta$, with $\beta=2^{-1}$, is presented in the right subfigure of \cref{fig: learning curves and sensitivity curves for inter-option learning}. The metric is the average reward obtained over the entire training period.
Complete parameter studies for 
all the three sets of options 
are presented in \cref{app: additional empirical results: Inter-option Learning}.

\begin{figure}[b!]
    \begin{subfigure}{.5\textwidth}
    \centering
    \includegraphics[width=0.97\textwidth]{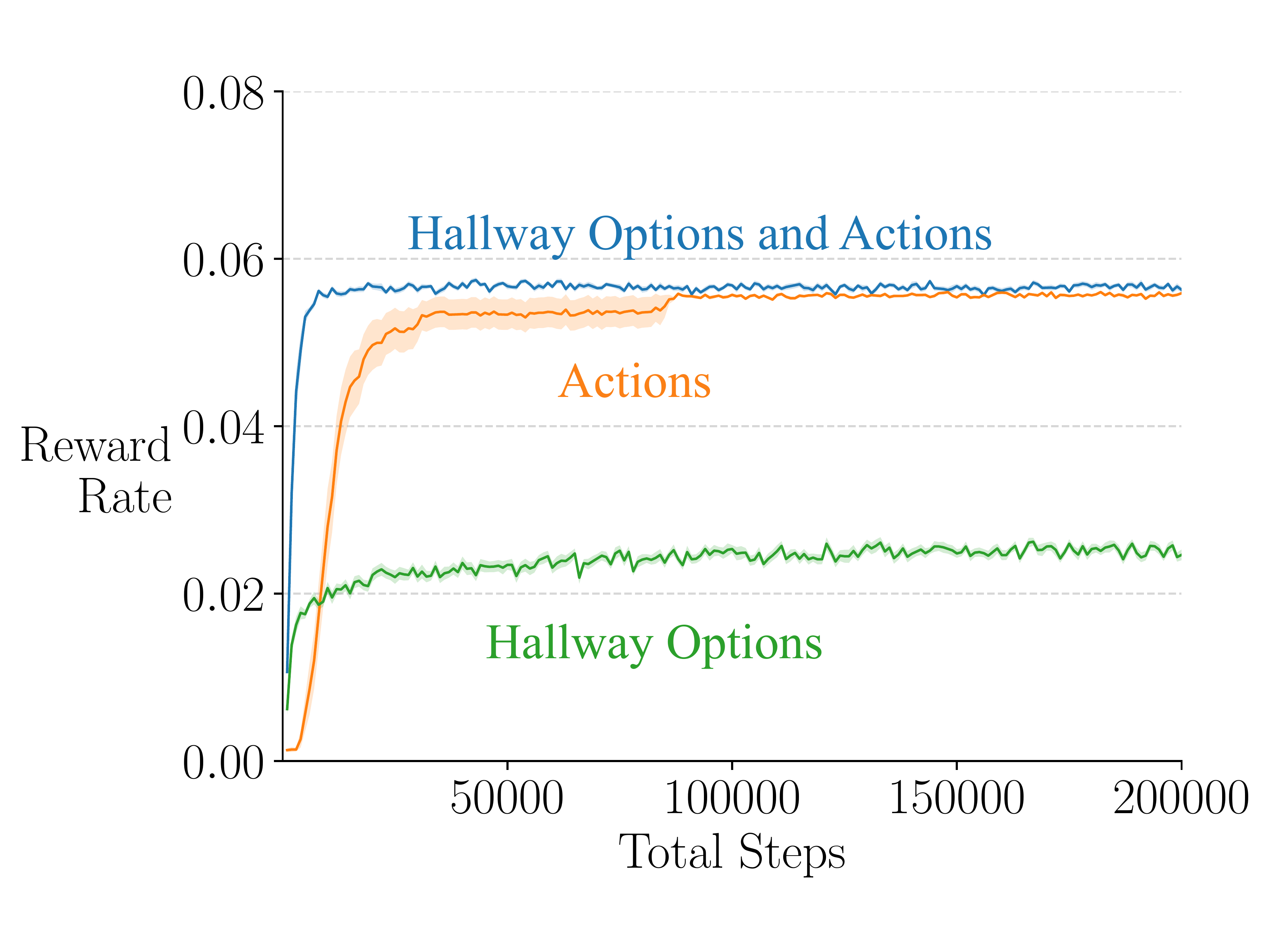}
    \end{subfigure}
        \begin{subfigure}{.5\textwidth}
    \centering
    \includegraphics[width=0.97\textwidth]{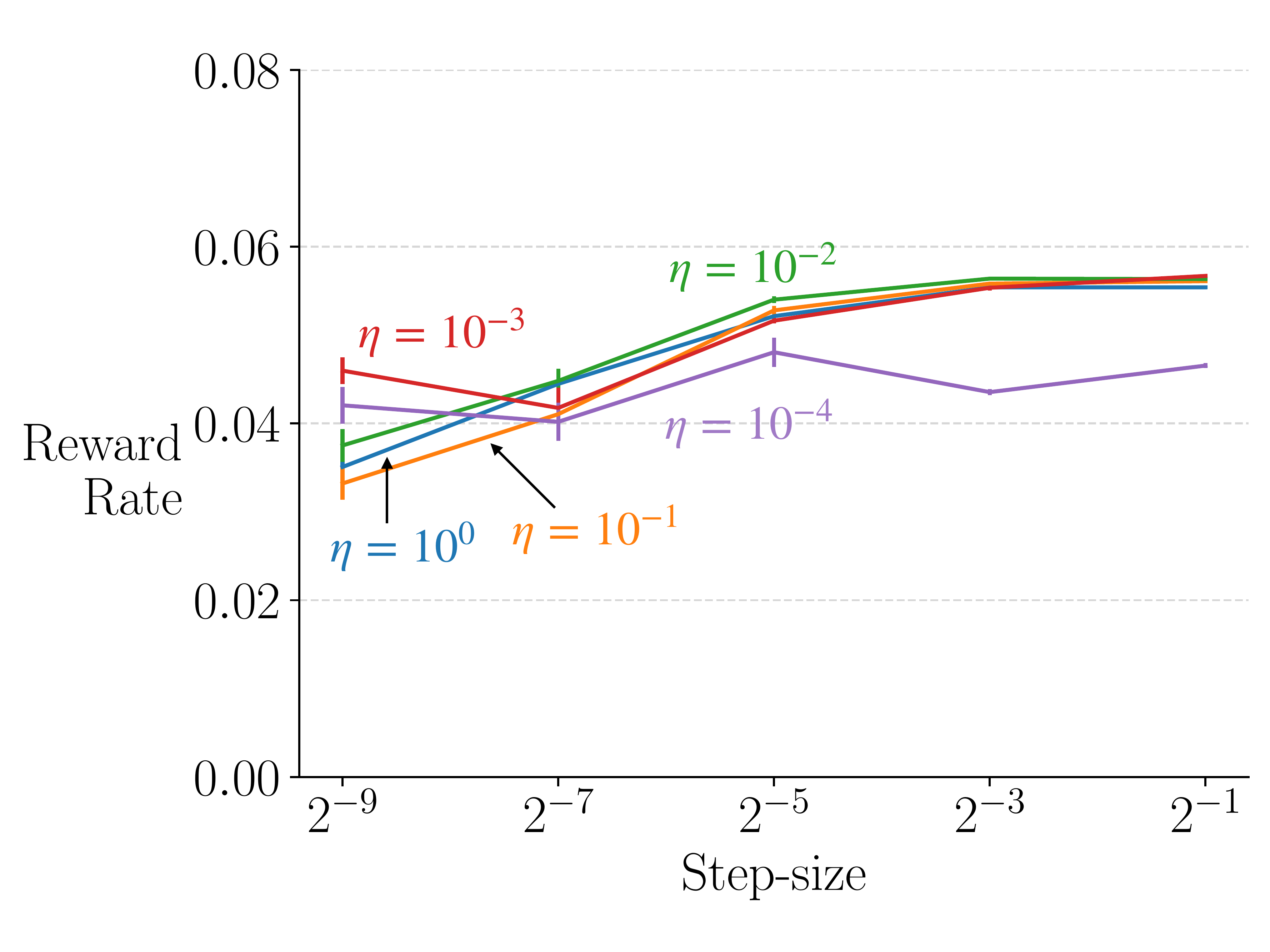}
    \end{subfigure}%
    \caption{Plots showing some learning curves and the parameter study of inter-option Differential Q-learning on the continuing Four-Room domain when the goal was to go to \texttt{G1}. \textit{Left}: 
    A point on the solid line denotes reward rate over the last 1000 time steps and the shaded region indicates one standard error.
    The behavior using the three different sets of options was as expected.
    \textit{Right}: Sensitivity of performance to $\alpha$ and $\eta$ when using $\calO = \calA + \calH$ and $\beta = 2^{-1}$. 
    The x-axis denotes step size $\alpha$; the y-axis denotes the rate of the rewards averaged over all 200,000 steps of training, reflecting the rate of learning. The error bars denote one standard error. The algorithm's rate of learning varied little over a broad range of 
    $\eta$.
    }
    \label{fig: learning curves and sensitivity curves for inter-option learning}
\end{figure}

The learning curves in the left panel of \cref{fig: learning curves and sensitivity curves for inter-option learning} show that the agent achieved a relatively stable reward rate after 100,000 steps in all three cases. 
Using just primitive actions $\cal A$, 
the learning curve rises the slowest, indicating that hallway options indeed help the agent reach the goal faster. But solely using the hallway options $\cal H$ 
is not very useful in the long run as the goal $\texttt{G1}$ is not a hallway state. 
Note that because of the $\epsilon$-greedy behavior policy, the learning curves do not reach the optimal reward rate of $0.0625$.
These observations mirror those by Sutton et al.~(1999) in the discounted formulation.

The sensitivity curves of inter-option Differential Q-learning (right panel of \cref{fig: learning curves and sensitivity curves for inter-option learning}) 
indicate that, in this Four-Room domain, the algorithm was not sensitive to parameter $\eta$, performed well for a wide range of step sizes $\alpha$, and showed low variance across different runs. 
We also found that the algorithm was not sensitive to $\beta$ either; this parameter study is also presented in \cref{app: additional empirical results: Inter-option Learning}. 

\begin{wrapfigure}{r}{.4\textwidth}
    \vspace{4mm}
    \centering
    \includegraphics[width=0.4\textwidth]{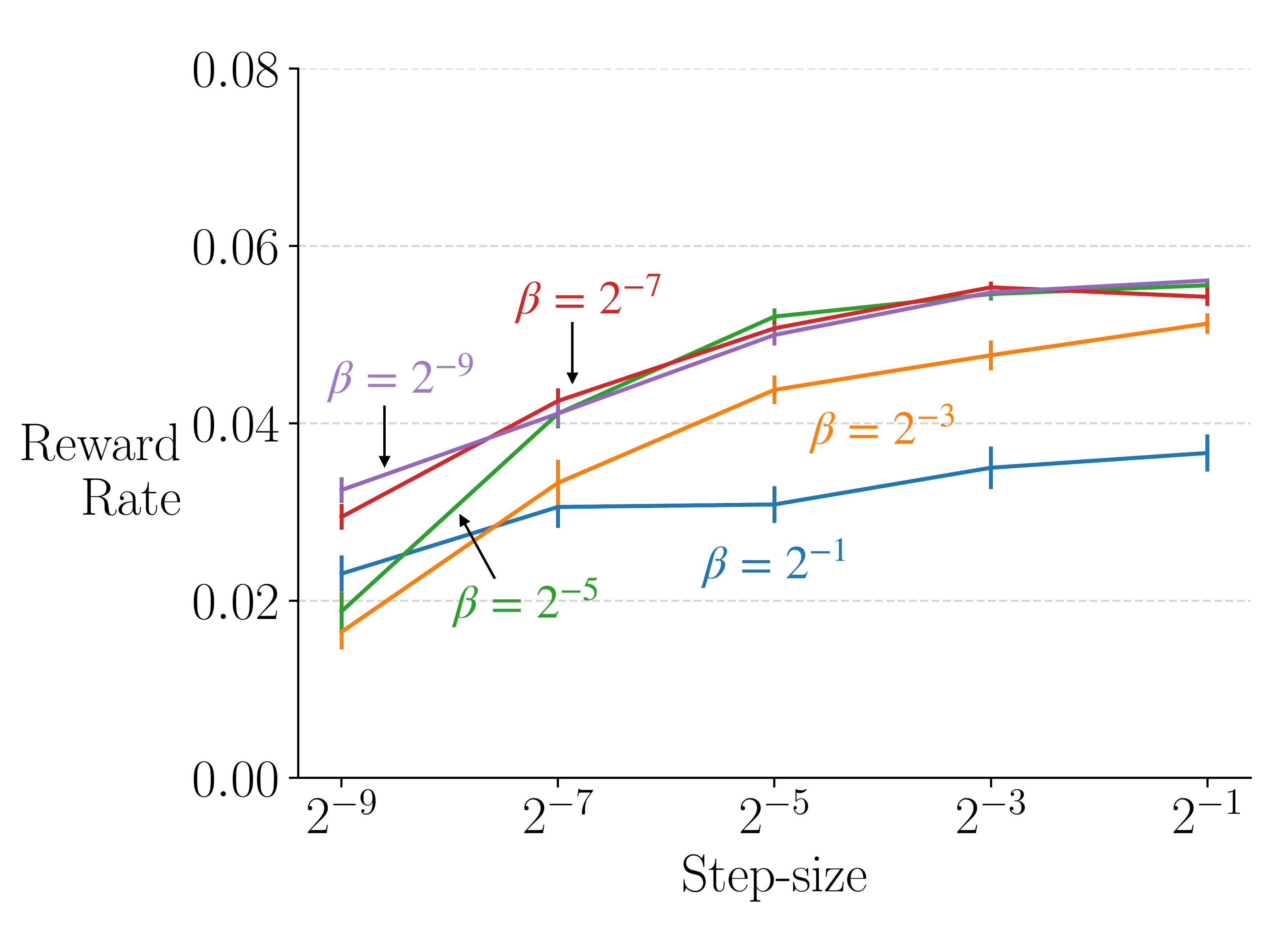}
    \caption{Parameter studies showing  the baseline algorithm's (Gosavi 2004) rate of learning is relatively more sensitive to the choices of its two parameters compared to our inter-option Differential Q-learning. The experimental setting and the plot axes are the same as mentioned in \cref{fig: learning curves and sensitivity curves for inter-option learning}'s caption.
    }
    \label{fig: sensitivity curves for inter-option and gosavi's algorithm}
    \vspace{-8mm}
\end{wrapfigure}

We also tested Gosavi's (2004) algorithm as a baseline. 
We chose not to compare the proposed algorithms in this paper with Sutton et al.'s (1999) discounted versions because the discounted and average-reward problem formulations are different; comparing the performance of their respective solution methods would be inappropriate and difficult to interpret. 
We have proposed new solution methods for the average-reward formulation, hence in this case Gosavi's (2004) algorithm is the most appropriate baseline.
Recall it is the only proven-convergent average-reward SMDP off-policy control learning algorithm prior to our work.
It estimates the reward rate by tracking the cumulative reward $\bar C$ obtained by the options and dividing it by the another estimate $\bar T$ the tracks the length of the options. If the $n\textsuperscript{th}$ option executed is a greedy choice, then these estimates are updated using:
\begin{align*}
    \bar C_{n+1} &\doteq \bar C_n + \beta_n (\hat R_n - \bar C_n),\\
    \bar T_{n+1} &\doteq \bar T_n + \beta_n (\hat L_n - \bar T_n),\\
    \bar R_{n+1} &\doteq \bar C_{n+1} / \bar T_{n+1}.
    \vspace{-1mm}
\end{align*}

When $\hat O_n$ is not greedy, $\bar R_{n+1} = \bar R_n$. 
The option-value function is updated with \eqref{eq: possible extension of Diff Q} with $\delta_n$ as defined in \eqref{eq: Inter-option Differential Q-learning attempt TD error}.
$\alpha_n$ and $\beta_n$ are two step-size sequences. 
The sensitivity of this algorithm with $\calO = \calA + \calH$ is shown in \cref{fig: sensitivity curves for inter-option and gosavi's algorithm}. Compared to inter-option Differential Q-learning, this baseline has one less parameter, but its performance 
was found to be more sensitive to the values of both its step-size parameters. 
In addition, the error bars were generally larger, suggesting that the variance across different runs was also higher. 

To conclude, our experiments with the continuing Four-Room domain show that our 
inter-option Differential Q-learning indeed finds the optimal policy given a set of options, in accordance with \cref{thm: Inter-option Differential Methods}. 
In addition, its performance seems more robust to the choices of parameters compared to the baseline.
Experiments with the prediction algorithm, inter-option Differential Q-evaluation, are presented in \cref{app: additional empirical results: Prediction}.


\section{Intra-Option Value Learning and Planning Algorithms} \label{sec: intra-option value}

In this section, we introduce intra-option value learning and planning algorithms. 
The objectives are same as that of inter-option value learning algorithms.
As mentioned earlier, intra-option algorithms learn from every transition $S_t, A_t, R_{t+1}, S_{t+1}$ during the execution of a given option $O_t$. 
Moreover, intra-option algorithms also make updates for \textit{every} option $o \in \cal O$, including ones that may potentially never be executed.

To develop our algorithms, we first establish the intra-option evaluation and optimality equations in the average-reward case.
The general form of the intra-option Bellman equation is:
\begin{align} 
    q(s, o) &= \sum_{a} \pi(a \,|\, s, o)\sum_{s', r} p(s', r \mid s, a) \Big( r - \bar r + u^q(s', o)
    \Big) \label{eq: Intra-option Equations}
\end{align}
\normalsize
where $q \in \bbR^{\cardS \times |\calO|}$ and $\bar r \in \bbR$ are free variables. 
The optimality and evaluation equations use $u^q = u^q_*$ and $u^q = u^q_\mu$ respectively, defined $\forall\,s'\in\calS, o\in\calO$ as:
\begin{align}
    u^q(s', o) = u^q_*(s', o) &\doteq \big( 1 - \beta(s', o) \big) q(s', o) + \beta(s', o) \max_{o'} q(s', o'), \label{eq: Intra-option Optimality Equations} \\
    u^q(s', o) = u^q_\mu(s', o) &\doteq \big( 1 - \beta(s', o) \big) q(s', o) + \beta(s', o) \sum_{o'} \mu(o' | s') q(s', o'). \label{eq: Intra-option Evaluation Equations}
\end{align}
Intuitively, the $u^q$ term accounts for the two possibilities of an option terminating or continuing in the next state. These equations generalize the average-reward Bellman equations given by Puterman (1994). The following theorem characterizes the solutions to the intra-option Bellman equations. 

\vspace{3mm}
\begin{theorem}[Solutions to intra-option Bellman equations]\label{thm: intra-option bellman equations}
If \cref{assu: unichain} holds, then: 
\vspace{-2mm}
\begin{enumerate}\itemsep-1mm
    \item a) there exists a $\bar r \in \bbR$ and a $q \in \bbR^{\abs{\calS} \times \abs{\calO}}$ for which \eqref{eq: Intra-option Equations} and \eqref{eq: Intra-option Optimality Equations} hold, \\
    b) the solution of $\bar r$ is unique and is equal to $r_*$, let $q_1$ be one solution of $q$, the solutions of $q$ form a set $\{q_1 + c\,e \mid c \in \bbR \}$ where $e$ is an all-one vector of size $\abs{\calS} \times \abs{\calO}$,\\ 
    c) a greedy policy w.r.t. a solution of $q$ achieves the optimal reward rate $r_*$.
    \item a) there exists a $\bar r \in \bbR$ and a $q \in \bbR^{\abs{\calS} \times \abs{\calO}}$ for which \eqref{eq: Intra-option Equations} and \eqref{eq: Intra-option Evaluation Equations} hold, \\
    b) the solution of $\bar r$ is unique and is equal to $r(\mu)$, the solutions of $q$ form a set $\{q_\mu + c\,e \mid c \in \bbR \}$.
\end{enumerate}
\end{theorem}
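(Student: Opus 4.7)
The plan is to reduce both claims to the known structure theorems for the SMDP Bellman equations \eqref{eq: SMDP Bellman evaluation equation} and \eqref{eq: SMDP Bellman optimality equation}, whose solution sets are fully characterized under \cref{assu: unichain} by Schweitzer and Federgruen (1978) and Puterman (1994). The central lemma I would prove is an equivalence: a pair $(q, \bar r) \in \bbR^{\cardS \times \cardO} \times \bbR$ satisfies the intra-option optimality equation (equation \eqref{eq: Intra-option Equations} with $u^q = u^q_*$) if and only if it satisfies \eqref{eq: SMDP Bellman optimality equation}, and likewise the intra-option evaluation equation (equation \eqref{eq: Intra-option Equations} with $u^q = u^q_\mu$) is equivalent to \eqref{eq: SMDP Bellman evaluation equation}. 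Once this equivalence is in hand, parts 1(a,b) and 2(a,b) transfer from the SMDP solution-set characterization, and part 1(c) from the standard SMDP policy-improvement argument.

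For the easier \emph{SMDP $\Rightarrow$ intra-option} direction, I would decompose the option transition kernel by conditioning on the first primitive step: sample $a \sim \pi(\cdot \mid s, o)$, observe $(s_1, r_1) \sim p(\cdot, \cdot \mid s, a)$, and then with probability $\beta(s_1, o)$ the option terminates (contributing $l=1, r = r_1$) while otherwise the remainder is distributed as $\hat p(\cdot, \cdot, \cdot \mid s_1, o)$ with $r_1$ prepended and length shifted by $1$. Substituting this decomposition into \eqref{eq: SMDP Bellman optimality equation} at $(s,o)$ and then applying \eqref{eq: SMDP Bellman optimality equation} at $(s_1, o)$ to collapse the continuation branch back to $q(s_1, o)$ produces \eqref{eq: Intra-option Equations} with $u^q = u^q_*$ after straightforward algebra.

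For the harder \emph{intra-option $\Rightarrow$ SMDP} direction, I would iterate the intra-option equation, substituting it back into its own continuation branch (the factor weighted by $1 - \beta(\cdot, o)$). A routine induction shows that after $k$ substitutions,
\begin{align*}
q(s,o) = \bbE\!\left[\sum_{i=1}^{\min(L,k)}(R_{t+i} - \bar r) + \mathbf{1}\{L \le k\}\max_{o'} q(S_{t+L}, o') + \mathbf{1}\{L > k\}\, q(S_{t+k}, o)\,\Big|\, S_t = s,\, O_t = o\right],
\end{align*}
where $L$ denotes the (random) length of option $o$ started in $s$. Letting $k \to \infty$, boundedness on the finite state--option--reward space combined with the standing assumption that $\bbE[L \mid S_t = s, O_t = o] < \infty$ justifies dominated convergence, killing the tail indicator term and yielding \eqref{eq: SMDP Bellman optimality equation}. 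The evaluation case is identical with $\max_{o'}$ replaced by $\sum_{o'} \mu(o' \mid \cdot)$.

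With the equivalence established, 1(b) and 2(b) follow because the SMDP equations have a unique $\bar r$ (equal to $r_*$ or $r(\mu)$) and a $q$ solution set of the form $\{q_1 + c\,e : c \in \bbR\}$ on recurrent state--option pairs; the constant-offset structure extends to all of $\calS \times \calO$ since transient states do not affect the reward rate. Existence (1a, 2a) follows by exhibiting any one solution of the SMDP equations, and 1(c) follows because a greedy policy with respect to any solution of \eqref{eq: SMDP Bellman optimality equation} attains $r_*$ by the classical SMDP policy-improvement result. The main obstacle is the dominated-convergence step in the iteration, but it is routine given finite $\bbE[L]$ and bounded rewards; the rest is bookkeeping.
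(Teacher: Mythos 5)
Your proof is correct, but it takes a genuinely different route from the paper's. You establish a full two-way equivalence between the intra-option equations and the SMDP equations \eqref{eq: SMDP Bellman evaluation equation}--\eqref{eq: SMDP Bellman optimality equation} (forward direction by conditioning $\hat p$ on the first primitive transition and re-applying the SMDP equation at the intermediate state; backward direction by unrolling the continuation branch and killing the tail with $\bbE[L]<\infty$ and dominated convergence), and then import the entire solution structure from Schweitzer--Federgruen/Puterman. The paper instead treats the two cases asymmetrically: for evaluation it never touches the SMDP equation, but rewrites \eqref{eq: Intra-option Equations}+\eqref{eq: Intra-option Evaluation Equations} as the Poisson equation $0 = r - \bar r e + (P_\mu - I)q$ of an induced Markov reward process on state--option pairs, checks that this chain is unichain, and applies Puterman's Theorem 8.2.6/Corollary 8.2.7 directly; for optimality it proves existence by a vanishing-discount argument (a single stationary policy that is discount optimal along $\gamma_n \uparrow 1$, the discounted intra-option equations of Sutton et al.\ 1999, and the Laurent expansion $q_\mu^{\gamma} = (1-\gamma)^{-1}r(\mu)e + q_\mu + f(\gamma)$), proves uniqueness of $\bar r$ by a direct sub/super-harmonicity argument with Ces\`aro averages of $P_\mu^n$, and uses only your intra$\Rightarrow$SMDP unrolling (stated in one sentence) for the uniqueness of $q$ up to a constant. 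Your approach is more uniform and makes the inter/intra equivalence explicit, which is of independent interest and immediately yields 1(c) from the classical SMDP greedy-policy result; the paper's route is more self-contained for the $\bar r$-uniqueness step and its vanishing-discount construction exhibits a concrete solution ($q_\mu$ for a Blackwell-optimal $\mu$) rather than an abstract one. Two small points to tighten: the constant-offset structure on all of $\calS\times\calO$ should be attributed to the Schweitzer--Federgruen unichain characterization of the SMDP equations (which the paper already invokes in Section 2) rather than to the remark that ``transient states do not affect the reward rate,'' which is not the operative fact; and in the forward direction you should note explicitly that the decomposition of $\hat p$ relies on the Markov property of options (policy and termination depending only on the current state), so that the post-first-step remainder is distributed exactly as a fresh execution from $s_1$.
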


The proof extends those of Corollary 8.2.7, Theorem 8.4.3, Theorem 8.4.4 by Puterman (1994) and is presented in \cref{app: proof of intra-option value equations}.

Our intra-option control and prediction algorithms are stochastic approximation algorithms solving the intra-option optimality and evaluation equations respectively. Both the algorithms maintain a vector of estimates $Q(s, o)$ and a scalar estimate $\bar R$, just like our inter-option algorithms. However, unlike inter-option algorithms, intra-option algorithms need not maintain an estimator for option lengths ($L$) because they make updates after every transition.
Our control algorithm, called \emph{intra-option Differential Q-learning}, updates the estimates $Q$ and $\bar R$ by:
\begin{align}
    Q_{t+1}(S_t, o) &\doteq Q_{t}(S_t, o) + \alpha_t \rho_t(o) \delta_t(o), \quad \forall\ o \in \optionspace, \label{eq: Intra-option Differential TD-learning Q}\\
    \bar R_{t+1} &\doteq \bar R_t + \eta \alpha_t \sum_{o \in \optionspace} \rho_t(o) \delta_t(o), \label{eq: Intra-option Differential TD-learning R bar}
\end{align}
where $\alpha_t$ is a step-size sequence, $\rho_t(o) \doteq \frac{\pi(A_t | S_t, o)}{\pi(A_t | S_t, O_t)}$ is the importance sampling ratio, and:
\begin{align}
\label{eq: Intra-option Differential Q-learning TD error}
    \delta_t(o) \doteq R_{t+1} - \bar R_t + u^{Q_t}_*(S_{t+1}, o) - Q_t(S_t, o).
\end{align}
\normalsize

Our prediction algorithm, called \emph{intra-option Differential Q-evaluation}, also updates $Q$ and $\bar R$ by
\eqref{eq: Intra-option Differential TD-learning Q} and \eqref{eq: Intra-option Differential TD-learning R bar} but with the TD error:
\begin{align}
\label{eq: Intra-option Differential TD-learning TD error}
    \delta_t(o) & \doteq R_{t+1} - \bar R_t + u^{Q_t}_\mu (S_{t+1}, o)
    - Q_t(S_t, o).
\end{align}
\normalsize

\vspace{2mm}
\begin{theorem}[Convergence of intra-option algorithms; informal] \label{thm: Intra-option Differential Methods}
Under the conditions of \cref{thm: Inter-option Differential Methods}: 
\vspace{-2mm}
\begin{enumerate}\itemsep-1mm
    \item intra-option Differential Q-learning algorithm (\ref{eq: Intra-option Differential TD-learning Q}–\ref{eq: Intra-option Differential Q-learning TD error}) converges almost surely, $\bar R_t$ to $r_*$, $Q_t(s, o)$ to a solution of $q(s, o)$ in \eqref{eq: Intra-option Equations} and \eqref{eq: Intra-option Optimality Equations} for all $s \in \calS', o \in \calO$, and $r(\mu_t)$ to $r_*$, where $\mu_t$ is a greedy policy w.r.t.~$Q_t$, 
    \item intra-option Differential Q-evaluation algorithm (\ref{eq: Intra-option Differential TD-learning Q},\ref{eq: Intra-option Differential TD-learning R bar},\ref{eq: Intra-option Differential TD-learning TD error}) converges almost surely, $\bar R_t$ to $r(\mu)$, $Q_t(s, o)$ to a solution of $q(s, o)$ in \eqref{eq: Intra-option Equations} and \eqref{eq: Intra-option Evaluation Equations} for all $s \in \calS', o \in \calO$.
\end{enumerate}
\end{theorem}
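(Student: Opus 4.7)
The plan is to reduce both statements to the general stochastic approximation result alluded to in \cref{app: general rvi q}, which lifts the Wan et al.~(2021) / Abounadi et al.~(2001) RVI Q-learning technique from primitive actions to options. First I would rewrite the combined update for $(Q_t, \bar R_t)$ in the canonical form $X_{t+1} = X_t + \alpha_t(H(X_t) + M_{t+1})$, where $H$ is the expected update direction and $M_{t+1}$ is a martingale-difference noise. The critical computation is to show that, for every option $o \in \calO$,
\begin{align*}
    \bbE\!\left[\rho_t(o)\,\delta_t(o) \,\middle|\, S_t, O_t, Q_t, \bar R_t\right]
    = \sum_a \pi(a \mid S_t, o)\sum_{s',r} p(s',r \mid S_t, a)\bigl(r - \bar R_t + u^{Q_t}(s', o)\bigr) - Q_t(S_t, o),
\end{align*}
where $u^{Q_t}$ is $u^{Q_t}_*$ in the control case and $u^{Q_t}_\mu$ in the evaluation case. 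This follows from the identity $\bbE_{A_t \sim \pi(\cdot \mid S_t, O_t)}[\rho_t(o)\,f(A_t, S_{t+1})] = \bbE_{A_t \sim \pi(\cdot \mid S_t, o)}[f(A_t, S_{t+1})]$, i.e.\ the importance sampling ratio redirects the sampling distribution from the executed option $O_t$ to the target option $o$, thereby enabling simultaneous off-policy updates for every option from a single transition.

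Next I would verify the hypotheses of the general convergence result: (i) the step-size conditions $\sum_t \alpha_t = \infty$, $\sum_t \alpha_t^2 < \infty$; (ii) the noise term has conditionally bounded second moments, which holds because $\calS$, $\calA$, $\calO$ are finite and so $\rho_t(o)$ is bounded (we can assume without loss of generality that $\pi(\cdot \mid s, O_t)$ has full support or treat zero-probability actions as never sampled); (iii) every $(s, o) \in \calS' \times \calO$ is updated infinitely often with bounded relative frequencies, which is immediate here because intra-option updates are made for \emph{every} $o$ at every step $t$ with $S_t = s$, so the update frequency at $(s,o)$ is essentially the visit frequency to $s$, and this is covered by the assumption inherited from \cref{thm: Inter-option Differential Methods}. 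With these in place, the limiting ODE is $\dot Q = T Q - Q - \bar r\,e$ and $\dot{\bar r} = \eta\,\mathbf{1}^\top(T Q - Q - \bar r\, e)$, where $T$ is the intra-option Bellman operator, and \cref{thm: intra-option bellman equations} characterizes its fixed points.

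Finally, convergence of $(Q_t, \bar R_t)$ to a fixed point yields the claim: by \cref{thm: intra-option bellman equations}, $\bar R_t$ converges to $r_*$ (resp.\ $r(\mu)$), and $Q_t(s, o)$ converges to a solution of the intra-option optimality (resp.\ evaluation) equation for all $s \in \calS'$, $o \in \calO$, up to a constant offset that is pinned down by the $\bar R_t$ update as in Wan et al.~(2021). The reward-rate statement $r(\mu_t) \to r_*$ for greedy $\mu_t$ then follows from \cref{thm: intra-option bellman equations}.1c together with a continuity/eventual-optimality argument: once $Q_t$ is close enough to a solution on $\calS'$, the greedy action at each recurrent state matches a $Q$-optimal action, and the reward rate of $\mu_t$ equals $r_*$. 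The main obstacle is establishing the ODE's asymptotic stability up to the one-dimensional kernel spanned by $e$: the intra-option operator $T$ mixes a ``continue'' term $(1-\beta)\,q$ with a ``switch'' term $\beta\,\max_{o'} q$, so the nonexpansiveness argument used by Abounadi et al.\ must be extended to this convex combination, and the $\bar R_t$ coupling via $\eta$ requires showing that the joint flow contracts transversally to the invariant direction. This is exactly the gap that the general result in \cref{app: general rvi q} is designed to bridge, and once it is invoked the remaining verifications are routine.
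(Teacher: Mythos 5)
Your proposal is correct in substance and follows the same overall strategy as the paper: use the importance-sampling identity to compute the expected update for every option from a single transition, reduce the algorithm to the general convergence result of \cref{app: general rvi q}, verify its step-size, noise, and visitation hypotheses, and invoke \cref{thm: intra-option bellman equations} to characterize the limit. The one place where your route genuinely diverges is the treatment of the $\bar R_t$ recursion. You frame the limit dynamics as a joint ODE in $(Q,\bar r)$ whose flow must be shown to ``contract transversally'' to the direction spanned by $e$. The paper never analyzes a joint flow: because each increment to $\bar R_t$ is \emph{exactly} $\eta$ times the corresponding increment to $\sum_{s,o}Q_t(s,o)$, the identity $\bar R_t = \eta\sum Q_t - (\eta\sum Q_0 - \bar R_0)$ holds deterministically for every $t$, so $\bar R_t$ is eliminated outright and the algorithm collapses to a single recursion on $Q_t$ of General RVI Q form with reference function $f(Q)=\eta\sum Q$. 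The stability question you identify as the main obstacle is then resolved inside \cref{thm: General RVI Q} by decomposing the solution of the one ODE as $x_t = y_t + z_t e$, where $y_t$ follows a span-nonexpansive auxiliary flow and $z_t$ a scalar linear ODE, rather than by a transversal contraction of a two-component system; since you defer to the general result anyway, this is a difference of description rather than a gap, but if you tried to carry out the joint-flow argument literally it would be harder than necessary. Two smaller remarks: the paper obtains $r(\mu_t)\to r_*$ from the quantitative bound $\abs{r_*-r(\mu_t)}\le sp(TQ_t-Q_t)$ in \cref{lemma: bound of mu Q} together with the continuous mapping theorem, which sidesteps your eventual-greedy-matching argument (that argument also works but needs care with ties in $q_\infty$); and the boundedness of $\rho_t(o)$ follows simply because the sampled action satisfies $\pi(A_t\mid S_t,O_t)>0$ and the spaces are finite, which is how the paper's formal version (with $\epsilon_n=0$) bounds the martingale variance.
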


\begin{wrapfigure}{r}{0.4\textwidth}
    \vspace{-5mm}
    \centering
    \includegraphics[width=0.4\textwidth]{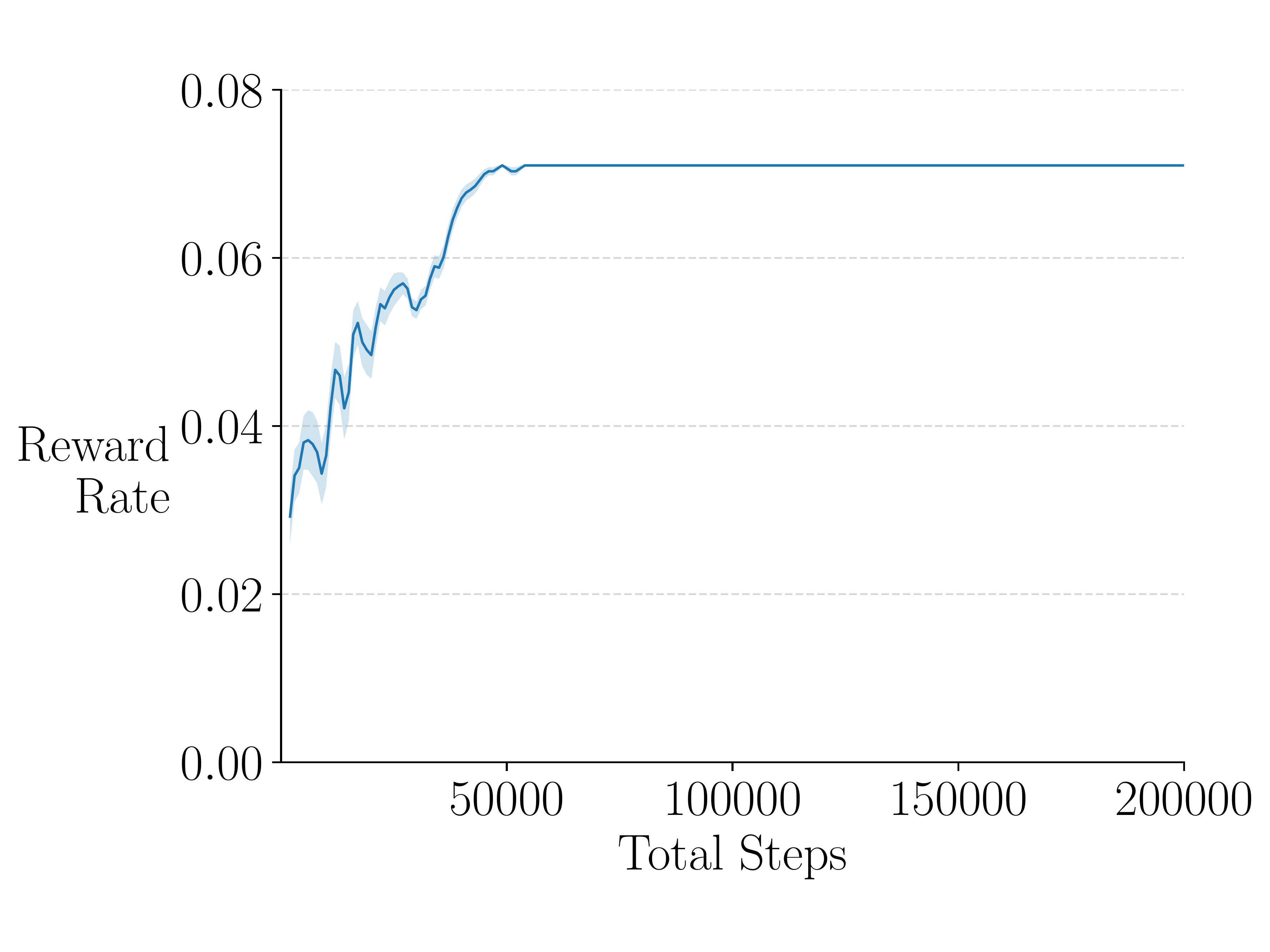}
    \caption{Learning curve showing that the greedy policy corresponding to the hallway options' option-value function achieves the optimal reward rate on the continuing Four-Room domain. The value function was learned via intra-option Differential Q-learning using a behavior policy consisting only of primitive actions; the hallway options were never executed.
    }
    \label{fig: learning curve for intra-option learning with random actions}
    \vspace{-12mm}
\end{wrapfigure}

\textbf{Remark:} The intra-option learning methods introduced in this section can be used with options having stochastic policies. This is possible with the use of the important sampling ratios as described above. Sutton et al.'s (1999) discounted intra-option learning methods were restricted to options having deterministic policies. 

Again, the intra-option value planning algorithms are similar to the learning algorithms except that they use simulated experience generated by a given or learned model instead of real experience. 
The planning algorithms and their convergence results are presented in \cref{app: intra-option value}.

\textbf{Empirical Evaluation}. 
We conducted another experiment in the Four-Room domain to show that intra-option Differential Q-learning can learn the values of hallway options $\calH$ using only primitive actions $\calA$. As mentioned earlier, there are no baseline intra-option average-reward algorithms, so this is a proof-of-concept experiment.

The goal state for this experiment was \texttt{G2}, which can be reached using the option that leads to the lower hallway.
The optimal reward rate in this case is $1/14 \approx 0.714$ with both $\calO = \calH$ and $\calO = \calA$. 
We applied intra-option Differential Q-learning 
using a behavior policy that chose the four primitive actions with equal probability in all states. 
This choice of behavior policy and goal \texttt{G2} would test if the intra-option algorithm leads to a policy consisting exclusively of options by interacting with the environment using only primitive actions.
Each parameter setting was run for 200,000 steps and repeated 30 times.
For evaluation, we saved the learned option value function after every 1000 steps and computed the average reward of the corresponding greedy policy over 1000 steps. 

\cref{fig: learning curve for intra-option learning with random actions} shows the learning curve of this average reward across the 30 independent runs for parameters $\alpha=0.125, \eta=0.1$. 
The agent indeed succeeds in learning the option-value function corresponding to the hallway options using a behavior policy consisting only of primitive actions. 
The parameter study of intra-option Differential Q-learning is presented in \cref{app: additional empirical results: Intra-option Learning}. 
Experiments with the prediction algorithm, intra-option Differential Q-evaluation, are presented in \cref{app: additional empirical results: Prediction}. 


\section{Intra-Option Model Learning and Planning Algorithms}\label{sec: intra-option model}

In this section, we first describe option models within the average-reward formulation.
We then introduce
an algorithm to learn such models in an intra-option fashion. This option-model learning algorithm can be combined with the planning algorithms from the previous section to obtain a complete model-based average-reward options algorithm that learns option models and plans with them (we present this combined algorithm in \cref{app: additional discussion: pseudocodes}).

The average-reward option model is similar to the discounted options model but with key distinctions.
Sutton et al.'s (1999) discounted option model has two parts: the dynamics part and the reward part.
Given a state and an option, the dynamics part predicts the discounted occupancy of states upon termination, and the reward part predicts the expected (discounted) sum of rewards during the execution of the option.
In the average-reward setting, apart from the dynamics and the reward parts, an option model has a third part — the \textit{duration} part — that predicts the duration of execution of the option. In addition, the dynamics part predicts the state distribution upon termination without discounting and reward part predicts the undiscounted cumulative rewards during the execution of the option.

Formally, the dynamics part approximates $\dynamicsmodeltarget(s' | s, o) \doteq \sum_{r, l} \hat p(s', r, l\,| s, o)$, the probability that option $o$ terminates in state $s'$ when starting from state $s$. 
The reward part approximates $\rewardmodeltarget(s, o) \doteq \sum_{\new{s'}, r, l} \hat p(s', r, l\,| s, o)\, r$, the expected cumulative reward during the execution of option $o$ when starting from state $s$. 
Finally, the duration part approximates $\durationmodeltarget (s, o) \doteq \sum_{ s',r,\new{l}} \hat p(s', r, l\,| s, o)\, l$, the expected duration of option $o$ when starting from state $s$. 

We now present a set of recursive equations that are key to our model-learning algorithms. These equations extend the discounted Bellman equations for option models (Sutton et al.~1999) to the average-reward formulation.
\fontsize{9.5}{6}
\begin{align}
    \bar m^p(x \mid s, o) &= \sum_{a} \pi(a\, |\, s, o)\sum_{\new{s',} r} p(s', r \mid s, a) \Big( \beta(s', o) \mathbb{I}(x = s') + \big(1 - \beta(s', o)\big) \bar m^p(x \mid s', o) \Big), \label{eq: Intra-option Dynamics Model Equation}\\
    \bar m^r(s, o) &= \sum_{a} \pi(a\, |\, s, o)\sum_{s', r} p(s', r \mid s, a) \big( r + (1 - \beta(s', o)) \bar m^r(s', o)\big) \label{eq: Intra-option Reward Model Equation}, \\
    \bar m^l(s, o) &= \sum_{a} \pi(a\, |\, s, o)\sum_{s', r} p(s', r \mid s, a) \big( 1 + (1 - \beta(s', o)) \bar  m^l (s', o)\big). \label{eq: Intra-option Duration Model Equation}
\end{align}
\normalsize
The first equation are different from the other two because the total reward and length of the option $o$ are incremented irrespective of whether the option terminates in $s'$ or not.
The following theorem shows that $(m^p, m^r, m^l)$ is the unique solution of (\ref{eq: Intra-option Dynamics Model Equation}–\ref{eq: Intra-option Duration Model Equation}) and therefore the models can be obtained by solving these equations (see \cref{app: Proof of thm: model equations} for the proof). 

\vspace{2mm}
\begin{theorem}[Solutions to Bellman equations for option models] \label{thm: model equations}
There exist unique $\bar m^p \in \bbR^{\cardS \times |\calO| \times \cardS}$, $\bar m^r \in \bbR^{\cardS \times |\calO|}$, and $\bar m^l \in \bbR^{\cardS \times |\calO|}$ for which \eqref{eq: Intra-option Dynamics Model Equation}, \eqref{eq: Intra-option Reward Model Equation}, and \eqref{eq: Intra-option Duration Model Equation} hold. Further, if $\bar m^p, \bar m^r, \bar m^l$ satisfy \eqref{eq: Intra-option Dynamics Model Equation}, \eqref{eq: Intra-option Reward Model Equation}, and \eqref{eq: Intra-option Duration Model Equation}, then $\bar m^p = \dynamicsmodeltarget, \bar m^r = \rewardmodeltarget, \bar m^l = \durationmodeltarget$.
\end{theorem}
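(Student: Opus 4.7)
The plan is to view each of (\ref{eq: Intra-option Dynamics Model Equation})--(\ref{eq: Intra-option Duration Model Equation}) as a linear fixed-point equation of the form $x = b + P^o x$ on $\bbR^{\cardS}$, where, for each fixed option $o$, the substochastic ``no-termination'' matrix is
\begin{align*}
    P^o(s, s') \doteq \sum_a \pi(a \mid s, o) \sum_r p(s', r \mid s, a)\bigl(1 - \beta(s', o)\bigr).
\end{align*}
After fixing $x$ in the dynamics equation and rewriting all three equations in vector form over $s$, each takes the shape $x = b + P^o x$ with the appropriate $b$ vector (in particular $b = \mathbf{1}$ for the duration equation). Existence and uniqueness for all three equations then reduce to invertibility of $I - P^o$.

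For existence, I would verify directly that the targets $\dynamicsmodeltarget, \rewardmodeltarget, \durationmodeltarget$ satisfy their respective equations by one-step conditioning. Taking the dynamics model as the representative case: conditioning on the first action $a \sim \pi(\cdot \mid s, o)$ and the resulting next state $s'$, with probability $\beta(s', o)$ the option terminates at $s'$ (contributing $\mathbb{I}(x = s')$ to the termination distribution) and with probability $1 - \beta(s', o)$ it continues, in which case the Markov property identifies the remaining termination distribution as $\dynamicsmodeltarget(x \mid s', o)$. This matches (\ref{eq: Intra-option Dynamics Model Equation}); (\ref{eq: Intra-option Reward Model Equation}) and (\ref{eq: Intra-option Duration Model Equation}) follow analogously, with the immediate contributions $r$ and $1$ accruing at every step rather than only on termination.

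For uniqueness it suffices to show $\rho(P^o) < 1$. The key observation is that $[(P^o)^k \mathbf{1}](s)$ equals the probability that option $o$, started in $s$, has not yet terminated after $k$ primitive steps, so $\sum_{k \geq 0} (P^o)^k \mathbf{1} = \durationmodeltarget(\cdot, o)$ as a vector identity. The standing assumption in \cref{sec: background} that every option has finite expected execution time from every state then makes this Neumann-type series converge entrywise on the finite state space $\calS$. For a finite nonneg matrix, entrywise convergence of $\sum_k (P^o)^k \mathbf{1}$ forces $\|(P^o)^k\|_\infty = \max_s [(P^o)^k \mathbf{1}](s) \to 0$, and hence $\rho(P^o) = \lim_k \|(P^o)^k\|_\infty^{1/k} < 1$. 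Therefore $I - P^o$ is invertible, each of the three equations admits a unique solution, and combining with existence identifies that solution as the corresponding target model.

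The main obstacle I expect is making the spectral-radius step rigorous using only the finite-mean-duration assumption rather than a uniform geometric termination bound. Row sums of $P^o$ need not be bounded away from $1$ uniformly in $s$, so a sup-norm contraction argument on $P^o$ itself is not directly available. The route above sidesteps this by exploiting the nonneg-matrix fact that entrywise summability of $\sum_k (P^o)^k \mathbf{1}$ is equivalent, in finite dimensions, to $\rho(P^o) < 1$; this is the one place where the proof uses more than elementary conditioning.
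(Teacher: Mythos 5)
Your proposal is correct, and it reaches the conclusion by a genuinely different route than the paper. The paper works with the operator $T$ on $\bbR^{\cardS \times \cardS \times \cardO}$ and invokes a generalized ($n$-stage) Banach fixed-point theorem: it expands $T^{\cardS}m_1 - T^{\cardS}m_2$ as a sum over length-$\cardS$ trajectories weighted by $\prod_{i}\bigl(1-\beta(s_i,o)\bigr)$, and rules out $\tilde p(s,o)=1$ (the probability of surviving $\cardS$ steps) by a state-counting contradiction --- if an option survived $\cardS$ steps almost surely it would never terminate, violating the standing finite-expected-duration assumption --- so that the maximum survival probability over the finite set of pairs gives a $\cardS$-stage sup-norm contraction. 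You instead exploit the affine structure $x = b + P^o x$ directly and prove $\rho(P^o)<1$ by identifying $\sum_{k\ge 0} (P^o)^k\mathbf{1}$ with $\durationmodeltarget(\cdot,o)$, so the same finite-expected-duration assumption yields entrywise summability, hence $\|(P^o)^{k_0}\|_\infty<1$ for some $k_0$ and $\rho(P^o)\le\|(P^o)^{k_0}\|_\infty^{1/k_0}<1$, making $I-P^o$ invertible. The two arguments meet at the same key fact (some power of the continuation operator is a sup-norm contraction), but yours uses the finite-mean assumption constructively rather than through the paper's reachability contradiction, treats all three equations uniformly as $(I-P^o)x=b$, and avoids the somewhat delicate claim that surviving $\cardS$ steps almost surely implies never terminating. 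What the paper's version buys in exchange is an explicit stage count $m=\cardS$ for the contraction, which it reuses verbatim when verifying the $m$-stage pseudo-contraction condition in the convergence proof of the model-learning algorithm in \cref{app: thm: intra-option model learning}; your argument only guarantees that \emph{some} power contracts, so a constant would need to be extracted there. The existence halves coincide: both verify by one-step conditioning (the paper by unrolling the sum over the option length $l$) that $\dynamicsmodeltarget$, $\rewardmodeltarget$, $\durationmodeltarget$ satisfy the equations, which then identifies them as the unique solutions.
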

Our \emph{intra-option model-learning} algorithm solves the above recursive equations using the following TD-like update rules for each option $o$:
\fontsize{9.5}{6}
\begin{align}
    \dynamicsmodel_{t+1}(x \mid S_t, o) &\doteq \dynamicsmodel_{t}(x \mid S_t, o) + \alpha_t \rho_t(o) \Big( \beta(S_{t+1}, o) \mathbb{I}(S_{t+1} = x) \nonumber \\
    &\quad + \big( 1 - \beta(S_{t+1}, o) \big) \dynamicsmodel_t(x \mid S_{t+1}, o) - \dynamicsmodel_t(x \mid S_t, o) \Big), \quad \forall\ x \in \statespace, \label{eq: Intra-option Dynamics Model learning}\\
    \rewardmodel_{t+1}(S_t, o) &\doteq \rewardmodel_{t}(S_t, o) + \alpha_t \rho_t(o) \Big( R_{t+1} + \big( 1 - \beta(S_{t+1}, o) \big) \rewardmodel_t(S_{t+1}, o) - \rewardmodel_{t}(S_t, o) \Big) \label{eq: Intra-option Reward Model Learning}\\
    \durationmodel_{t+1}(S_t, o) &\doteq \durationmodel_{t}(S_t, o) + \alpha_t \rho_t(o) \Big( 1 + \big( 1 - \beta(S_{t+1}, o) \big) \durationmodel_{t}(S_{t+1}, o) - \durationmodel_{t}(S_t, o) \Big) \label{eq: Intra-option Duration Model learning}
\end{align}
\normalsize
where $\dynamicsmodel$ is a $\cardS \times |\calO| \times \cardS$-sized vector of estimates, $\rewardmodel$ and $\durationmodel$ are both $\cardS \times |\calO|$-sized vectors of estimates, and $\alpha_t$ is a sequence of step sizes. Standard stochastic approximation results can be applied to show the algorithm's convergence (see \cref{app: thm: intra-option model learning} for details).

\vspace{2mm}
\begin{theorem}[Convergence of the intra-option model learning algorithm; informal] \label{thm: intra-option model learning}
If the step sizes are set appropriately and all the state-option pairs are updated an infinite number of times, then intra-option model-learning (\ref{eq: Intra-option Dynamics Model learning}–\ref{eq: Intra-option Duration Model learning}) converges almost surely, $\dynamicsmodel_t$ to $\dynamicsmodeltarget$, $\rewardmodel_t$ to $\rewardmodeltarget$, and $\durationmodel_t$ to $\durationmodeltarget$.
\end{theorem}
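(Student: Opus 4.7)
The plan is to apply standard stochastic-approximation (SA) theory for contractive operators separately to each of the three update rules. Each update has the form $X_{t+1}(\cdot, o) = X_t(\cdot, o) + \alpha_t \rho_t(o)\bigl(H_o X_t - X_t + \xi_t\bigr)$, where $H_o$ is the Bellman backup operator on the right-hand side of the corresponding equation (\ref{eq: Intra-option Dynamics Model Equation}--\ref{eq: Intra-option Duration Model Equation}). The importance-sampling ratio $\rho_t(o) = \pi(A_t \mid S_t, o)/\pi(A_t \mid S_t, O_t)$ re-weights the action distribution from the behaviour $\pi(\cdot \mid S_t, O_t)$ to the target $\pi(\cdot \mid S_t, o)$, so the conditional mean of each bracket in (\ref{eq: Intra-option Dynamics Model learning}--\ref{eq: Intra-option Duration Model learning}) reproduces the right-hand side of the associated Bellman equation, and the residual $\xi_t$ is a zero-mean martingale-difference noise. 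This frames each algorithm as asynchronous fixed-point SA.

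The crucial ingredient is showing that each $H_o$ is a strict contraction in an appropriate norm. For fixed option $o$, the linear part of every model operator is multiplication by the substochastic matrix $P_o(s, s') \doteq (1 - \beta(s', o)) \sum_a \pi(a \mid s, o) \sum_r p(s', r \mid s, a)$. The standing assumption that every option has finite expected duration from every state is equivalent to $\sum_{k \geq 0} P_o^k$ converging entrywise, which forces the spectral radius of $P_o$ to lie strictly below one. Taking weights $w_o \doteq (I - P_o)^{-1} e$ (with $e$ the all-ones vector) then gives a positive weight vector under which $P_o$, and hence the affine operator $H_o$, is a strict contraction in the weighted max-norm $\|f\|_{w_o} \doteq \max_s |f(s)|/w_o(s)$. \cref{thm: model equations} already identifies the unique fixed points as $\dynamicsmodeltarget$, $\rewardmodeltarget$, $\durationmodeltarget$.

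I would then invoke a standard asynchronous contractive-SA theorem (e.g., Bertsekas \& Tsitsiklis 1996, Prop.\ 4.4; or Tsitsiklis 1994, Thm.\ 3) to conclude the almost-sure convergences $\dynamicsmodel_t \to \dynamicsmodeltarget$, $\rewardmodel_t \to \rewardmodeltarget$, and $\durationmodel_t \to \durationmodeltarget$. The required hypotheses — Robbins--Monro step sizes, every $(s, o)$ updated infinitely often, conditionally zero-mean square-integrable noise, and almost surely bounded iterates — follow respectively from the theorem's assumptions, the visitation assumption, finiteness of $\calS$ and $\calA$ with bounded rewards (together with a standard coverage assumption on the behaviour policy that keeps $\rho_t$ bounded), and the range $[0,1]$ for $\dynamicsmodel$ plus a routine a priori bound derived from the contraction for $\rewardmodel$ and $\durationmodel$. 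The three recursions decouple, since the updates for $\dynamicsmodel$, $\rewardmodel$, $\durationmodel$ do not feed into one another, so each converges independently by the same argument. The main obstacle is the middle paragraph — exhibiting the weighted max-norm in which $H_o$ contracts; once that is in place, the remainder is a textbook invocation of off-the-shelf SA results, with no need to mirror the coupled reward-rate machinery used in the proofs of \cref{thm: Inter-option Differential Methods} and \cref{thm: Intra-option Differential Methods}.
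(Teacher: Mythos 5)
Your proposal is correct and follows the same overall route as the paper --- cast each of the three recursions as an asynchronous stochastic approximation for a contractive fixed-point equation whose unique solution is supplied by \cref{thm: model equations}, note that the three recursions decouple, and invoke Tsitsiklis's (1994) Theorem 3 --- but the contraction argument at its core is genuinely different. The paper works in the unweighted max-norm and establishes only an $\abs{\calS}$-stage contraction: the probability that an option survives $\abs{\calS}$ consecutive steps without terminating is bounded away from one (else its expected duration would be infinite), and this forces the paper to assert a generalization of Tsitsiklis's theorem from one-stage to $m$-stage contractions, which it states without proof. You instead observe that finite expected durations force the ``continue'' matrix $P_o$ to have spectral radius strictly below one, and that the weights $w_o = (I-P_o)^{-1}e$ --- which are precisely the expected-duration vector $\durationmodeltarget(\cdot,o)$ --- make the affine backup a one-stage contraction in the associated weighted max-norm. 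This fits Tsitsiklis's Assumption 5 verbatim, so the off-the-shelf theorem applies with no modification, and boundedness of the iterates comes for free from that framework; your route is arguably the cleaner of the two. One small point to tighten: the ratio $\rho_t(o)$ depends on $A_t$ and so is not measurable with respect to the past, so it cannot literally be absorbed into the step size as your displayed form $\alpha_t\rho_t(o)(H_oX_t - X_t + \xi_t)$ suggests; it should be folded into the zero-mean noise using $\bbE[\rho_t(o)\mid S_t, O_t] = 1$, which is evidently what you intend when you say the conditional mean of the bracket reproduces the Bellman backup. (The paper's own statement glosses over the same point.)
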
 

Our intra-option model-learning algorithms (\ref{eq: Intra-option Dynamics Model learning}–\ref{eq: Intra-option Duration Model learning}) can be applied with simulated one-step transitions generated by a \textit{given action model}, resulting in a planning algorithm that produces an \textit{estimated option model}.
The planning algorithm and its convergence result are presented in \cref{app: thm: intra-option model learning}.


\section{Interruption to Improve Policy Over Options}
\label{sec: interruption}
In all the algorithms we considered so far, the policy over options would select an option, execute the option policy till termination, then select a new option. 
Sutton et al.~(1999) showed that the policy over options can be improved by allowing the \textit{interruption} of an option midway through its execution to start a seemingly better option. 
We now show that this interruption result applies for average-reward options as well (see \cref{app: interruption} for the proof).

\vspace{2mm}
\begin{theorem}[Interruption] \label{thm: interruption}
For any MDP, any set of options $\calO$, and any policy $\mu: \calS \times \calO \to [0, 1]$, define a new set of options, $\calO'$, with a one-to-one mapping between the two option sets as follows: for every $o = (\pi, \beta) \in \calO$, define a corresponding $o' = (\pi, \beta') \in \calO'$ where $\beta' = \beta$, but for any state $s$ in which $q_\mu(s, o) < v_\mu(s)$, $\beta'(s, o) = 1$ (where $v_\mu(s) \doteq \sum_o \mu(o \mid s) q_\mu(s,o)$).
Let the interrupted policy $\mu'$ be such that for all $s \in \calS$ and for all $o' \in \calO', \mu'(s, o') = \mu(s, o)$, where $o$ is the option in $\calO$ corresponding to $o'$. Then:
\vspace{-2mm}
\begin{enumerate}\itemsep-1mm
    \item the new policy over options $\mu'$ is not worse than the old one $\mu$, i.e., $r(\mu') \geq r(\mu)$,
    \item if there exists a state $s \in \calS$ from which there is a non-zero probability of encountering an interruption upon initiating $\mu'$ in $s$, then $r(\mu') > r(\mu)$.
\end{enumerate}
\end{theorem}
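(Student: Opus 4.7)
My plan is to adapt the policy-improvement argument of Sutton et al.~(1999) to the average-reward setting, replacing their discounted Bellman comparison with a Poisson-equation lifting on the differential value function. The central comparison quantity is the expected differential return of running an interrupted option $o'$ benchmarked against the reference pair $(r(\mu), v_\mu)$: for each $o \in \calO$, define
\begin{align*}
\tilde q_\mu(s, o) \doteq \bbE_{\pi^o}\big[\textstyle\sum_{t=1}^{L'}(R_t - r(\mu)) + v_\mu(S_{L'}) \mid S_0 = s\big],
\end{align*}
where $L'$ is the random termination time of $o'$ under $\beta'$ and $v_\mu(s) \doteq \sum_o \mu(o\mid s)q_\mu(s, o)$. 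One-step conditioning gives the fixed-point equation
\begin{align*}
\tilde q_\mu(s, o) = \sum_a \pi(a\mid s,o)\sum_{s',r} p(s',r\mid s,a)\big[r - r(\mu) + \beta'(s',o)\,v_\mu(s') + (1-\beta'(s',o))\,\tilde q_\mu(s',o)\big],
\end{align*}
while $q_\mu$ satisfies the analogous equation with $\beta$ in place of $\beta'$ (an instance of the intra-option Bellman evaluation equation of Theorem~\ref{thm: intra-option bellman equations}).

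First I would show $\Delta \doteq \tilde q_\mu - q_\mu \geq 0$ pointwise. Subtracting the two fixed-point equations and regrouping,
\begin{align*}
\Delta(s,o) = \sum_a \pi(a\mid s,o)\sum_{s',r} p(s',r\mid s,a)\big[(\beta'(s',o) - \beta(s',o))(v_\mu(s') - q_\mu(s',o)) + (1 - \beta'(s',o))\,\Delta(s',o)\big].
\end{align*}
By construction, $\beta' - \beta$ is supported exactly where $v_\mu > q_\mu(\cdot, o)$, so the inhomogeneous term is nonnegative. Combined with the standing assumption that every option has finite expected execution time, the affine operator on the right is a contraction, so its unique fixed point $\Delta$ is pointwise nonnegative; moreover $\Delta(s, o) > 0$ whenever $o$ initiated at $s$ has positive probability of reaching a trigger state before natural termination.

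Next I would lift to the SMDP level. Weighting $\tilde q_\mu \geq q_\mu$ by $\mu(o\mid s) = \mu'(o'\mid s)$ and identifying the right-hand side as an expectation under $\hat p'$ gives
\begin{align*}
v_\mu(s) \leq \sum_{o'}\mu'(o'\mid s)\sum_{s',r,l}\hat p'(s',r,l\mid s,o')\big[r - r(\mu)\,l + v_\mu(s')\big].
\end{align*}
Under Assumption~\ref{assu: unichain}, the chain of option-initiation pairs $(S_n, O_n)$ under $\mu'$ is unichain, with some stationary distribution $d$ and state-marginal $d_S(s) \doteq \sum_{o'} d(s, o')$. Multiplying the display by $d_S(s)$, summing over $s$, and using the stationarity identity $d_S(s') = \sum_{s,o'} d(s,o')\sum_{r,l}\hat p'(s',r,l\mid s,o')$ to cancel the $v_\mu$ terms leaves $0 \leq \sum_{s,o'} d(s,o')[R'(s,o') - r(\mu)\,L'(s,o')]$, where $R',L'$ are the expected per-option cumulative reward and duration under $\hat p'$. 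Rearranging gives $r(\mu) \leq (\sum d\,R')/(\sum d\,L') = r(\mu')$, establishing part~(1).

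For part~(2), the hypothesis that some starting state yields positive interruption probability implies $\Delta(s_0, o') > 0$ for some $(s_0, o')$ reachable under $\mu'$. The main obstacle I anticipate is propagating this strict positivity into a strict inequality in the $d$-weighted sum, since a transient initiation or trigger state receives zero $d$-mass. I would close the gap by exploiting the unichain structure: since the intra-option policies $\pi^o$ and the trigger set $\{(s,o) : v_\mu(s) > q_\mu(s,o)\}$ are intrinsic to the pair rather than to any trajectory, any trigger hit with positive probability from $s_0$ must also be hit during the intra-execution of some option initiated in the recurrent class of $\mu'$ (otherwise $\mu$ and $\mu'$ would agree on that recurrent class, making the triggering state unreachable from anywhere under $\mu'$, contradicting its reachability from $s_0$). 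This forces $\sum_{s,o'}d(s,o')\Delta(s,o') > 0$ and hence $r(\mu') > r(\mu)$.
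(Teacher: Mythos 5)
Your part (1) is correct and reaches the same pivotal inequality as the paper --- namely $v_\mu(s) \le \sum_{o'}\mu'(o'\mid s)\sum_{s',r,l}\hat p(s',r,l\mid s,o')\big(r - l\,r(\mu) + v_\mu(s')\big)$ --- but by a different route on both sides of that inequality. To establish it, the paper decomposes the option-level expectation by conditioning on whether an interruption is encountered and compares the two branches directly, an argument that is rather informal and implicitly treats a single interruption opportunity; your fixed-point argument on $\Delta = \tilde q_\mu - q_\mu$ at the primitive-step level, with the nonnegative forcing term $(\beta'-\beta)(v_\mu - q_\mu(\cdot,o))$ and the $m$-stage contraction supplied by finite expected option lengths (the same device the paper uses for its model equations in \cref{thm: model equations}), is cleaner and handles repeated interruption opportunities within one option execution correctly. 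To pass from the pivotal inequality to $r(\mu')\ge r(\mu)$, the paper iterates $v_\mu \le r_{\mu'} - l_{\mu'}r(\mu) + P_{\mu'}v_\mu$, divides by the accumulated expected lengths $\sum_{k<K}P_{\mu'}^k l_{\mu'}$, and takes $K\to\infty$; you instead integrate against the stationary distribution $d$ of the embedded option-initiation chain and invoke the ratio formula $r(\mu') = (\sum d\,R')/(\sum d\,L')$. Both are valid; the paper's iteration is more self-contained, while yours is shorter but leans on the Markov-renewal ratio formula (which the paper only cites for the equivalence of its two reward-rate definitions).

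For part (2) there is a genuine gap in your argument. Your parenthetical claim --- that if no option initiated in the recurrent class of $\mu'$ ever hits a trigger, then the triggering state is ``unreachable from anywhere under $\mu'$'' --- does not follow: a trigger state can be reached with positive probability from a transient initiation state $s_0$ and yet lie outside the recurrent class, so that it is visited at most finitely often and receives zero mass under $d$. In that situation your $d$-weighted sum $\sum_{s,o'}d(s,o')\Delta(s,o')$ is exactly zero and your argument yields only $r(\mu')\ge r(\mu)$; indeed, when all interruptions occur in states transient under $\mu'$, the two policies induce identical dynamics on the recurrent class and $r(\mu') = r(\mu)$, so the strict conclusion genuinely requires the interruption to occur with positive probability from the recurrent class, not merely from some state $s$. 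To be fair, the paper's own proof of part (2) is a single unsupported sentence (``a strict inequality holds if the probability of interruption \dots is non-zero''), so you have at least engaged with the real difficulty --- but the resolution you propose does not close it; the correct fix is to strengthen the hypothesis (or read it) as requiring a positive interruption probability from some state-option pair in the support of $d$, at which point your $\Delta>0$ observation immediately gives $\sum d\,\Delta > 0$ and hence $r(\mu')>r(\mu)$.
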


\begin{wrapfigure}{r}{0.4\textwidth}
    \vspace{-4mm}
    \centering
    \includegraphics[width=0.4\textwidth]{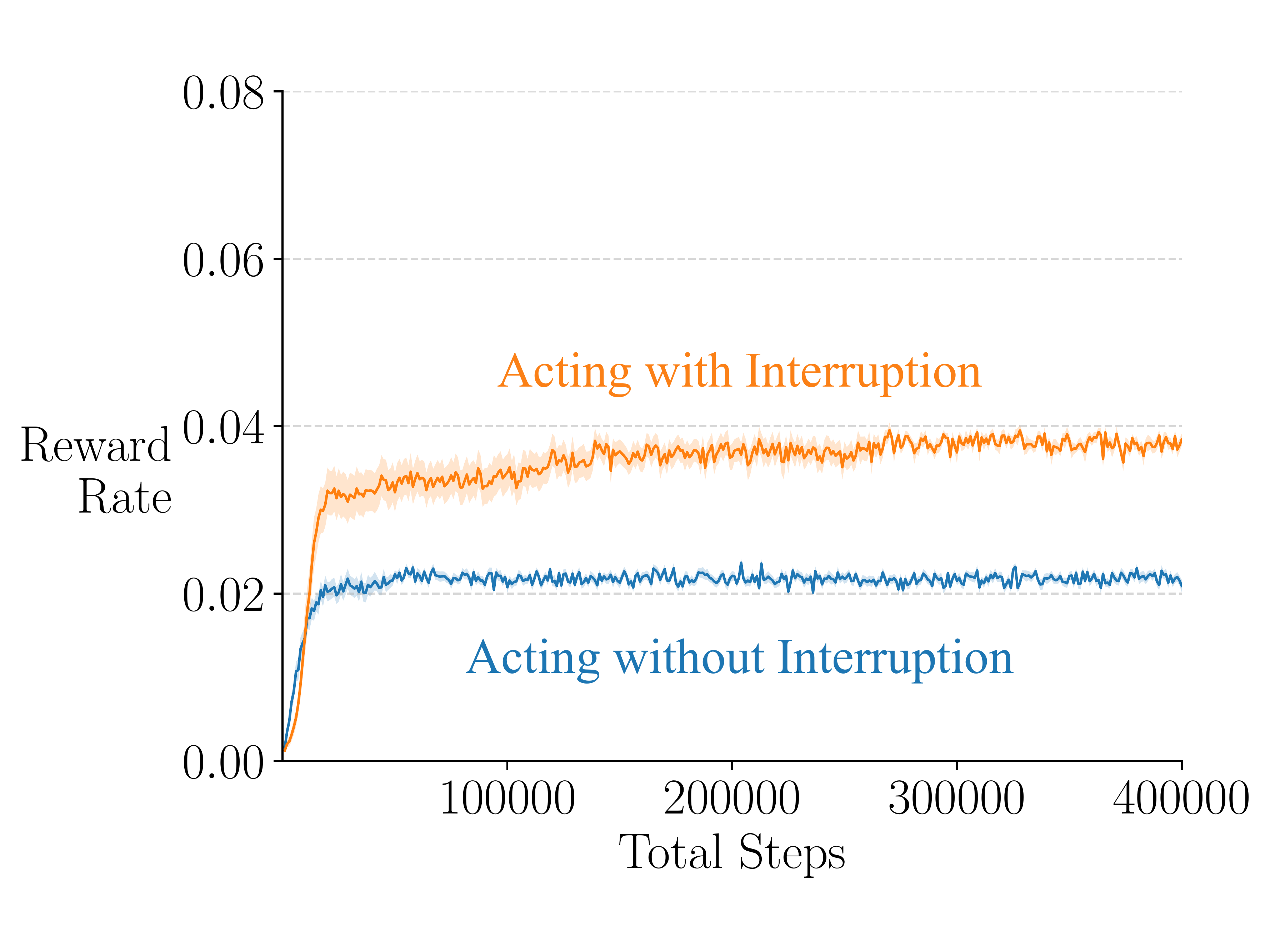}
    \caption{
    Learning curves showing that executing options with interruptions can achieve a higher reward rate than executing options till termination in the domain described in the adjoining text.
    }
    \label{fig: interruption learning curves}
    \vspace{-6mm}
\end{wrapfigure}

In short, the above theorem shows that interruption produces a behavior that achieves a higher reward rate than without interruption.
Note that interruption behavior is only applicable with intra-option algorithms; complete option transitions are needed in inter-option algorithms.

\textbf{Empirical Evaluation.} We tested the intra-option Differential Q-learning algorithm with and without interruption in the Four-Room domain. We set the goal as \texttt{G3} and allowed the agent to choose and learn only from the set of all hallway options $\calH$.
With just hallway options, 
without interruption, the best strategy is to first move to the lower hallway and then try to reach the goal by following options that pick random actions in the states near the hallway and goal. 
With interruption, the agent can first move to the left hallway, 
then take the option that moves the agent to the lower hallway but terminate when other options have higher option-values. 
This termination is most likely to occur in the cell just above \texttt{G3}. 
The agent then needs a fewer number of steps in expectation to reach the goal.

\Cref{fig: interruption learning curves} shows learning curves using intra-option Differential Q-learning with and without interruptions on this problem. 
Each parameter setting was run for 400,000 steps and repeated 30 times. 
The learning curves shown correspond to  $\alpha = 0.125$ and $\eta = 0.1$. 
As expected, the agent achieved a higher reward rate by using interruptions. The parameter study of the interruption algorithm along with the rest of the experimental details is presented in \cref{app: additional empirical results: Interruption}.

\section{Conclusions, Limitations, and Future Work}

In this paper, we extended learning and planning algorithms for the options framework --- originally proposed by Sutton et al.~(1999) for discounted-reward MDPs --- to average-reward MDPs. 
The inter-option learning algorithm presented in this paper is more general than previous work in that its convergence proof does not require existence of any special states in the MDP. 
We also derived the intra-option Bellman equations in average-reward MDPs and used them to propose the first intra-option learning algorithms for average-reward MDPs.
Finally, we extended the interruption algorithm and its related theory from the discounted to the average-reward setting.
Our experiments on a continuing version of the classic Four-Room domain demonstrate the efficacy of the proposed algorithms. 
We believe that our contributions will enable widespread use of options in the average-reward setting. 

We now briefly comment on the novelty of our theoretical and algorithmic contributions. 
Our primary theoretical contribution is to generalize Wan et al.’s (2021) proof techniques to obtain a unified convergence proof for actions and options. 
The same proof techniques then apply for both the inter- and intra-option algorithms. 
Our primary algorithmic contribution is the scaling of the updates by option lengths in the inter-option algorithms. 
The lack of scaling makes the algorithms unstable and prone to divergence. 
Furthermore, we show the correct way of scaling involves estimated option lengths, not sampled option lengths. 

The most immediate line of future work involves extending these ideas from the tabular case to the general case of function approximation, starting with linear function approximation. 
One way to incorporate function approximation is to extend algorithms presented in this paper to those using linear options (Sorg \& Singh 2010, Yao et al.\ 2014), perhaps by building on Zhang et al.'s (2021) work.
Using the results developed in this paper, we also foresee extensions to more ideas from the discounted formulation involving function approximation, such as Bacon et al.'s (2017) option-critic architecture, to the average-reward formulation.

This paper assumes that a fixed set of options is provided and the agent then learns or plans using them. One of the most important challenges in the options framework is the \textit{discovery} of options. 
We think the discovery problem is orthogonal to the problem formulation. Hence, another line of future work is to extend existing option-discovery algorithms developed for the discounted formulation to the average-reward formulation (e.g., algorithms by McGovern \& Barto 2001, Menache et al.~2002, Şimşek \& Barto 2004, Singh et al.~2004, Van Djik \& Polani 2011, Machado et al.~2017) . 
Relatively more work might be required in extending approaches that couple the problems of option discovery and learning (e.g., Gregor et al.~2016, 
Eysenbach et al.~2018, Achiam et al.~2018, Veeriah et al.~2021). 

Another limitation of this paper is that it deals with learning and planning separately. We also need combined methods that learn models as well as plan with them; we discuss some ideas in \cref{app: additional discussion}. 
Finally, we would like to get more empirical experience with the algorithms proposed in this paper, both in pedagogical tabular problems and challenging large-scale problems. 
Nevertheless, we believe this paper makes novel contributions that are significant for the use of temporal abstractions in average-reward reinforcement learning.


\section*{Acknowledgements}

The authors were supported by DeepMind, Amii, and CIFAR.
The authors wish to thank Huizhen Yu for extensive discussions on several related works; Benjamin Van Roy, Csaba Szepesv\'{a}ri, Dale Schuurmans, Martha White, and the anonymous reviewers for valuable feedback.


\section*{References}

\parskip=5pt
\parindent=0pt
\def\hangin{\hangindent=0.2in}

\hangin
Abounadi, J., Bertsekas, D., \& Borkar, V. S. (2001).  Learning Algorithms for Markov Decision Processes with Average Cost. \emph{SIAM Journal on Control and Optimization}.

\hangin
Achiam, J., Edwards, H., Amodei, D., \& Abbeel, P. (2018). Variational Option Discovery Algorithms. \emph{ArXiv:1807.10299}.

\hangin
Almezel, S., Ansari, Q. H., \& Khamsi, M. A. (2014). \emph{Topics in Fixed Point Theory (Vol. 5)}. Springer.

\hangin
Bacon, P. L., Harb, J., \& Precup, D. (2017). The Option-Critic Architecture. \emph{AAAI Conference on Artificial Intelligence}.

\hangin
Borkar, V. S. (1998). Asynchronous Stochastic Approximations. \emph{SIAM Journal on Control and Optimization}. 

\hangin
Borkar, V. S. (2009). \emph{Stochastic Approximation: A Dynamical Systems Viewpoint}. Springer.

\hangin
Borkar, V. S., \& Soumyanatha, K. (1997). An Analog Scheme for Fixed Point Computation. I. Theory. \emph{IEEE Transactions on Circuits and Systems I: Fundamental Theory and Applications}. 

\hangin
Brunskill, E., \& Li, L. (2014). PAC-inspired Option Discovery in Lifelong Reinforcement Learning. \emph{International Conference on Machine Learning}.

\hangin
Das, T. K., Gosavi, A., Mahadevan, S., \& Marchalleck, N. (1999). Solving Semi-Markov Decision Problems Using Average Reward Reinforcement Learning. \emph{Management Science}.

\hangin
Eysenbach, B., Gupta, A., Ibarz, J., \& Levine, S. (2018). Diversity is All You Need: Learning Skills without a Reward Function. \emph{ArXiv:1802.06070}.

\hangin
Fruit, R., \& Lazaric, A. (2017). Exploration-Exploitation in MDPs with Options. \emph{Artificial Intelligence and Statistics}. 

\hangin
Gosavi, A. (2004). Reinforcement Learning for Long-run Average Cost. \textit{European Journal of Operational Research}.

\hangin
Gregor, K., Rezende, D. J., \& Wierstra, D. (2016). Variational Intrinsic Control. \emph{ArXiv:1611.07507}.

\hangin 
Li, Y., \& Cao, F. (2010). RVI Reinforcement Learning for Semi-Markov Decision Processes with Average Reward. 
\emph{IEEE World Congress on Intelligent Control and Automation}.

\hangin
Machado, M. C., Bellemare, M. G., \& Bowling, M. (2017). A Laplacian Framework for Option Discovery in Reinforcement Learning. \emph{International Conference on Machine Learning}.

\hangin
McGovern, A., \& Barto, A. G. (2001). Automatic Discovery of Subgoals in Reinforcement Learning using Diverse Density. \emph{International Conference on Machine Learning}.

\hangin
Menache, I., Mannor, S., \& Shimkin, N. (2002). Q-Cut—Dynamic Discovery of Sub-goals in Reinforcement Learning. \emph{European Conference on Machine Learning}.

\hangin
Puterman, M. L. (1994). \emph{Markov Decision Processes: Discrete Stochastic Dynamic Programming.} John Wiley \& Sons.

\hangin
Schweitzer, P. J. (1971). Iterative Solution of the Functional Equations of Undiscounted Markov Renewal Programming. \emph{Journal of Mathematical Analysis and Applications}.

\hangin
Schweitzer, P. J., \& Federgruen, A. (1978). The Functional Equations of Undiscounted Markov Renewal Programming. \emph{Mathematics of Operations Research}.

\hangin
Şimşek, Ö., \& Barto, A. G. (2004). Using Relative Novelty to Identify Useful Temporal Abstractions in Reinforcement Learning. \emph{International Conference on Machine Learning}.

\hangin
Singh, S., Barto, A. G., \& Chentanez, N. (2004). Intrinsically Motivated Reinforcement Learning. \emph{Advances in Neural Information Processing Systems.}

\hangin
Sorg, J., \& Singh, S. (2010). Linear Options. \emph{International Conference on Autonomous Agents and Multiagent Systems}.

\hangin
Sutton, R. S., Precup, D., \& Singh, S. (1999). Between MDPs and Semi-MDPs: A Framework for Temporal Abstraction in Reinforcement Learning. \emph{Artificial Intelligence}.

\hangin
Sutton, R. S., \& Barto, A. G. (2018). \emph{Reinforcement Learning: An Introduction.} MIT Press.

\hangin
Tsitsiklis, J. N. (1994). Asynchronous StochasticAapproximation and Q-learning. \emph{Machine Learning}. 

\hangin
van Dijk, S. G., \& Polani, D. (2011). Grounding Subgoals in Information Transitions. \emph{IEEE Symposium on Adaptive Dynamic Programming and Reinforcement Learning}.

\hangin
Veeriah, V., Zahavy, T., Hessel, M., Xu, Z., Oh, J., Kemaev, I., van Hasselt, H., Silver, D., \& Singh S. (2021). Discovery of Options via Meta-Learned Subgoals. \emph{ArXiv:2102.06741}.

\hangin
Vien, N. A., \& Chung, T. (2008). Policy Gradient Semi-Markov Decision Process. \emph{IEEE International Conference on Tools with Artificial Intelligence}.

\hangin
Wan, Y., Naik, A., \& Sutton, R. S. (2021). Learning and Planning in Average-Reward Markov Decision Processes.  \textit{International Conference on Machine Learning}.

\hangin
Yao, H., Szepesvári, C., Sutton, R. S., Modayil, J., \& Bhatnagar, S. (2014). Universal Option Models. \emph{Advances in Neural Information Processing Systems}.

\hangin
Zhang, S., Wan, Y., Sutton, R. S., \& Whiteson, S. (2021). Average-Reward Off-Policy Policy Evaluation with Function Approximation. \textit{International Conference on Machine Learning}.


\newpage

\onecolumn

\appendix

\counterwithin{figure}{section}
\counterwithin{table}{section}
\counterwithin{theorem}{section}
\counterwithin{lemma}{section}
\counterwithin{assumption}{section}
\counterwithin{equation}{section}

\section{Formal Theoretical Results and Proofs}
\label{app:all proofs}
In this section, we provide formal statements of the theorems presented in the main text of the paper and show their proofs. This section has several subsections. The first subsection introduces General RVI Q, which will be used in later subsections. The other six subsections correspond to six theorems presented in the main text.

\subsection{General RVI Q} \label{app: general rvi q}
Wan et al.~ (2021) extended the family of RVI Q-learning algorithms (Abounadi, Bertsekas, and Borkar et al.~ 2001) to prove the convergence of their Differential Q-learning algorithm. 
Unlike RVI Q-learning, Differential Q-learning does not require a reference function. 
We further extend Wan et al.'s extended family of RVI Q-learning algorithms 
to a more general family of algorithms, called \emph{General RVI Q}. We then prove convergence for this family of algorithms and show that
inter-option algorithms and intra-option value learning algorithms are all members of this family.

We first need the following definitions:
\begin{enumerate}
    \item a set-valued process $\{Y_n\}$ taking values in the set of nonempty subsets of $\calI$ with the interpretation: $Y_n = \{i: i\textsuperscript{th}$ component of $Q$ was updated at time $n\}$,
    \item $\nu(n, i) \doteq \sum_{k=0}^n I\{i \in Y_k\}$, where $I$ is the indicator function. Thus $\nu(n, i) =$ the number of times the $i$ component was updated up to step $n$,
    \item i.i.d. random vectors $R_n$, $G_n$ and $F_n$ for all $n \geq 0$ satisfying $\bbE \left [R_n(i) \right] = r(i)$, where $r$ is a fixed real vector, $ \bbE[G_n(Q)(i)] = g(Q)(i)$ for any $Q \in \bbR^{\abs{\calI}}$ where $g: \calI \to \calI$ is a function satisfying \cref{assu: g} and $ \bbE[F_n(Q)(i)] = f(Q)$ for any $i \in \calI$ and $Q \in \bbR^{\abs{\calI}}$ where $f: \calI \to \bbR$ is a function satisfying \cref{assu: f}.
\end{enumerate}

\begin{assumption}\label{assu: g}
1) $g$ is a max-norm non-expansion, 2) $g$ is a span-norm non-expansion, 3) $g(x + ce) = g(x) + ce$ for any $c \in \bbR, x \in \bbR^{\abs{\calI}}$, 4) $g(cx) = cg(x)$ for any $c \in \bbR, x \in \bbR^{\abs{\calI}}$.
\end{assumption}

\begin{assumption}\label{assu: f}
1) $f$ is L-Lipschitz, 2) there exists a positive scalar $u$ s.t. $f(e) = u$ and $f(x + ce) = f(x) + cu$, 3) $f(cx) = cf(x)$.
\end{assumption}

\begin{assumption}\label{assu: variance of Martingale difference}
For $n \in \{0, 1, 2, \dots \}$, $\bbE [\norm{R_{n} - r}^2] \leq K$, $\bbE [\norm{G_{n}(Q) - g(Q)}^2] \leq K(1 + \norm{Q}^2)$ for any $Q \in \bbR^{\abs{\calI}}$, and $\bbE [\norm{F_{n}(Q) - f(Q)e}^2] \leq K(1 + \norm{Q}^2)$ for any $Q \in \bbR^{\abs{\calI}}$ for a suitable constant $K > 0$.
\end{assumption}
The above assumption means that the variances of $R_n$, $G_n(Q)$, and $F_n(Q)$ for any $Q$ are bounded.

General RVI Q's update rule is
\begin{align}
    Q_{n+1}(i) &\doteq Q_n(i) + \alpha_{\nu(n, i)} \big( R_n(i) - F_n(Q_n)(i) + G_n(Q_n)(i) - Q_n(i) + \epsilon_n(i) \big) I\{i \in Y_n\} \label{eq: General RVI Q async update},
\end{align}
where $\alpha_{\nu(n, i)}$ is the stepsize and $\epsilon_n$ is a sequence of random vectors of size $\abs{\calI}$.

We make following assumption on $\epsilon_n$.
\begin{assumption}[Noise Assumption] \label{assu: epsilon} $\norm{\epsilon_n}_\infty \leq  K (1 + \norm{Q_n}_\infty)$ for some scalar $K$. Further, $\epsilon_n$ converges in probability to 0.
\end{assumption}

We make following assumptions on $\alpha_{\nu(n, i)}$.
\begin{assumption}[Stepsize Assumption] \label{assu: stepsize} For all $n \geq 0$, $\alpha_n > 0$, $\sum_{n = 0}^\infty \alpha_n = \infty$, and $\sum_{n = 0}^\infty \alpha_n^2 < \infty$.
\end{assumption}

\begin{assumption}[Asynchronous Stepsize Assumption A] \label{assu: asynchronous stepsize 1}
Let $[\cdot]$ denote the integer part of $(\cdot)$, for $x \in (0, 1)$, 
\begin{align*}
    \sup_i \frac{\alpha_{[xi]}}{\alpha_i} < \infty
\end{align*}
and 
\begin{align*}
    \frac{\sum_{j=0}^{[yi]} \alpha_j}{\sum_{j=0}^i \alpha_j} \to 1
\end{align*} 
uniformly in $y \in [x, 1]$.
\end{assumption}

\begin{assumption}[Asynchronous Stepsize Assumption B] \label{assu: asynchronous stepsize 2}
There exists $\Delta > 0$ such that 
\begin{align*}
    \liminf_{n \to \infty} \frac{\nu(n, i)}{n+1} \geq \Delta,
\end{align*}
a.s., for all $s \in \calS, o \in \calO$.
Furthermore, for all $x > 0$, let 
\begin{align*}
    N(n, x) = \min \Bigg \{m > n: \sum_{i = n+1}^m \alpha_i \geq x \Bigg \},
\end{align*}
the limit 
\begin{align*}
    \lim_{n \to \infty} \frac{\sum_{i = \nu(n, i)}^{\nu(N(n, x), i)} \alpha_i}{\sum_{i = \nu(n, i')}^{\nu(N(n, x), i')} \alpha_i}
\end{align*}
exists a.s. for all $s, s', o, o'$.
\end{assumption}

\begin{assumption}\label{assu: General RVI Q unique solution}
$r(i) - \bar r + g(q)(i) - q(i) = 0, \forall i \in \calI$ has a unique solution for $\bar r$ and a unique for $q$ only up to a constant. 
\end{assumption}

Denoted the unique solution of $\bar r$ by $r_\infty$. Further, it can be seen that the solution of $q$ satisfying both $r - \bar r e - g(q) - q = 0$ and $f(q) = r_\infty$ is unique because our assumption on $f$ (\cref{assu: f}). Denote the unique solution as $q_\infty$. We have,
\begin{align}
    f(q_\infty) = r_\infty \label{eq: General RVI Q relation between f(q_infty) and r_infty}.
\end{align}

\begin{theorem}\label{thm: General RVI Q}
Under Assumptions~\ref{assu: g}-\ref{assu: General RVI Q unique solution}, General RVI Q converges, almost surely, $Q_n$ to $q_\infty$ and $f(Q_n)$ to $r_\infty$.
\end{theorem}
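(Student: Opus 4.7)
The plan is to cast the update (\ref{eq: General RVI Q async update}) as an asynchronous stochastic approximation and apply the ODE method of Borkar (1998, 2009), following the template used by Abounadi, Bertsekas, and Borkar (2001) and extended by Wan, Naik, and Sutton (2021). First I would identify the mean-field ODE
\begin{align*}
    \dot q(t) = h(q(t)), \quad h(q) \doteq r - f(q)\,e + g(q) - q,
\end{align*}
which has $q_\infty$ as an equilibrium by \cref{assu: General RVI Q unique solution} and (\ref{eq: General RVI Q relation between f(q_infty) and r_infty}). The martingale-difference noise arising from $R_n$, $F_n(Q_n)$, $G_n(Q_n)$ has bounded conditional second moment by \cref{assu: variance of Martingale difference}, while the additional perturbation $\epsilon_n$ is bounded by $K(1 + \norm{Q_n}_\infty)$ and goes to zero in probability by \cref{assu: epsilon}, so it fits the standard vanishing-perturbation framework and does not alter the limiting ODE.

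Second, I would establish almost-sure boundedness of $\{Q_n\}$ via the scaling-limit technique of Borkar and Soumyanatha (1997). The homogeneity conditions (part 4 of \cref{assu: g} and part 3 of \cref{assu: f}) make $h$ homogeneous of degree one, so the scaled limit field is $h_\infty(q) = g(q) - f(q)\,e - q$, and the Borkar--Meyn stability criterion reduces boundedness to showing that the origin is globally asymptotically stable for $\dot q = h_\infty(q)$. I would take the span seminorm as the Lyapunov-like quantity: the span non-expansion of $g$ together with $g(x+ce) = g(x)+ce$ makes $\mathrm{sp}(q(t))$ non-increasing under the flow, forcing trajectories toward the one-dimensional subspace $\{c e : c \in \mathbb{R}\}$; the selector $f$, with $f(e) = u > 0$ and $f(x + ce) = f(x) + c u$, then identifies and damps the remaining scalar coordinate.

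Third, the same Lyapunov-style argument, shifted by $q_\infty$ and using $r + g(q_\infty) - q_\infty = r_\infty e$ together with $f(q_\infty) = r_\infty$ to cancel the constant terms, yields global asymptotic stability of $q_\infty$ for the unscaled ODE $\dot q = h(q)$. Combining this with the Lipschitz continuity of $h$ and the asynchronous stepsize conditions (Assumptions~\ref{assu: stepsize}, \ref{assu: asynchronous stepsize 1}, \ref{assu: asynchronous stepsize 2}), Borkar's asynchronous stochastic-approximation theorem then gives $Q_n \to q_\infty$ almost surely, after which $f(Q_n) \to r_\infty$ by the Lipschitz continuity of $f$. The main obstacle I anticipate is the stability step: because $\mathrm{sp}(\cdot)$ has a one-dimensional kernel $\mathrm{span}(e)$, the span-seminorm contraction alone cannot control the full state, and one must carefully knit together the span-seminorm decay on the quotient of $\mathbb{R}^{|\calI|}$ by $\mathrm{span}(e)$ with the $f$-based collapse of the kernel component — exactly the asymmetry that forced the auxiliary function $f$ into the RVI Q scheme in the first place.
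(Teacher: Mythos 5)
Your overall architecture matches the paper's: reduce the asynchronous update to its synchronous version via Borkar's asynchronous stochastic-approximation theory, identify the mean ODE $\dot q = r - f(q)e + g(q) - q$, establish boundedness through the Borkar--Meyn scaling limit $h_\infty(q) = g(q) - f(q)e - q$, and split the analysis into the component transverse to $e$ and the component along $e$, with $f$ controlling the latter. Your treatment of the martingale noise, of the vanishing perturbation $\epsilon_n$, and the final step $f(Q_n) \to r_\infty$ via Lipschitz continuity of $f$ are also exactly as in the paper.

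The gap is in the step you yourself flag as the main obstacle, and it is not merely technical. Span non-expansion of $g$ yields only that $\mathrm{sp}\bigl(q(t) - q_\infty\bigr)$ is \emph{non-increasing} along the flow (via variation of parameters and Gronwall's inequality); it does not yield decay, so it does not ``force trajectories toward the one-dimensional subspace $\mathrm{span}(e)$'' as you assert. A non-expansive flow can in principle wander at a fixed positive level of the seminorm, so monotonicity of $\mathrm{sp}(\cdot)$ alone cannot close the quotient-space part of the argument; and since the boundedness step requires global asymptotic stability of the origin for $\dot q = h_\infty(q)$ (the $r=0$ special case of the same ODE), the gap propagates there as well. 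The paper supplies the missing ingredient differently: it introduces an \emph{auxiliary} ODE $\dot y_t = r + g(y_t) - r_\infty e - y_t$, in which the unknown offset $f(x_t)$ is replaced by the known constant $r_\infty$, and invokes Borkar and Soumyanath's (1997) theorem that for a max-norm non-expansive map every trajectory of $\dot y_t = T_1(y_t) - y_t$ converges to \emph{some} equilibrium (here, some point of $\{q_\infty + ce\}$). The span/Gronwall computation is then used only to prove the exact coupling $x_t = y_t + z_t e$ between the original and auxiliary trajectories, after which $z_t$ obeys the explicit scalar linear ODE $\dot z_t = -u z_t + \bigl(r_\infty - f(y_t)\bigr)$, whose solution converges to $\bigl(r_\infty - f(y_\infty)\bigr)/u$; uniqueness of the equilibrium then pins the combined limit to $q_\infty$. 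To complete your proof you would need either to import that convergence theorem for non-expansive flows or to replace it with an argument of comparable strength; the span-seminorm monotonicity you propose is strictly weaker.
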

\begin{proof}
Because \eqref{eq: General RVI Q async update} is in the same form as the asynchronous update (Equation 7.1.2) by Borkar (2009), we apply the result in Section 7.4 of the same text (Borkar 2009) (see also Theorem 3.2 by Borkar (1998)) which shows convergence for Equation 7.1.2, to show the convergence of \eqref{eq: General RVI Q async update}. This result, given Assumption \ref{assu: asynchronous stepsize 1} and \ref{assu: asynchronous stepsize 2}, only requires showing the convergence of the following \emph{synchronous} version of the General RVI Q algorithm:
\begin{align}
    & Q_{n+1}(i) \doteq Q_{n}(i) + \alpha_n \big( R_n(i) - F_n(Q_n)(i) + g(Q_n)(i) - Q_n(i) \big) \quad \forall i \in \calI. \label{eq: General RVI Q sync update}
\end{align}

Define operators $T_1, T_2$:
\begin{align*}
    T_1 (Q)(i) & \doteq r(i) + g(Q)(i) - r_\infty, \\
    T_2 (Q)(i) & \doteq r(i) + g(Q)(i) - f(Q)\\
    & = T_1 (Q)(i) +  \left (r_\infty - f(Q) \right).
\end{align*}

Consider two ordinary differential equations (ODEs):
\begin{align}
    \dot y_t & \doteq T_1 (y_t) - y_t , \label{eq: General RVI Q aux ode}\\
    \dot x_t & \doteq T_2 (x_t) - x_t = T_1(x_t) - x_t +  \left ( r_\infty - f(x_t) \right)e. \label{eq: General RVI Q original ode}
\end{align}

Note that because $g$ is a non-expansion by \cref{assu: g}, both \eqref{eq: General RVI Q aux ode} and \eqref{eq: General RVI Q original ode} have Lipschitz R.H.S.’s and thus are well-posed.

Because $g$ is a non-expansion, $T_1$ is also a non-expansion. Therefore we have the next lemma, which restates Theorem 3.1 and Lemma 3.2 by Borkar and Soumyanath (1997).
\begin{lemma}\label{lemma: General RVI Q aux ode convergence}
Let $\bar y$ be an equilibrium point of \eqref{eq: General RVI Q aux ode}. Then $\norm{y_t - \bar y}_\infty$ is nonincreasing, and $y_t \to y_*$ for some equilibrium point $y_*$ of \eqref{eq: General RVI Q aux ode} that may depend on $y_0$.
\end{lemma}

\begin{lemma} \label{lemma: General RVI Q unique equilibrium}
\eqref{eq: General RVI Q original ode} has a unique equilibrium at $q_\infty$.
\end{lemma}

\begin{proof}
Because $f(q_\infty) = r_\infty$, we have that $q_\infty = T_1(q_\infty) = T_2(q_\infty)$,
thus $q_\infty$ is a equilibrium point for \eqref{eq: General RVI Q original ode}.
Conversely, if $T_2 (Q) - Q = 0$, then $T_1 Q + (r_\infty - f(Q)) e = Q$. But the equation $T_1Q + ce = Q$ only has a solution when $c = 0$ because of Assumption~\ref{assu: g}. We have $c = 0$ and thus $f(Q) = r_\infty$, which along with $T_1 Q = Q$, implies $Q = q_\infty$.
\end{proof}

\begin{lemma}\label{lemma: General RVI Q connection between original and aux ode}
Let $x_0 = y_0$, then $x_t= y_t + z_t e$, where $z_t$ satisfies the ODE $\dot z_t= - u z_t + (r_\infty - f(y_t))$, and $k \doteq \abs{ \calI}$.
\end{lemma}
\begin{proof}
From \eqref{eq: General RVI Q aux ode}, \eqref{eq: General RVI Q original ode},  by the variation of parameters formula, 
\begin{align*}
    x_t &= \exp(-t) x_0 + \int_0^t \exp(\tau - t) T_1 (x_\tau) d \tau + \left [ \int_0^t \exp(\tau - t) \left(r_\infty - f(x_\tau) \right) d\tau \right ] e, \\
    y_t &= \exp(-t) y_0 + \int_0^t \exp(\tau - t) T_1 (y_\tau) d\tau .
\end{align*}
Then we have
\begin{align*}
    & \max_{s, o} (x_t(s, o) - y_t(s, o)) \\
    & \leq \int_0^t \exp(\tau - t) \max_{s, o}(T_1 (x_\tau)(s, o) - T_1 (y_\tau) (s, o)) d\tau + \left[\int_0^t \exp(\tau - t) \left (r_\infty - f(x_\tau) \right) d\tau \right], \\
    & \min_{s, o} (x_t(s, o) - y_t(s, o)) \\
    & \geq \int_0^t \exp(\tau - t) \min_{s, o} (T_1 (x_\tau)(s, o) - T_1 (y_\tau) (s, o)) d\tau +  \left[\int_0^t \exp(\tau - t) \left(r_\infty - f(x_\tau) \right) d\tau \right].
\end{align*}
Subtracting, we have
\begin{align*}
    sp(x_t - y_t) & \leq \int_0^t \exp(\tau - t) sp (T_1 (x_\tau) - T_1 (y_\tau)) d\tau,
\end{align*}
where $sp(x)$ denotes the span of vector $x$.

Because we assumed that $g$ is span-norm non-expansion, $T_1$ is also a span-norm non-expansion and thus
\begin{align*}
    sp(x_t - y_t) \leq \int_0^t \exp(\tau - t) sp (T_1 (x_\tau) - T_1 (y_\tau)) d\tau \leq \int_0^t \exp(\tau - t) sp(x_\tau - y_\tau) d\tau.
\end{align*}

By Gronwall’s inequality, $sp(x_t - y_t) = 0$ for all $t \geq 0$. Because $sp(x) = 0$ if and only if $x = ce$ for some $c \in \bbR$, we have 
\begin{align*}
    x_t = y_t + z_t e, \quad t \geq 0.
\end{align*}
for some $z_t$. Also $x_0 = y_0 \implies z_0 = 0$.

Now we show that $\dot z_t= - u z_t + (r_\infty - f(y_t) )$. Note that $f(x_t) = f(y_t + z_t e) = f(y_t) + u z_t$. In addition, $T_1(x_t) - T_1(y_t) = T_1(y_t + z_t e) - T_1(y_t) = T_1(y_t) + z_t e - T_1(y_t) = z_t e$, therefore we have, for $z_t \in \bbR$:
\begin{align*}
    \dot z_t e & = \dot x_t - \dot y_t\\
    & = \left(T_1 (x_t) - x_t + \left(r_\infty - f(x_t) \right) e \right) - ( T_1 (y_t) - y_t) \quad \text{(from \eqref{eq: General RVI Q aux ode} and \eqref{eq: General RVI Q original ode})}\\
    & = - (x_t - y_t) + (T_1 (x_t) - T_1(y_t)) + \left(r_\infty - f(x_t) \right) e \quad \\
    & = - z_t e + z_t e + \left(r_\infty - f(x_t) \right) e\\
    & = - u z_t e + u z_t e + \left (r_\infty - f(x_t) \right) e\\
    & = - u z_t e + \left (r_\infty - f(y_t) \right) e\\
    \implies \dot z_t &= - u z_t + \left(r_\infty - f(y_t) \right).
\end{align*}
\end{proof}

\begin{lemma} \label{lemma: General RVI Q globally asymptotically stable equilibrium}
$q_\infty$ is the globally asymptotically stable equilibrium for \eqref{eq: General RVI Q original ode}.
\end{lemma}

\begin{proof}
We have shown that $q_\infty$ is the unique equilibrium in Lemma \ref{lemma: General RVI Q unique equilibrium}.

With that result, we first prove Lyapunov stability. That is, we need to show that given any $\epsilon > 0$, we can find a $\delta > 0$ such that $\norm{q_\infty - x_0}_\infty \leq \delta$ implies $\norm{q_\infty - x_t}_\infty \leq \epsilon$ for $t \geq 0$. 

First, from Lemma \ref{lemma: General RVI Q connection between original and aux ode} we have $\dot z_t= - u z_t + (r_\infty - f(y_t))$. By variation of parameters and $z_0 = 0$, we have
\begin{align*}
    z_t = \int_0^t \exp(u (\tau - t)) \left (r_\infty - f(y_\tau) \right) d\tau .
\end{align*}
Then
\begin{align}
    \norm{q_\infty - x_t}_\infty & = \norm{q_\infty - y_t - z_t ue}_\infty \nonumber\\ 
    & \leq \norm{q_\infty - y_t}_\infty + u \abs{z_t}  \nonumber\\
    & \leq \norm{q_\infty - y_0}_\infty + u \int_0^t \exp(u (\tau - t)) \abs{r_\infty - f(y_\tau)} d\tau \nonumber\\
    & = \norm{q_\infty - x_0}_\infty + u \int_0^t \exp(u (\tau - t)) \abs{f(q_\infty) - f(y_\tau)} d\tau \quad \text{(from \eqref{eq: General RVI Q relation between f(q_infty) and r_infty})} \label{eq: globally asymptotically stable equilibrium lemma: eq 1}.
\end{align}

Because $f$ is $L$-lipschitz, we have
\begin{align*}
    \abs{ f(q_\infty) - f( y_\tau)} & \leq L \norm{r_\infty - y_\tau}_\infty \\
    & \leq L \norm{r_\infty - y_0}_\infty \quad \text{(from Lemma \ref{lemma: General RVI Q aux ode convergence})} \\
    & = L \norm{r_\infty - x_0}_\infty.
\end{align*}

Therefore
\begin{align*}
    \int_0^t \exp(u (\tau - t)) \abs{ f(q_\infty) - f(y_\tau) } d\tau & \leq \int_0^t \exp(u (\tau - t)) L \norm{q_\infty - x_0 }_\infty d\tau \\
    & = L \norm{q_\infty - x_0 }_\infty \int_0^t \exp(u (\tau - t)) d\tau \\
    & = L \norm{q_\infty - x_0 }_\infty \frac{1}{u }(1 - \exp(-u t)) \\
    & = \frac{L}{u }\norm{q_\infty - x_0 }_\infty (1 - \exp(-u t)).
\end{align*}

Substituting the above equation in \eqref{eq: globally asymptotically stable equilibrium lemma: eq 1}, we have
\begin{align*}
    \norm{q_\infty - x_t}_\infty \leq (1 + L) \norm{q_\infty - x_0}_\infty.
\end{align*}

Lyapunov stability follows.

Now in order to prove the asymptotic stability, in addition to Lyapunov stability, we need to show that there exists $\delta >0$ such that if $\norm{x_0 - q_\infty}_\infty < \delta$ , then $\lim_{t \to \infty } \norm{x_t - q_\infty}_\infty=0$. Note that
\begin{align*}
    \lim_{t \to \infty} z_t & = \lim_{t \to \infty} \int_0^t \exp(u (\tau - t)) \left(r_\infty - f(y_\tau) \right) d\tau\\
    & = \lim_{t \to \infty} \frac{\int_0^t \exp(u \tau) ( r_\infty - f(y_\tau) ) d\tau}{\exp(ut)} \\
    & = \lim_{t \to \infty} \frac{\exp(u t) (r_\infty - f(y_t))}{u \exp(ut)} \quad \text{(by L'Hospital's rule)}\\
    & = \frac{r_\infty - f(y_\infty)}{u} \quad \text{(by Lemma \ref{lemma: General RVI Q aux ode convergence})} .
\end{align*}
Because $x_t = y_t + z_t e$ (Lemma \ref{lemma: General RVI Q connection between original and aux ode}) and $y_t \to y_\infty$ (Lemma \ref{lemma: General RVI Q aux ode convergence}), we have $x_t \to y_\infty + (r_\infty - f(y_\infty)) e / u$, which must coincide with $q_\infty$ because that is the only equilibrium point for \eqref{eq: General RVI Q original ode} (Lemma \ref{lemma: General RVI Q unique equilibrium}). Therefore $\lim_{t \to \infty} \norm{x_t - q_\infty}_\infty = 0$ for any $x_0$. Asymptotic stability is shown and the proof is complete.
\end{proof}

\begin{lemma}\label{lemma: General RVI Q synchronous update convergence}
Equation~\ref{eq: General RVI Q sync update} converges a.s. $Q_{n}$ to $q_\infty$ 
as $n \to \infty$.
\end{lemma}

\begin{proof}
The proof uses Borkar's (2008) Theorem 2 in Section 2 and is essentially the same as Lemma 3.8 by Abounadi et al.~(2001). For completeness, we repeat the proof (with more details) here.

First write the synchronous update \eqref{eq: General RVI Q sync update} as 
\begin{align*}
    Q_{n+1} = Q_{n} + \alpha_n (h(Q_n) + M_{n+1} + \epsilon_n),
\end{align*}
where 
\begin{align*}
    h(Q_n)(i) & \doteq r(i) - f(Q_n) + g(Q_n)(i) - Q_n(i)\\
    & = T_2(Q_n)(i) - Q_n(i) ,\\
    M_{n + 1}(i) & \doteq R_n(i) - F_n(Q_n)(i) + G_n(Q_n)(i) - T_2(Q_n)(i).
\end{align*}

It can be shown that $\epsilon_n$ is asymptotically negligible and therefore does not affect the conclusions of Theorem 2 (text after Equation B.66 by Wan et al.~2021). 

Theorem 2 requires verifying following conditions and concludes that $Q_n$ converges
to a (possibly sample path dependent) compact connected internally chain transitive invariant set of ODE $\dot x_t = h(x_t)$. This is exactly the ODE defined in \eqref{eq: General RVI Q original ode}. Lemma \ref{lemma: General RVI Q unique equilibrium} and \ref{lemma: General RVI Q globally asymptotically stable equilibrium} conclude that this ODE has $q_\infty$ as the unique globally asymptotically stable equilibrium. Therefore the (possibly sample path dependent) compact connected internally chain transitive invariant set is a singleton set containing only the unique globally asymptotically stable equilibrium. Thus Theorem 2 concludes that $Q_n \to q_\infty$ a.s. as $n \to \infty$. We now list conditions required by Theorem 2:

\begin{itemize}
    \item \textbf{(A1)} The function $h$ is Lipschitz: $\norm{h(x) - h(y)} \leq L \norm{x - y}$ for some $0 < L < \infty$.
    \item \textbf{(A2)} The sequence $\{ \alpha_n\}$ satisfies $ \alpha_n > 0$, and $\sum \alpha_n = \infty$, $\sum \alpha_n^2 < \infty$.
    \item \textbf{(A3)} $\{M_n\}$ is a martingale difference sequence with respect to the increasing family of $\sigma$-fields 
        \begin{align*}
            \calF_n \doteq \sigma(Q_i, M_i,i \leq n), n \geq 0.
        \end{align*}
        That is
        \begin{align*}
            \bbE[M_{n+1} \mid \calF_n] = 0 \text{ \ \ a.s., } n \geq 0.
        \end{align*}
        Furthermore, $\{M_n\}$ are square-integrable 
        \begin{align*}
            \bbE[\norm{M_{n+1}}^2 \mid \calF_n] \leq K(1 + \norm{Q_n}^2) \text{ \ \ a.s., \ \ } n \geq 0,
        \end{align*}
        for some constant $K > 0$.
    \item \textbf{(A4)} $\sup_n \norm{Q_n} \leq \infty$ a.s..
\end{itemize}

Let us verify these conditions now.

(A1) is satisfied because $T_2$ is Lipschitz.

(A2) is satisfied by \Cref{assu: stepsize}.

(A3) is also satisfied because for any $i \in \calI$
\begin{align*}
    \bbE[M_{n+1}(i) \mid \calF_n] & = \bbE \left[R_n(i) - F_n(Q_n)(i) + G_n(i) - T_2 (Q_n)(i) \mid \calF_n \right] \\
    & = \bbE \left [R_n(i) - F_n(Q_n)(i) + G_n(Q_n)(i) \mid \calF_n \right] - T_2 (Q_n)(i) \\
    & = 0,
\end{align*}
and $\bbE [\norm{M_{n+1}}^2 \mid \calF_n] \leq \bbE [\norm{R_n - r}^2 \mid \calF_n] + \bbE [\norm{F_n(Q_n) - f(Q_n)e}^2 \mid \calF_n] + \bbE [\norm{G_n(Q_n) - g(Q_n)}^2 \mid \calF_n] \leq K (1 + \norm{Q_n}^2)$ for a suitable constant $K > 0$ can be verified by a simple application of triangle inequality.

To verify (A4), we apply Theorem 7 in Section 3 by Borkar (2008), which shows $\sup_n \norm{Q_n} \leq \infty$ a.s., if (A1), (A2), and (A3) are all satisfied and in addition we have the following condition satisfied:

\textbf{(A5)} The functions $h_d(x) \doteq h(dx)/d$, $d \geq 1, x \in \bbR^{k}$, satisfy $h_d(x) \to h_\infty(x)$ as
$d \to \infty$, uniformly on compacts for some $h_\infty \in C(\bbR^k)$. Furthermore, the ODE $\dot x_t = h_\infty(x_t)$ has the origin as its unique globally asymptotically stable equilibrium.

Note that
\begin{align*}
    h_\infty(x) = \lim_{d \to \infty} h_d(x) = \lim_{d \to \infty} \left (T_2(dx) - dx \right) / d = g(x) - f(x)  e - x,
\end{align*}
because $g(cx) = cg(x)$ and $f(cx) = cf(x)$ by our assumption.

The function $h_\infty$ is clearly continuous in every $x \in \bbR^k$ and therefore $h_\infty \in C(\bbR^k)$.

Now consider the ODE $\dot x_t = h_\infty(x_t) = g (x_t) - f(x_t) e - x_t$. Clearly the origin is an equilibrium. This ODE is a special case of \eqref{eq: General RVI Q original ode}, corresponding to the $r(s, o) \forall s \in \calS, o \in \calO$ being always zero. Therefore Lemma \ref{lemma: General RVI Q unique equilibrium} and \ref{lemma: General RVI Q globally asymptotically stable equilibrium} also apply to this ODE and the origin is the unique globally asymptotically stable equilibrium.

(A1), (A2), (A3), (A4) are all verified and therefore 
\begin{align*}
    Q_n \to q_\infty \text{\ \  a.s. as \ \  } n \to \infty.
\end{align*}
\end{proof}
\end{proof}

\subsection{Theorem~\ref{thm: Inter-option Differential Methods}} \label{app: inter option}

For simplicity, we will only provide formal theorems and proofs for our \emph{control} learning and planning algorithms. The formal theorems and proofs for our prediction algorithms are similar to those for the control algorithms and are thus omitted. To this end, we first provide a general algorithm that includes both learning and planning control algorithms. We call it \emph{General Inter-option Differential Q}. We first formally define it and then explain why both inter-option Differential Q-learning and inter-option Differential Q-planning are special cases of General Inter-option Differential Q. We then provide assumptions and the convergence theorem of the general algorithm. The theorem would lead to convergence of the special cases. Finally, we provide a proof for the theorem.

Given an SMDP $\hat \calM = (\calS, \calO, \hat \calL, \hat \calR, \hat p)$, for each state $s \in \calS $, option $o \in \calO$, and discrete step $n \geq 0$, let $\hat R_n(s, o),  \hat S'_n(s, o), \hat L_n(s, o)  \sim \hat p(\cdot, \cdot, \cdot|s, o)$ denote a sample of resulting state, reward and the length. 
We hypothesize a set-valued process $\{Y_n\}$ taking
values in the set of nonempty subsets of $\calS \times \calO$ with the interpretation: $Y_n = \{(s, o): (s, o)$ component of $Q$ was updated at time $n\}$. Let $\nu(n, s, o) \doteq \sum_{k=0}^n I\{(s, o) \in Y_k\}$, where $I$ is the indicator function. Thus $\nu(n, s, o) =$ the number of times the $(s, o)$ component was updated up to step $n$. The update rules of General Inter-option Differential Q are 
\begin{align}
    Q_{n+1}(s, o) & \doteq Q_n(s, o) + \alpha_{\nu(n, s, o)} \delta_n(s, o) / L_n(s, o) I\{(s, o) \in Y_n\}, \quad \forall s \in \calS, o \in \calO, \label{eq: General Inter-option Diff Q async update Q}\\
    \bar R_{n+1} & \doteq \bar R_n + \eta \sum_{s, o} \alpha_{\nu(n, s, o)} \delta_n(s, o) / L_n(s, o) I\{(s, o) \in Y_n\}, \label{eq: General Inter-option Diff Q async update bar R}\\
    L_{n+1}(s, o) &\doteq L_{n}(s, o) + \beta_n(s, o) (\hat L_n(s, o) - L_{n}(s, o)) I\{(s, o) \in Y_n\},
\end{align}
where
\begin{align}
    \delta_n(s, o) & \doteq \hat R_n(s, o) - \bar R_n L_n(s, o) + \max_{o'} Q_n(\hat S_n'(s, o), o')  - Q_n(s, o)  \label{eq: General Inter-option Diff Q TD error}
\end{align}
is the TD error.

Here $\alpha_{\nu(n, s, o)}$ is the stepsize at step $n$ for state-action pair $(s, o)$. The quantity $\alpha_{\nu(n, s, o)}$ depends on the sequence $\{\alpha_n\}$, which is an algorithmic design choice, and also depends on the visitation of state-option pairs $\nu(n, s, o)$. To obtain the stepsize, the algorithm could maintain a $\vert \calS \times \calO \vert$-size table counting the number of visitations to each state-option pair, which is exactly $\nu(\cdot, \cdot, \cdot)$. Then the stepsize $\alpha_{\nu(n, s, o)}$ can be obtained as long as the sequence $\{\alpha_n\}$ is specified.

$Q_0$ and $R_0$ can be initialized arbitrarily. Note that $L_0$ can not be initialized to $0$ because it is the divisor for both \eqref{eq: General Inter-option Diff Q async update Q} and \eqref{eq: General Inter-option Diff Q async update bar R} for the first update. Because the expected length of all options would be greater than or equal to 1, we choose $L_0$ to be 1. In this way, $L_n$ will never be 0 because it is initialized to 1 and all the sampled option lengths are greater than or equal to 1. Therefore the problem of division by 0 will not happen in the updates.

Now we show inter-option Differential Q-learning and inter-option Differential Q-planning are special cases of General Inter-option Differential Q. Consider a sequence of real experience $\ldots, \hat S_n, \hat O_n, \hat R_{n}, \hat L_n, \hat S_{n+1}, \ldots$. 
\begin{align*}
    Y_n(s, o) &= 1 \text{, if } s = \hat S_n, o = \hat O_n,\\
    Y_n(s, o) &= 0 \text{ otherwise,}
\end{align*}
and $\hat S'_n(\hat S_n, \hat O_n) = \hat S_{n+1}, \hat R_n(\hat S_n, \hat O_n) = \hat R_{n+1}, \hat L_n(\hat S_n, \hat O_n) = \hat L_n$, update rules \eqref{eq: General Inter-option Diff Q async update Q}, \eqref{eq: General Inter-option Diff Q async update bar R}, and \eqref{eq: General Inter-option Diff Q TD error} become
\begin{align*}
    Q_{n+1}(\hat S_n, \hat O_n) & \doteq Q_n (\hat S_n, \hat O_n) + \alpha_{\nu(n, \hat S_n, \hat O_n)} \hat \delta_n / L_n(\hat S_n, \hat O_n) \text{\ , and\ } Q_{n+1}(s, o) \doteq Q_n (s, o), \forall s \neq \hat S_n, o \neq \hat O_n, \\
    \bar R_{n+1} & \doteq \bar R_n + \eta \alpha_{\nu(n, \hat S_n, \hat O_n)}  \hat \delta_n / L_n(\hat S_n, \hat O_n),\\
    \hat \delta_n & \doteq \hat R_n - \bar R_n \hat L_n + \max_{o'} Q_n(\hat S_{n+1}, o') - Q_n(\hat S_n, \hat O_n),\\
    L_{n+1}(\hat S_n, \hat O_n) &\doteq L_{n}(\hat S_n, \hat O_n) + \beta_n(\hat S_n, \hat O_n) (\hat L_n - L_{n}(\hat S_n, \hat O_n))
\end{align*}
which are inter-option Differential Q-learning's update rules (\cref{sec: inter-option}) with stepsize $\alpha$ in the $n$-th update being $\alpha_{\nu(n, \hat S_n, \hat O_n)}$, and the stepsize $\beta$ being $\beta(\hat S_n, \hat O_n)$.

If we consider a sequence of simulated experience $\ldots, \tilde S_n, \tilde O_n, \tilde R_{n}, \tilde L_n, \tilde S_{n}', \ldots$. 
\begin{align*}
    Y_n(s, o) &= 1 \text{, if } s = \tilde S_n, o = \tilde O_n,\\
    Y_n(s, o) &= 0 \text{ otherwise,}
\end{align*}
and $\hat S'_n(s, o) = \tilde S_{n}', \hat R_n(s, o) = \tilde R_{n}, \hat L_n(s, o) = \tilde L_n$, update rules \eqref{eq: General Inter-option Diff Q async update Q}-\eqref{eq: General Inter-option Diff Q TD error} become
\begin{align*}
    Q_{n+1}(\tilde S_n, \tilde O_n) & \doteq Q_n (\tilde S_n, \tilde O_n) + \alpha_{\nu(n, \tilde S_n, \tilde O_n)} \tilde \delta_n / L_n \text{\ , and\ } Q_{n+1}(s, o) \doteq Q_n (s, o), \forall s \neq \tilde S_n, o \neq \tilde O_n, \\
    \bar R_{n+1} & \doteq \bar R_n + \eta \alpha_{\nu(n, \tilde S_n, \tilde O_n)}  \tilde \delta_n / L_n,\\
    \tilde \delta_n & \doteq \tilde R_n - \bar R_n \tilde L_n + \max_{o'} Q_n(\tilde S_{n}', o') - Q_n(\tilde S_n, \tilde O_n),\\
    L_{n+1}(\tilde S_n, \tilde O_n) &\doteq L_{n}(\tilde S_n, \tilde O_n) + \beta_n(\tilde S_n, \tilde O_n) (\tilde L_n - L_{n}(\tilde S_n, \tilde O_n)).
\end{align*}
Now, in the planning setting, the model can produce an expected length, instead of a sampled one. And there estimating the expected length using $L_n$ is no longer needed. The above updates reduce to 
\begin{align*}
    Q_{n+1}(\tilde S_n, \tilde O_n) & \doteq Q_n (\tilde S_n, \tilde O_n) + \alpha_{\nu(n, \tilde S_n, \tilde O_n)} \tilde \delta_n / \tilde L_n \text{\ , and\ } Q_{n+1}(s, o) \doteq Q_n (s, o), \forall s \neq \tilde S_n, o \neq \tilde O_n, \\
    \bar R_{n+1} & \doteq \bar R_n + \eta \alpha_{\nu(n, \tilde S_n, \tilde O_n)}  \tilde \delta_n / \tilde L_n,\\
    \tilde \delta_n & \doteq \tilde R_n - \bar R_n \tilde L_n + \max_{o'} Q_n(\tilde S_{n}', o') - Q_n(\tilde S_n, \tilde O_n).
\end{align*}
The above update rules are our inter-option Differential Q-planning's update rules with stepsize at planning step $n$ being $\alpha_{\nu(n, \tilde S_n, \tilde O_n)}$.

We now provide a theorem, along with its proof, showing the convergence of General Inter-option Differential Q.

\begin{theorem}
Under Assumptions~\ref{assu: unichain}, \ref{assu: stepsize}, \ref{assu: asynchronous stepsize 1}, \ref{assu: asynchronous stepsize 2}, and that $0 \leq \beta_n(s, o) \leq 1$, $\sum_n \beta_n(s, o) = \infty$, and $\sum_n \beta_n^2(s, o) < \infty$, and $\beta_n(s, o) = 0$ unless $s = \hat S_n$, General Inter-option Differential Q (Equations~\ref{eq: General Inter-option Diff Q async update Q}-\ref{eq: General Inter-option Diff Q TD error}) converges, almost surely,
$Q_n$ to $q$ satisfying both \eqref{eq: SMDP Bellman optimality equation}
and
\begin{align*}
    \eta  (\sum q - \sum Q_0) = r_* - \bar R_0,
\end{align*}
$\bar R_n$ to $r_*$, and $r(\mu_n)$ to $r_*$ where $\mu_n$ is a greedy policy w.r.t. $Q_n$.
\end{theorem}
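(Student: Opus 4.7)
The plan is to recognize General Inter-option Differential Q as an instance of General RVI Q (Theorem~\ref{thm: General RVI Q}) after eliminating the auxiliary iterates $L_n$ and $\bar R_n$. First, the $L$-update is a standard Robbins--Monro estimator of $\durationmodeltarget(s,o)$; under the stated step-size conditions on $\beta_n$ and the fact that every $(s,o)$ in $\calS' \times \calO$ is updated infinitely often, $L_n(s,o) \to \durationmodeltarget(s,o)$ a.s., and $L_0 = 1$, $\hat L_n \ge 1$ guarantee $L_n \ge 1$ so division is well-defined. Second, summing \eqref{eq: General Inter-option Diff Q async update Q} over $(s,o)$ and comparing with \eqref{eq: General Inter-option Diff Q async update bar R} gives $\bar R_{n+1} - \bar R_n = \eta \sum_{s,o}(Q_{n+1}(s,o) - Q_n(s,o))$, hence the algebraic identity
\begin{equation*}
    \bar R_n \;=\; \eta \sum_{s,o} Q_n(s,o) \;+\; c_0, \qquad c_0 \doteq \bar R_0 - \eta \sum_{s,o} Q_0(s,o),
\end{equation*}
which lets me eliminate $\bar R_n$ from $\delta_n$ entirely.

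Substituting the identity above into $\delta_n(s,o)/L_n(s,o)$ and rewriting in the form of \eqref{eq: General RVI Q async update}, I identify the General RVI Q ingredients by
\begin{align*}
    r(s,o) &\doteq \rewardmodeltarget(s,o)/\durationmodeltarget(s,o) - c_0,\\
    g(Q)(s,o) &\doteq \big(1 - 1/\durationmodeltarget(s,o)\big) Q(s,o) + \big(1/\durationmodeltarget(s,o)\big) \sum_{s'} \dynamicsmodeltarget(s'\mid s,o) \max_{o'} Q(s',o'),\\
    f(Q) &\doteq \eta \sum_{s,o} Q(s,o),
\end{align*}
with $R_n, G_n, F_n$ the obvious single-sample versions, and with an error term $\epsilon_n$ that collects (a) the replacement of $\durationmodeltarget(s,o)$ by $L_n(s,o)$ in the denominator and in the term $\bar R_n L_n(s,o)$, and (b) the bias arising from $L_n \neq \durationmodeltarget$. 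Verifying Assumptions~\ref{assu: g}--\ref{assu: f} is routine: $g$ is a state-option-dependent convex combination of the identity and the standard Bellman max operator, both of which are max-norm and span-norm non-expansions and commute with constant shifts; $f$ is linear with $u = \eta|\calS \times \calO|$. Assumption~\ref{assu: variance of Martingale difference} follows from boundedness of $\hat R_n$ and $\hat L_n$ under \cref{assu: unichain}. For Assumption~\ref{assu: General RVI Q unique solution}, multiplying $r(s,o) - \bar r + g(q)(s,o) - q(s,o) = 0$ by $\durationmodeltarget(s,o)$ recovers exactly \eqref{eq: SMDP Bellman optimality equation} with reward rate $c_0 + \bar r$, and Schweitzer \& Federgruen's uniqueness result under \cref{assu: unichain} forces $\bar r = r_\infty \doteq r_* - c_0$ uniquely and $q$ unique up to an additive constant.

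Applying Theorem~\ref{thm: General RVI Q} then yields $Q_n \to q_\infty$ a.s., where $q_\infty$ is pinned down uniquely by $f(q_\infty) = r_\infty$, i.e.\ $\eta \sum q_\infty = r_* - c_0$, which is precisely the offset condition $\eta(\sum q_\infty - \sum Q_0) = r_* - \bar R_0$. Plugging $Q_n \to q_\infty$ into the algebraic identity of Step 1 gives $\bar R_n = \eta \sum Q_n + c_0 \to r_*$. Finally, for $r(\mu_n) \to r_*$, since $Q_n(s,\cdot) \to q_\infty(s,\cdot)$ for every $s \in \calS'$ and any greedy policy w.r.t.\ $q_\infty$ is optimal in the SMDP (Theorem~\ref{thm: intra-option bellman equations}-style argument), after sufficiently large $n$ every $\mu_n$ is optimal on $\calS'$ and hence achieves reward rate $r_*$ (transient states do not affect reward rate).

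\textbf{Main obstacle.} The most delicate point is verifying Assumption~\ref{assu: epsilon} on the error term $\epsilon_n$ that absorbs the discrepancy between the sampled denominator $L_n(s,o)$ and the true $\durationmodeltarget(s,o)$, and between $\bar R_n L_n(s,o)$ and $(\eta \sum Q_n + c_0) \durationmodeltarget(s,o)$. I need to show both that $\|\epsilon_n\|_\infty \le K(1 + \|Q_n\|_\infty)$ — which follows from $L_n \ge 1$, boundedness of $\hat R_n$, and the algebraic relation bounding $\bar R_n$ by $\|Q_n\|$ — and that $\epsilon_n \to 0$ in probability, which follows from $L_n \to \durationmodeltarget$ a.s.\ together with uniform integrability. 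The coupling between the $L$-iteration and the $(Q,\bar R)$-iteration is benign because $L_n$ evolves autonomously from the option-length stream, independent of the $Q$-iterates, so no genuine two-timescale analysis is required; the remaining work is careful bookkeeping to keep the reduced update in the exact form of \eqref{eq: General RVI Q async update} with an admissible $\epsilon_n$.
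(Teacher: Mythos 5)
Your proposal matches the paper's proof essentially step for step: the same elimination of $\bar R_n$ via the cumulative-increment identity $\bar R_n = \eta\sum Q_n + c_0$, the same reduction to General RVI Q with the same choices of $r$, $g$, $f$ and with the discrepancy between $L_n(s,o)$ and the true expected length absorbed into an asymptotically vanishing $\epsilon_n$, and the same verification of Assumption~\ref{assu: General RVI Q unique solution} by multiplying through by the expected option length to recover \eqref{eq: SMDP Bellman optimality equation}. The only (immaterial) divergence is the final step: the paper obtains $r(\mu_n)\to r_*$ from the span bound $\abs{r_*-r(\mu_n)}\le sp(TQ_n-Q_n)$ together with the continuous mapping theorem, whereas you argue that the greedy action sets eventually stabilize onto greedy policies for $q_\infty$; both arguments are valid.
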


\begin{proof}
At each step, the increment to $\bar R_n$ is $\eta$ times the increment to $Q_n$ and $\sum Q_n$. Therefore, the cumulative increment can be written
\begin{align}
    \bar R_n - \bar R_0 &= \eta  \sum_{i = 0}^{n-1} \sum_{s, o} \alpha_{\nu(i, s, o)} \delta_i(s, o)/L_i(s, o) I\{(s, o) \in Y_i\} \nonumber  \\
    & = \eta \left(\sum Q_n - \sum Q_0 \right) \nonumber \\
    \implies \bar R_n &= \eta \sum Q_n - \eta \sum Q_0 + \bar R_0 = \eta \sum Q_n - c \label{eq: General Inter-option Diff Q relation between bar R and Q} ,\\
    \text{ where } c &\doteq \eta \sum Q_0 - \bar R_0.
\end{align}

Now substituting $\bar R_n$ in \eqref{eq: General Inter-option Diff Q async update Q} with \eqref{eq: General Inter-option Diff Q relation between bar R and Q}, we have $\forall s \in \calS, o \in \calO$: 
\begin{align}
    & Q_{n+1}(s, o) = Q_{n}(s, o) + \alpha_{\nu(n, s, o)} \nonumber \\
    & \frac{\hat R_n(s, o) - L_n(s, o)(\eta \sum Q_n - c) + \max_{o'} Q_n(\hat S_n'(s, o), o') - Q_n(s, o) }{L_n(s, o)} I\{(s, o) \in Y_n\}  \nonumber\\
    & = Q_{n}(s, o) + \alpha_{\nu(n, s, o)} \nonumber \\
    & \left( \frac{\hat R_n(s, o) - l_n(s, o)(\eta \sum Q_n - c) + \max_{o'} Q_n(\hat S_n'(s, o), o') - Q_n(s, o) }{l(s, o)} + \epsilon_n(s, o) \right ) I\{(s, o) \in Y_n\},
    \label{eq: General Inter-option Diff Q transformed async single update}
\end{align}
where $l(s, o)$ is the expected length of option $o$, starting from state $s$, and $\epsilon_n(s, o) \doteq (\hat R_n(s, o) - L_n(s, o)(\eta \sum Q_n - c) + \max_{o'} Q_n(\hat S_n'(s, o), o') - Q_n(s, o) ) / L(s, o) - (\hat R_n(s, o) - l(s, o)(\eta \sum Q_n - c) + \max_{o'} Q_n(\hat S_n'(s, o), o') - Q_n(s, o))/l(s, o)$.

Standard stochastic approximation result can be applied to show that $L_n$ converges to $l$. Further, it can be seen that $\epsilon_n$ satisfies that $\norm{\epsilon_n}_\infty \leq K (1 + \norm{Q_n})$ for some positive $K$ and, by continuous mapping theorem, converges to 0 almost surely (and thus in probability).

We now show that \eqref{eq: General Inter-option Diff Q transformed async single update} is a special case of \eqref{eq: General RVI Q async update}. To see this point, let 
\begin{align*}
    i & = (s, o),\\
    R_n(i) &= \frac{\hat R_n(s, o)}{l(s, o)} + c,\\
    G_n(Q_n)(i) &= \frac{ \max_{o'}Q_n(\hat S_n'(s, o), o')}{l(s, o)} + \frac{l(s, o) - 1}{l(s, o)} Q_n(s, o),\\
    F(Q_n)(i) &= \eta \sum Q_n, \\
    \epsilon_n(i) &= \epsilon_n(s, o).
\end{align*}
We now verify the assumptions of Theorem~\ref{thm: General RVI Q} for Inter-option General Differential Q. \Cref{assu: g} and \Cref{assu: f} can be verified easily.  \Cref{assu: variance of Martingale difference} satisfies because the MDP is finite. \cref{assu: epsilon} is satisfied as shown above. Assumption~\ref{assu: stepsize}-\ref{assu: asynchronous stepsize 2} are satisfied due to assumptions of the theorem being proved. \Cref{assu: General RVI Q unique solution} is satisfied because
\begin{align*}
    &r(i) - \bar r + g(q)(i) - q(i) \\
    & = \bbE[R_n(i) - \bar r + G_n(q)(i) - q(i)] \\
    & = \bbE\left[\frac{\hat R_n(s, o) + c l(s, o) - \bar r l(s, o) + \max_{o'} q(\hat S_n'(s, o), o') +  (l(s, o) - 1)q(s, o) - l(s, o) q(s, o)}{l(s, o)} \right]\\
    & = \frac{\bbE \left[\hat R_n(s, o) + c l(s, o) - \bar r l(s, o) + \max_{o'} q(\hat S_n'(s, o), o') - q(s, o) \right]}{l(s, o)} .
\end{align*}
From \eqref{eq: SMDP Bellman optimality equation} we know if the above equation equals to 0, then under \cref{assu: unichain}, $\bar r = r_* + c$ is the unique solution and the solutions for $q$ form a set $q = q_* + ce$.

All the assumptions are verified and thus from \cref{thm: General RVI Q} we conclude that $Q_n$ converges to a point satisfying $\eta \sum q = r_* + c = r_* + \eta \sum Q_0 - \bar R_0$ and $\bar R_n = \eta \sum Q_n - c$ to $\eta \sum q - c = r_* + c - c = r_*$.

Finally, in order to show $r(\mu_n) \to r_*$, we first extend Theorem 8.5.5 by Puterman (1994) to deal with SMDP.
\begin{lemma}[]\label{lemma: bound of mu Q} Under \cref{assu: unichain}, $\forall Q \in \bbR^{\abs{\calS \times \calO}}$
\begin{align*}
    \min_{s, o} TQ(s, o) \leq r(\mu_{Q}) \leq r_* \leq \max_{s, o} TQ(s, o),
\end{align*}
where $TQ(s, o) \doteq \sum_{s', r, l} \hat p(s', r, l \mid s, o) (r + \max_{o'} Q(s', o'))$ and $\mu_Q$ denotes a greedy policy w.r.t. $Q$.
\end{lemma}
\begin{proof}
Note that
\begin{align*}
    r(\mu_{Q}) = \sum_{s', r, l} \hat p(s', r, l \mid s, o) (r + \sum_{o'} \mu_Q (o' \mid s') Q(s', o') - Q(s, o)).
\end{align*}

Therefore
\begin{align*}
    \min_{s, o} (TQ_n(s, o) - Q_n(s, o)) \leq r(\mu_n) \leq r_* \leq \max_{s, o} (TQ_n(s, o) - Q_n(s, o)) \\
    \implies
    \abs{r_* - r(\mu_n)} \leq sp(TQ_n - Q_n).
\end{align*}
\end{proof}

Because $Q_n \to q_\infty$ a.s., and $sp(TQ_n - Q_n)$ is a continuous function of $Q_n$, by continuous mapping theorem, $sp(TQ_n - Q_n) \to sp(Tq_\infty - q_\infty) = 0$ a.s. Therefore we conclude that $r(\mu_n) \to r_*$.
\end{proof}

The convergence of General Inter-option Differential Q that we showed above implies Theorem~\ref{thm: Inter-option Differential Methods} when there are no transient states ($\calS' = \calS$) and thus all states can be visited for an infinite number of times. When $\calS' \subset \calS$, option values associated states in $\calS - \calS'$ do not converge to a solution of the Bellman equation. However, the option values associated with recurrent states $\calS'$ still converge to a solution of the Bellman equation, the reward rate estimator converges to $r_*$, and the $r(\mu_n)$ converges to $r_*$. The point that option values (associated with recurrent states) converge to depends on the sample trajectory. Specifically, it depends on the transient states visited in the trajectory.

\subsection{Theorem~\ref{thm: intra-option bellman equations}}\label{app: proof of intra-option value equations}

The proof for the intra-option evaluation equation is simple. First note that these equations can be written in the vector form:
\begin{align*}
    0 = r - \bar r e + (P_\mu - I) q,
\end{align*}
where $r(s, o) = \bbE[R_{t+1} \mid S_t = s, O_t = o] $, $P_\mu((s, o), (s', o')) \doteq \Pr(S_{t+1} = s', O_{t+1} = o' | S_t = s, O_t = o, \mu) = \beta(s', o) \mu(o' \mid s') + (1 - \beta(s', o)) \bbI(o = o')$, and $e$ is a all-one vector.
Intuitively, the intra-option evaluation equation can be viewed as the evaluation equation for some average-reward MRP with reward and dynamics being defined as $r$ and $P_\mu$.

By Theorem 8.2.6 and Corollary 8.2.7 in Puterman (1994), the intra-option evaluation equation part in \cref{thm: intra-option bellman equations} is shown as long as the Markov chain associated with $P_\mu$ is unichain. 
Note that by our \cref{assu: unichain}, there is only one recurrent class of states under any policy. Therefore no matter what the start state-option pair is, the agent will enter in the same recurrent class of states. Therefore we have, for every state $\bar s$ in the recurrent class and an option $\bar o$ such that $\mu(\bar o \mid \bar s) > 0$, the MDP visits $(\bar s, \bar o)$ an an infinite number of times. This shows that any two state-option pairs can not be in two separate recurrent sets of state-option pairs. Therefore the Markov chain associated with $P_\mu$ is unichain.

The proof for the Intra-option Optimality Equations is more complicated. First, similar as what we know in the discounted primitive action case, we have the following lemma for the discounted option case.

\begin{lemma}\label{existence of stationary discount optimal policy}
For every $0 < \gamma < 1$, there exists a stationary deterministic discount optimal policy.
\end{lemma}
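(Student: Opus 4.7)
The plan is to reduce the claim to the classical existence theorem for discounted stationary optimal policies, applied to the SMDP induced at option boundaries. Once the option set $\calO$ is fixed, the process of selecting options and running each to termination defines a discounted SMDP $\hat\calM = (\calS, \calO, \hat\calL, \hat\calR, \hat p)$ whose ``primitive'' step is one complete option execution. For $\gamma\in(0,1)$ and any policy $\mu$ over options, I would define the discounted option-value function
\[
q_\mu^\gamma(s, o) \doteq \bbE_\mu\Big[\,\sum_{k=0}^\infty \gamma^k R_{t+k+1}\ \Big|\ S_t = s,\, O_t = o\,\Big],
\]
and the corresponding discounted Bellman optimality operator at option boundaries,
\[
(TQ)(s, o) \doteq \bbE\Big[\,\sum_{i=1}^{\hat L_n}\gamma^{i-1} R_{T_n + i} \;+\; \gamma^{\hat L_n}\max_{o'} Q(\hat S_{n+1}, o')\ \Big|\ \hat S_n = s,\,\hat O_n = o\,\Big].
\]

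First I would show $T$ is a contraction in the max norm on $\bbR^{\abs{\calS}\times\abs{\calO}}$. Since $\hat L_n \geq 1$ almost surely and $\gamma<1$, for any $Q_1, Q_2$,
\[
\abs{(TQ_1)(s,o) - (TQ_2)(s,o)} \;\leq\; \bbE[\gamma^{\hat L_n}\mid \hat S_n=s,\hat O_n=o]\,\norm{Q_1-Q_2}_\infty \;\leq\; \gamma\,\norm{Q_1-Q_2}_\infty.
\]
The paper's standing assumption that expected option lengths are finite ensures $T$ maps $\bbR^{\abs{\calS}\times\abs{\calO}}$ into itself. By Banach's fixed-point theorem, $T$ has a unique fixed point $q^*$.

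Next I would define $\mu^*(s)\in\argmax_{o} q^*(s,o)$, which is a well-defined stationary deterministic policy over options because $\calO$ is finite. To show optimality, I would verify: (i) $q_{\mu^*}^\gamma = q^*$, by noting that $T_{\mu^*} q^* = T q^* = q^*$ (the greedy policy attains the max), so $q^*$ is the unique fixed point of the contraction $T_{\mu^*}$ and hence equals $q_{\mu^*}^\gamma$; and (ii) $q_\mu^\gamma \leq q^*$ for every (possibly history-dependent randomized) policy $\mu$, by iterating the pointwise bound $T_\mu q^* \leq T q^* = q^*$ and passing to the limit along the sequence of option boundaries.

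The main obstacle is step (ii): because the effective discount per ``step'' of the induced SMDP is the random quantity $\gamma^{\hat L_n}$ rather than a deterministic $\gamma$, the standard one-step policy-improvement telescoping has to be carried out with care, and one must justify the interchange of limit and expectation when passing from the $N$-option truncation to the infinite horizon. Under the finite-expected-length assumption this is routine by dominated convergence, but it is the one nontrivial ingredient; alternatively, one can simply invoke Puterman's (1994) discounted SMDP existence theorem (Theorem~11.3.2) applied to $\hat\calM$, which packages exactly this argument.
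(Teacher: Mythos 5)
Your proposal is correct and follows essentially the same route as the paper, whose proof is a one-line appeal to Puterman's classical discounted existence results (Theorem 6.2.10 and Proposition 4.4.3) adapted to the option-induced SMDP. You simply make that adaptation explicit — the Bellman operator at option boundaries is a $\gamma$-contraction because $\hat L_n \geq 1$ gives effective discount $\bbE[\gamma^{\hat L_n}] \leq \gamma$, and the greedy policy with respect to its fixed point is the desired stationary deterministic optimal policy — which matches the intended argument (and your fallback citation of Puterman's discounted SMDP theorem is the same reduction).
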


The proof uses similar arguments as Theorem 6.2.10 and Proposition 4.4.3 by Puterman (1994).

Now choose a sequence of discount factors $\{\gamma_n\}$, $0 \leq \gamma_n < 1$ with  the  property that $\gamma_n \uparrow 1$. By lemma \ref{existence of stationary discount optimal policy}, for each $\gamma_n$, there exists a stationary discount optimal policy. Because the total number of Markov deterministic policies is finite, we can choose a subsequence $\{\gamma'_n\}$ for which the same Markov deterministic policy, $\mu$, is discount optimal for all $\gamma'_n$. Denote this subsequence by $\{\gamma_n\}$. Because $\mu$ is discount optimal for $\gamma_n, \forall n$, $q_*^{\gamma_n} = q_{\mu}^{\gamma_n}, \forall n$. By intra-option optimality equations in the discounted case (Sutton et al., 1999), for all $s \in \calS,  o \in \calO$, 
\begin{align}
    0 & = \sum_{a} \pi(a | s, o)\sum_{s', r} p(s', r | s, a) \left(r + \gamma_n \beta(s', o) q_{\mu}^{\gamma_n}(s', \mu(s')) + \gamma_n (1 - \beta(s', o)) q_{\mu}^{\gamma_n}(s', o)\right) - q_{\mu}^{\gamma_n}(s, o) \nonumber\\
    & = \sum_{a} \pi(a | s, o)\sum_{s', r} p(s', r | s, a) \left (r + \gamma_n \beta(s', o) \max_{o'} q_\mu^{\gamma_n}(s', o') + \gamma_n (1 - \beta(s', o)) q_\mu^{\gamma_n}(s', o) \right ) - q_\mu^{\gamma_n}(s, o). \label{average reward intra-optimality equation proof 2}
\end{align}
By corollary 8.2.4 by Puterman (1994),
\begin{align} \label{laurent series expansion}
    q_{\mu}^{\gamma_n} = (1 - \gamma_n)^{-1} r(\mu) e + q_{\mu} + f(\gamma_n),
\end{align}
where $r(\mu)$ and $q_{\mu}$ are the reward rate and differential value function under policy $\mu$, and $f(\gamma)$ is a function of $\gamma$ that converges to 0 as $\gamma \uparrow 1$.

Substituting \eqref{laurent series expansion} into \eqref{average reward intra-optimality equation proof 2}, we have
\begin{align*}
    0 & = \sum_a \pi(a | s, o) \sum_{s', r} p(s', r | s, a) (r + \gamma_n \beta(s', o) \max_{o'} [(1 - \gamma_n)^{-1} r(\mu) + q_\mu(s', o') + f(\gamma_n, s', o')] \\ 
    & + \gamma_n (1 - \beta(s', o)) [(1 - \gamma_n)^{-1} r(\mu) + q_\mu(s', o) + f(\gamma_n, s', o)]) \\
    & - [(1 - \gamma_n)^{-1} r(\mu) + q_\mu(s, o) + f(\gamma_n, s, o)] \\
    & = \sum_a \pi(a | s, o) \sum_{s', r} p(s', r | s, a) (r - r(\mu) + \gamma_n \beta(s', o) \max_{o'} [q_\mu(s', o') + f(\gamma_n, s', o')] \\ 
    & + \gamma_n (1 - \beta(s', o)) [q_\mu(s', o) + f(\gamma_n, s', o)]) \\
    & - [ q_\mu(s, o) + f(\gamma_n, s, o)] \\
    & = \sum_a \pi(a | s, o) \sum_{s', r} p(s', r | s, a) (r - r(\mu) + \beta(s', o) \max_{o'} [q_\mu(s', o') + f(\gamma_n, s', o')] \\ 
    & + (\gamma_n - 1) \beta(s', o) \max_{o'} [q_\mu(s', o') + f(\gamma_n, s', o')] \\ 
    & + (1 - \beta(s', o)) [q_\mu(s', o) + f(\gamma_n, s', o)] \\
    & + (\gamma_n - 1) (1 - \beta(s', o)) [q_\mu(s', o) + f(\gamma_n, s', o)]\\
    & - [ q_\mu(s, o) + f(\gamma_n, s, o)].
\end{align*}
Note that $(\gamma - 1) \beta(s', o) \max_{o'} [q_\mu(s', o') + f(\gamma, s', o')]$ and $(\gamma - 1) (1 - \beta(s', o)) [q_\mu(s', o) + f(\gamma, s', o)]$ both
converge to 0 as $\gamma \uparrow 1$. 

Now take $n \to \infty$, then $\gamma_n \uparrow 1$, we have
\begin{align*}
    0 = \sum_a \pi(a | s, o) \sum_{s', r} p(s', r | s, a) \left(r - r(\mu) + \beta(s', o) \max_{o'} q_\mu(s', o') + (1 - \beta(s', o)) q_\mu(s', o)\right) - q_\mu(s, o).
\end{align*}

We see that  $\bar r = r(\mu)$ and $q = q_\mu$ is a solution of \eqref{eq: Intra-option Equations}-\eqref{eq: Intra-option Optimality Equations}.

Now we show that the solution for $\bar r$ is unique. Define
\begin{align*}
    B(\bar r, q) \doteq \sum_{a} \pi(a | s, o)\sum_{s', r} p(s', r | s, a) \left (r - \bar r + \beta(s', o) \max_{o'} q(s', o') + (1 - \beta(s', o)) q(s', o) \right ) - q(s, o).
\end{align*}
First we show if $B(\bar r, q) = 0$, then $\bar r \geq r_*$.
\begin{align*}
    0 & = B(\bar r, q)\\
    & = \sum_{a} \pi(a | s, o)\sum_{s', r} p(s', r | s, a)(r - \bar r + \beta(s', o) \max_{o'} q(s', o') + (1 - \beta(s', o)) q(s', o)) - q(s, o) \\
    & \geq \sup_{\mu \in \Pi^{MR}} \sum_{a} \pi(a | s, o)\sum_{s', r} p(s', r | s, a) \\
    & \left (r - \bar r + \beta(s', o) \sum_{o'} \mu(o' | s') q(s', o') + (1 - \beta(s', o)) q(s', o) \right) - q(s, o),
\end{align*}
where $\Pi^{MR}$ denotes the set of all Markov randomized policies. In vector form, the above equation can be written as:
\begin{align*}
    0 \geq \sup_{\mu \in \Pi^{MR}} \{r - \bar r e + (P_\mu - I) q\}.
\end{align*}
Therefore $\forall \mu \in \Pi^{MR}$,
\begin{align*}
    \bar r e \geq r + (P_\mu - I) q.
\end{align*}

Apply $P_\mu$ to both sides,
\begin{align*}
    P_\mu \bar r e & \geq P_\mu r + P_\mu(P_\mu - I) q,\\
    \bar r e & \geq P_\mu r + P_\mu(P_\mu - I) q .
\end{align*}
Repeating this process we have:
\begin{align*}
    \bar r e \geq P_\mu^n r + P_\mu^n(P_\mu - I) q.
\end{align*}
Summing these expressions from $n=0$ to $n=N-1$ we have:
\begin{align*}
    N \bar r e \geq \sum_{n=0}^{N-1} (P_\mu^n r + P_\mu^n(P_\mu - I) q) = \sum_{n=0}^{N-1} P_\mu^n r + (P_\mu^N - P_\mu^{N-1}) q.
\end{align*}
Because $\lim_{N \to \infty} \frac{1}{N} (P_\mu^N - P_\mu^{N-1}) q = 0$,
\begin{align*}
    \bar r e \geq \lim_{N \to \infty} \frac{1}{N} \sum_{n=0}^{N-1} P_\mu^n r = r(\mu) e,
\end{align*}
for all $\mu \in \Pi^{MR}$. Therefore $\bar r \geq r_*$.

Then we show that if $0 = B(\bar r, q)$ then $\bar r \leq r_*$. As we proved above, if $(\bar r,  q)$ satisfies that $0 = B(\bar r, q)$ then there exists a policy $\mu$ such that $\bar r e = r + (P_\mu - I) q$ is true. Therefore,
\begin{align*}
    P_\mu^n \bar r e &= P_\mu^n r + P_\mu^n (P_\mu - I) q,\\
    \lim_{N \to \infty} \frac{1}{N} \sum_{n=0}^{N-1} P_\mu^n \bar r e &=  \lim_{N \to \infty} \frac{1}{N} \sum_{n=0}^{N-1} (P_\mu^n r + P_\mu^n (P_\mu - I) q),\\
    \bar r e &= \lim_{N \to \infty} \sum_{n=0}^{N-1} P_\mu^n r = r(\mu) e \leq r_* e.
\end{align*}
Therefore $\bar r \leq r_*$. Combining $\bar r \geq r_*$ and $\bar r \leq r_*$ we have $\bar r = r_*$.

Finally, we show that the solution for $q$ is unique only up to a constant. Note that one could iteratively replace $q$ in the r.h.s. of the intra-option Optimality equation \eqref{eq: Intra-option Equations}-\eqref{eq: Intra-option Optimality Equations} by the entire r.h.s. of the intra-option Optimality equation, resulting to the inter-option Optimality equation \eqref{eq: SMDP Bellman optimality equation}. Therefore any solution of \eqref{eq: Intra-option Equations}-\eqref{eq: Intra-option Optimality Equations} must be a solution of \eqref{eq: SMDP Bellman optimality equation}. But we know that the solutions for $q$ in \eqref{eq: SMDP Bellman optimality equation} is unique only up to a constant. Therefore the solutions for $q$ in \eqref{eq: Intra-option Equations}-\eqref{eq: Intra-option Optimality Equations} can not differ by a non-constant. Further, it is easy to see that if $q$ is a solution, then $q + ce, \forall c$ is also a solution. The theorem is proved.

$\Box$

\subsection{Theorem~\ref{thm: Intra-option Differential Methods}} \label{app: intra-option value}

For simplicity, we will only provide formal theorems and proofs for our \emph{control} learning and planning algorithms. The formal theorems and proofs for our prediction algorithms are similar to those for the control algorithms and are thus omitted.  To this end, we first provide a general algorithm that includes both learning and planning control algorithms. We call it \emph{General Intra-option Differential Q}. We first formally define it and then explain why both Intra-option Differential Q-learning and Intra-option Differential Q-planning are special cases of General Intra-option Differential Algorithm. We then provide assumptions and the convergence theorem of the general algorithm. The theorem would lead to convergence of the special cases. Finally, we provide a proof for the theorem.

Given an MDP $\calM \doteq (\calS, \calA, \calR, p)$ and a set of options $\calO$, for each state $s \in \calS$, option $o \in \calO$, a reference option $\bar o$, and discrete step $n \geq 0$, let $A_n(s, \bar o) \sim \pi(\cdot \mid s, \bar o)$, $R_n(s, A_n(s, \bar o)),  S'_n(s, A_n(s, \bar o)) \sim p(\cdot, \cdot \mid s, A_n(s, \bar o))$ denote, given state-option pair $(s, \bar o)$, a sample of the chosen action and the resulting state and reward. 
We hypothesize a set-valued process $\{Y_n\}$ taking
values in the set of nonempty subsets of $\calS \times \calO$ with the interpretation: $Y_n = \{(s, o): (s, o)$ component of $Q$ was updated at time $n\}$. Let $\nu(n, s, o) \doteq \sum_{k=0}^n I\{(s, o) \in Y_k\}$, where $I$ is the indicator function. Thus $\nu(n, s, o) =$ the number of times the $(s, o)$ component was updated up to step $n$. In addition, we hypothesize a set-valued process $\{Z_n\}$ taking
values in the set of nonempty subsets of $\calO$ with the interpretation: $Z_n = \{\bar o: \bar o$ component was visited at time $n\}$. $\sum_{\bar o} I\{\bar o \in Z_n\}$ means the number of reference options used at update step $n$. For simplicity, we assume this number is always 1. 
\begin{assumption} \label{assu: num of reference options > 1}
$\sum_{\bar o} I\{\bar o \in Z_n\} = 1$ for all discrete $n \geq 0$.
\end{assumption}

The update rules of General Intra-option Differential Q are 
\begin{align}
Q_{n+1}(s, o) & \doteq Q_n(s, o) + \alpha_{\nu(n, s, o)} \sum_{\bar o} \rho_n(s, o, \bar o) \delta_n(s, o, \bar o)  I\{(s, o) \in Y_n\} I\{\bar o \in Z_n\}, \quad \forall s \in \calS, \text{ and } o \in \calO \label{eq: General Intra-option Diff Q async update Q}\\
\bar R_{n+1} & \doteq \bar R_n + \eta \sum_{s, o} \alpha_{\nu(n, s, o)} \sum_{\bar o} \rho_n(s, o, \bar o) \delta_n(s, o, \bar o) I\{(s, o) \in Y_n\} I\{\bar o \in Z_n\}, \label{eq: General Intra-option Diff Q async update bar R}
\end{align}
where $\rho_n(s, o, \bar o) \doteq \pi(A_n(s, \bar o) \mid s, o) / \pi(A_n(s, \bar o) \mid s, \bar o)$ and
\begin{align}\label{eq: General Intra-option Diff Q TD error}
    \delta_n(s, o, \bar o) & \doteq R_{n}(s, A_n(s, \bar o)) - \bar R_n + \beta(S'_n(s, A_n(s, \bar o)), o) \max_{o'} Q_n(S'_n(s, A_n(s, \bar o)), o') \nonumber\\
    & + (1 - \beta(S'_n(s, A_n(s, \bar o)), o)) Q_n(S'_n(s, A_n(s, \bar o)), o) - Q_n(s, o)
\end{align}
is the TD error.

Here $\alpha_{\nu(n, s, o)}$ is the stepsize at step $n$ for state-option-option triple $(s, o)$. The quantity $\alpha_{\nu(n, s, o)}$ depends on the sequence $\{\alpha_n\}$, which is an algorithmic design choice, and also depends on the visitation of state-option pairs $\nu(n, s, o)$. To obtain the stepsize, the algorithm could maintain a $\vert \calS \times \calO \vert$-size table counting the number of visitations to each state-option pair, which is exactly $\nu(\cdot, \cdot, \cdot)$. Then the stepsize $\alpha_{\nu(n, s, o)}$ can be obtained as long as the sequence $\{\alpha_n\}$ is specified.

Now we show Intra-option Differential Q-learning and Intra-option Differential Q-planning are special cases of General Intra-option Differential Q. Consider a sequence of real experience $\ldots, S_t, O_t, A_t, R_{t+1}, S_{t+1}, \ldots$. By choosing step $n$ = time step $t$,
\begin{align*}
    Y_n(s, o) &= 1 \text{, if } s = S_t \\
    Y_n(s, o) &= 0 \text{ otherwise,}\\
    Z_n(\bar o) & = 1 \text{, if } \bar o = O_t \\
    Z_n(\bar o) & = 0 \text{ otherwise,}
\end{align*}
and $A_n(S_t, O_t) = A_{t}$, $S'_n(S_t, A_n(S_t, O_t)) = S_{t+1}, R_n(S_t, A_n(S_t, O_t)) = R_{t+1}$, update rules \eqref{eq: General Intra-option Diff Q async update Q}, \eqref{eq: General Intra-option Diff Q async update bar R}, and \eqref{eq: General Intra-option Diff Q TD error} become
\begin{align*}
    Q_{t+1}(S_t, o) & \doteq Q_t (S_t, o) + \alpha_{\nu(t, S_t, o)} \rho_t(o) \delta_t(o),  \forall o \in \calO \text{\ , and\ } Q_{t+1}(s, o) \doteq Q_t (s, o), \forall o \in \calO \text{ and } \forall s \neq S_t, \\
    \bar R_{t+1} & \doteq \bar R_t + \eta \sum_o \alpha_{\nu(t, S_t, o)} \rho_t(o) \delta_t(o) ,\\
    \delta_t(o) & \doteq R_{t+1} - \bar R_t + \beta(S_{t+1}, o) \max_{o'} Q_t(S_{t+1}, o') + (1 - \beta(S_{t+1}, o)) Q_t(S_{t+1}, o) - Q_t(S_t, o),
\end{align*}
where $\rho_t(o) \doteq \pi(A_t \mid S_t, o) / \pi(A_t \mid S_t, O_t)$. The above equations are Intra-option Differential Q-learning's update rules (Equations~\ref{eq: Intra-option Differential TD-learning Q}, \ref{eq: Intra-option Differential TD-learning R bar}, \ref{eq: Intra-option Differential Q-learning TD error}) with stepsize at time $t$ being $\alpha_{\nu(t, S_t, o)}$ for each option $o$.

If we consider a sequence of simulated experience $\ldots, \tilde S_n, \tilde O_n, \tilde A_n, \tilde R_{n}, \tilde S_{n}', \ldots$, by choosing step $n =$ planning step $n$,
\begin{align*}
    Y_n(s, o) &= 1 \text{, if } s = \tilde S_n \\
    Y_n(s, o) &= 0 \text{ otherwise,}\\
    Z_n(\bar o) & = 1 \text{, if } \bar o = \tilde O_n \\
    Z_n(\bar o) & = 0 \text{ otherwise,}
\end{align*}
and $A_n(\tilde S_n, \tilde O_n) = \tilde A_{n}$, $S'_n(\tilde S_n, A_n(\tilde S_n, \tilde O_n)) = \tilde S_{n}', R_n(\tilde S_n, A_n(\tilde S_n, \tilde O_n)) = \tilde R_{n}$, update rules \eqref{eq: General Intra-option Diff Q async update Q}, \eqref{eq: General Intra-option Diff Q async update bar R}, and \eqref{eq: General Intra-option Diff Q TD error} become
\begin{align*}
    Q_{n+1}(\tilde S_n, o) & \doteq Q_n (\tilde S_n, o) + \alpha_{\nu(n, \tilde S_n, o)} \rho_n(o) \delta_n(o) , \forall o \in \calO \text{\ , and\ } Q_{n+1}(s, o) \doteq Q_n (s, o), \forall s \neq \tilde S_n, \forall o \in \calO\\
    \bar R_{n+1} & \doteq \bar R_n + \eta \sum_o \alpha_{\nu(n, \tilde S_n, o)} \rho_n(o) \delta_n(o) ,\\
    \delta_n(o) & \doteq \tilde R_{n} - \bar R_n + \beta(\tilde S_{n}', o) \max_{o'} Q_n(\tilde S_{n}', o') + (1 - \beta(\tilde S_{n}', o)) Q_n(\tilde S_{n}', o) - Q_n(\tilde S_n, o),
\end{align*}
where $\rho_n(o) \doteq \pi(A_n \mid S_n, o) / \pi(A_n \mid S_n, O_n)$. The above equations are Intra-option Differential Q-planning's update rules (Equations~\ref{eq: Intra-option Differential TD-learning Q}, \ref{eq: Intra-option Differential TD-learning R bar}, \ref{eq: Intra-option Differential Q-learning TD error}) with stepsize at planning step $n$ being $\alpha_{\nu(n, S_n, o)}$ for each option $o$.

Finally, note that for both Intra-option Differential Q-learning and Q-planning algorithms, because for each time step $t$ or update step $n$, there is only one option which is actually chosen to generate data, \cref{assu: num of reference options > 1} is satisfied.

\begin{theorem}
Under Assumptions~\ref{assu: unichain}, \ref{assu: stepsize}, \ref{assu: asynchronous stepsize 1}, \ref{assu: asynchronous stepsize 2}, \ref{assu: num of reference options > 1}, General Intra-option Differential Q (Equations~\ref{eq: General Intra-option Diff Q async update Q}-\ref{eq: General Intra-option Diff Q TD error}) converges, almost surely,
$Q_n$ to $q$ satisfying both \eqref{eq: Intra-option Equations}-\eqref{eq: Intra-option Optimality Equations}
and
\begin{align}
    \eta  (\sum q - \sum Q_0) = r_* - \bar R_0,
\end{align}
$\bar R_n$ to $r_*$, and $r(\mu_n)$ to $r_*$ where $\mu_n$ is a greedy policy w.r.t. $Q_n$.
\end{theorem}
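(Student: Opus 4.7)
The plan is to reduce the theorem to the General RVI Q result (Theorem~\ref{thm: General RVI Q}) in essentially the same way the inter-option convergence proof did, but now identifying the $g$ map with an intra-option Bellman operator rather than an SMDP one. First I would exploit the tight coupling between the two update rules: since the increment to $\bar R_n$ at every step is exactly $\eta$ times the sum of increments to the entries of $Q_n$, telescoping yields
\begin{align*}
    \bar R_n = \eta \textstyle\sum Q_n - c, \qquad c \doteq \eta \textstyle\sum Q_0 - \bar R_0.
\end{align*}
Substituting this back into \eqref{eq: General Intra-option Diff Q async update Q} eliminates $\bar R_n$ from the $Q$ recursion and turns it into a pure stochastic approximation on $Q$.

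Next I would cast the resulting update in the form \eqref{eq: General RVI Q async update} by taking $i=(s,o)$ and defining, for the sampled reference option $\bar o$ and action $A_n(s,\bar o)$,
\begin{align*}
    R_n(i) &= \rho_n(s,o,\bar o)\, R_n(s,A_n(s,\bar o)) + c,\\
    G_n(Q_n)(i) &= \rho_n(s,o,\bar o)\, u^{Q_n}_*\!\big(S'_n(s,A_n(s,\bar o)),o\big) + \big(1-\rho_n(s,o,\bar o)\big)Q_n(s,o),\\
    F_n(Q_n)(i) &= \eta \textstyle\sum Q_n.
\end{align*}
A short importance-sampling calculation that marginalizes over $\bar o$ and $A$ (using \cref{assu: num of reference options > 1}) shows that $\bbE[R_n(i)] = r(s,o) + c$ and $\bbE[G_n(Q_n)(i)] = g(Q_n)(s,o)$, where
\begin{align*}
    g(q)(s,o) \doteq \sum_a \pi(a\mid s,o)\sum_{s',r}p(s',r\mid s,a)\,u^q_*(s',o) + (\text{retained }q(s,o)\text{ mass when averaged}),
\end{align*}
i.e.\ exactly the non-reward part of the intra-option optimality operator. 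The sought-after fixed-point equation $r(i)-\bar r+g(q)(i)-q(i)=0$ then coincides with \eqref{eq: Intra-option Equations}–\eqref{eq: Intra-option Optimality Equations}, and by \cref{thm: intra-option bellman equations} has a unique solution in $\bar r$ (equal to $r_*$) and in $q$ up to an additive constant, verifying \cref{assu: General RVI Q unique solution}.

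The technical heart of the proof is verifying \cref{assu: g} for this $g$, in particular that it is both a max-norm and a span-norm non-expansion, together with the scaling identities. Max-norm non-expansion is routine because $u^q_*$ is a convex combination of $q(s',o)$ and $\max_{o'}q(s',o')$, both non-expansive operations, and $g$ further averages over $(a,s',r)$; the translation and positive-homogeneity properties are immediate. Span-norm non-expansion is the main obstacle I expect, since it does not follow automatically from max-norm non-expansion; I would handle it by observing that $u^q_*(s',o)-u^{q'}_*(s',o)$ lies in the convex hull of pointwise differences of $q$ and $q'$, so $u^q_*-u^{q'}_*$ has span no larger than $\mathrm{sp}(q-q')$, and averaging by the stochastic kernel $\pi\cdot p$ only shrinks the span. \cref{assu: f} is trivial for $f(Q)=\eta\sum Q$ with $u=\eta|\calS||\calO|$, \cref{assu: variance of Martingale difference} follows from finiteness of the MDP and boundedness of $\rho_n$ (options' policies share support by construction for those with $\rho_n\neq 0$), and \cref{assu: stepsize}–\ref{assu: asynchronous stepsize 2} are assumed.

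Applying \cref{thm: General RVI Q} then gives $Q_n\to q_\infty$ and $F_n(Q_n)\to r_\infty$ a.s., where $q_\infty$ solves \eqref{eq: Intra-option Equations}–\eqref{eq: Intra-option Optimality Equations} with $f(q_\infty)=r_*+c$; combined with the identity $\bar R_n=\eta\sum Q_n-c$ this yields $\bar R_n\to r_*$ and the calibration $\eta(\sum q_\infty-\sum Q_0)=r_*-\bar R_0$. Finally, to conclude $r(\mu_n)\to r_*$, I would reuse \cref{lemma: bound of mu Q} verbatim: any solution of the intra-option optimality equations is, by iterating termination/continuation, also a solution of the SMDP optimality equation \eqref{eq: SMDP Bellman optimality equation} (this was already used in \cref{app: proof of intra-option value equations}), so $\mathrm{sp}(TQ_n-Q_n)\to 0$ by continuous mapping and the lemma bounds $|r_*-r(\mu_n)|$ by this span. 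The extension from the $\calS'=\calS$ case to general unichain MDPs with transient states proceeds exactly as noted after the inter-option theorem.
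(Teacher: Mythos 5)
Your proposal is correct and follows essentially the same route as the paper's proof: telescoping the $\bar R$ update to get $\bar R_n=\eta\sum Q_n-c$, substituting to reduce the algorithm to an instance of General RVI Q with $g$ the intra-option optimality operator and $f(Q)=\eta\sum Q$, invoking \cref{thm: intra-option bellman equations} for \cref{assu: General RVI Q unique solution}, and finishing with \cref{lemma: bound of mu Q} and the transient-state remark. Your instantiation of $R_n(i)$ and $G_n(i)$ places the constant $c$ and the retained $Q_n(s,o)$ mass slightly differently from the paper's (same means, so nothing substantive changes), and your explicit span-norm argument for $u^q_*$ and the bridge from intra-option to SMDP solutions before applying \cref{lemma: bound of mu Q} actually spell out steps the paper only asserts.
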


\begin{proof}
At each step, the increment to $\bar R_n$ is $\eta$ times the increment to $Q_n$ and $\sum Q_n$. Therefore, the cumulative increment can be written as:
\begin{align}
    \bar R_n - \bar R_0 &= \eta  \sum_{i = 0}^{n-1} \sum_{s, o} \alpha_{\nu(i, s, o)} \sum_{\bar o} \rho_i(s, o, \bar o)\delta_i(s, o, \bar o) I\{(s, o) \in Y_i\} I\{\bar o \in Z_i\} \nonumber  \\
    & = \eta \left(\sum Q_n - \sum Q_0 \right) \nonumber \\
    \implies \bar R_n &= \eta \sum Q_n - \eta \sum Q_0 + \bar R_0 = \eta \sum Q_n - c \label{eq: General Intra-option Diff Q relation between bar R and Q} ,\\
    \text{ where } c &\doteq \eta \sum Q_0 - \bar R_0. \nonumber
\end{align}
Now substituting $\bar R_n$ in \eqref{eq: General Intra-option Diff Q async update Q} with \eqref{eq: General Intra-option Diff Q relation between bar R and Q}, we have $\forall s \in \calS, o \in \calO$: 
\begin{align}
    & Q_{n+1}(s, o) = Q_{n}(s, o) + \alpha_{\nu(n, s, o)} \sum_{\bar o}  \frac{\pi(A_n(s, \bar o) \mid s, o)}{\pi(A_n(s, \bar o) \mid s, \bar o)} \nonumber \\
    & \bigg(R_{n}(s, A_n(s, \bar o)) - \eta \sum Q_n + c + \beta(S_{n}'(s, A_n(s, \bar o)), o) \max_{o'} Q_n(S_{n}'(s, A_n(s, \bar o)), o') \nonumber \\
    & + (1 - \beta(S_{n}'(s, A_n(s, \bar o)), o)) Q_n(S_{n}'(s, A_n(s, \bar o)), o) - Q_n(s, o) \bigg) \nonumber\\
    & I\{(s, o) \in Y_n\} I\{\bar o \in Z_n\} .\label{eq: General Intra-option Diff Q transformed async single update}
\end{align}
We now show that \eqref{eq: General Intra-option Diff Q transformed async single update} is a special case of \eqref{eq: General RVI Q async update}. To see this point, let $i = (s, o)$, 
\begin{align*}
    R_n(i) & = \sum_{\bar o}  \frac{\pi(A_n(s, \bar o) \mid s, o)}{\pi(A_n(s, \bar o) \mid s, \bar o)} I\{\bar o \in Z_n\} (R_{n}(s, A_n(s, \bar o)) + c),  \\
    F_n(Q_n)(i) & = \sum_{\bar o}  \frac{\pi(A_n(s, \bar o) \mid s, o)}{\pi(A_n(s, \bar o) \mid s, \bar o)} I\{\bar o \in Z_n\} \eta \sum Q_n, \\
    G_n(Q_n)(i) & = \sum_{\bar o}  \frac{\pi(A_n(s, \bar o) \mid s, o)}{\pi(A_n(s, \bar o) \mid s, \bar o)} I\{\bar o \in Z_n\} \big ( \beta(S_{n}'(s, A_n(s, \bar o)), o) \max_{o'} Q_n(S_{n}'(s, A_n(s, \bar o)), o') \\
    & + (1 - \beta(S_{n}'(s, A_n(s, \bar o)), o)) Q_n(S_{n}'(s, A_n(s, \bar o)), o) - Q_n(s, o) \big ),\\
    \epsilon_n(i) &= 0.
\end{align*}
Then we have:
\begin{align*}
    r(i) & = \bbE[R_n(i)] \\
    & = \bbE\left[\sum_{\bar o}  \frac{\pi(A_n(s, \bar o) \mid s, o)}{\pi(A_n(s, \bar o) \mid s, \bar o)} I\{\bar o \in Z_n\} (R_{n}(s, A_n(s, \bar o)) + c) \right] \\
    & = \sum_{\bar o}   \bbE\left[\frac{\pi(A_n(s, \bar o) \mid s, o)}{\pi(A_n(s, \bar o) \mid s, \bar o)} I\{\bar o \in Z_n\} (R_{n}(s, A_n(s, \bar o)) + c) \right] \\
    & = \sum_{\bar o} I\{\bar o \in Z_n\} \sum_a \pi(a \mid s, o) \bbE[R_n(s, a) + c]\\
    & = \sum_a \pi(a \mid s, o) \sum_{r, s'} p(r, s' \mid s, a) (r + c), \quad \quad \quad \text{By \cref{assu: num of reference options > 1}},\\
    f(q) &= \bbE[F(q)(i)] = \eta \sum q,\\
    g(q)(i) & = \bbE[G_n(q)(i)] \\
    & = \bbE\Bigg[\sum_{\bar o}  \frac{\pi(A_n(s, \bar o) \mid s, o)}{\pi(A_n(s, \bar o) \mid s, \bar o)} I\{\bar o \in Z_n\} \big ( \beta(S_{n}'(s, A_n(s, \bar o)), o) \max_{o'} q(S_{n}'(s, A_n(s, \bar o)), o') \\
    & + (1 - \beta(S_{n}'(s, A_n(s, \bar o)), o)) q(S_{n}'(s, A_n(s, \bar o)), o) - q(s, o) \big ) \Bigg] \\
    & = \sum_{\bar o} I\{\bar o \in Z_n\} \sum_a \pi(a \mid s, o) \\
    & \bbE[(\beta(S_{n}'(s, a), o) \max_{o'} q(S_{n}'(s, a), o') + (1 - \beta(S_{n}'(s, a), o)) q(S_{n}'(s, a), o) - q(s, o))]\\
    & = \sum_a \pi(a \mid s, o) \sum_{s', r} p(s', r \mid s, a) (\beta(s', o) \max_{o'} q(s', o') + (1 - \beta(s', o)) q(s', o) - q(s, o))],
\end{align*}
for any $i \in \calI$.

We now verify the assumptions of Theorem~\ref{thm: General RVI Q} for Intra-option General Differential Q. 
\Cref{assu: g} can be verified for $g(q)(s, o) = \sum_{a} \pi(a \mid s, o) \sum_{s', r} p(s', r \mid s, a) (\beta(s', o) \max_{o'} q(s', o') + (1 - \beta(s', o)) q(s', o))$ easily.
\Cref{assu: f} is satisfied for $f(q) = \eta \sum q$.
\Cref{assu: variance of Martingale difference} satisfies because the MDP is finite.  
\cref{assu: epsilon} is satisfied for $\epsilon_n = 0$.
Assumption~\ref{assu: stepsize}-\ref{assu: asynchronous stepsize 2} are satisfied due to assumptions of the theorem being proved. \cref{assu: General RVI Q unique solution} is satisfied because 
\begin{align*}
    & r(i) - \bar r + g(q)(i) - q(i) \\
    & = \sum_{a} \pi(a | s, o)\sum_{s', r} p(s', r | s, a) (r - \bar r + \beta(s', o) \max_{o'} q(s', o') + (1 - \beta(s', o)) q(s', o)).
\end{align*}
By \cref{thm: intra-option bellman equations}, we know that if the above equation equals to 0, then under \cref{assu: unichain}, $\bar r = r_* + c$ is the unique solution and the solutions for $q$ form a set $q = q_* + ke$ for all $k \in \bbR$.

Therefore \cref{thm: General RVI Q} applies and we conclude that $Q_n$ converges to a point satisfying $\eta \sum q = r_* + c = r_* + \eta \sum Q_0 - \bar R_0$ and $\bar R_n = \eta \sum Q_n - c$ to $\eta \sum q - c = r_* + c - c = r_*$. Finally, by \cref{lemma: bound of mu Q}, we have $r(\mu_n) \to r_*$.

Applying a similar argument as one presented in the last paragraph of Section~\ref{app: inter option} finishes the proof of Theorem~\ref{thm: Intra-option Differential Methods}.
\end{proof}

\subsection{Theorem~\ref{thm: model equations}} \label{app: Proof of thm: model equations}
\begin{proof}
We will show that there exists a unique solution for \eqref{eq: Intra-option Dynamics Model Equation}. Results for \eqref{eq: Intra-option Reward Model Equation} and \eqref{eq: Intra-option Duration Model Equation} can be shown in a similar way. To show that \eqref{eq: Intra-option Dynamics Model Equation} has a unique solution, we apply a generalized version of the Banach fixed point theorem (see, e.g., Theorem 2.4 by Almezel, Ansari, and Khamsi 2014). Once the unique existence of the solution is shown, we easily verify that $\dynamicsmodeltarget$ is the unique solution by showing that it is one solution to \eqref{eq: Intra-option Dynamics Model Equation} as follows. With a little abuse of notation, let $\hat p (s', r \mid s, o) \doteq \sum_{r, l} \hat p(x, r, l \mid s, o)$, we have
\begin{align*}
    m^p(x | s, o) &= \sum_{r, l} \hat p(x, r, l | s, o) \\
    & = \sum_{l = 1}^\infty \hat p(x, l | s, o) = \sum_{a} \pi(a | s, o)\sum_{r} p(s', r | s, a) \beta(s', o) \mathbb{I}(x = s') + \sum_{l = 2}^\infty \hat p(x, l | s, o)\\
    & = \sum_{a} \pi(a | s, o)\sum_{r} p(s', r | s, a) \big(\beta(s', o) \mathbb{I}(x = s') + (1 - \beta(s', o)) \sum_{l = 1}^\infty \hat p(x, l | s', o) \big)\\
    & = \sum_{a} \pi(a | s, o)\sum_{r} p(s', r | s, a) \big(\beta(s', o) \mathbb{I}(x = s') + (1 - \beta(s', o)) m^p(x | s', o)\big).
\end{align*}
To apply the generalized version of the Banach fixed point theorem to show the unique existence of the solution, we first define operator $T: \bbR^{\cardS \times \cardS \times \cardO} \to \bbR^{\cardS \times \cardS \times \cardO}$ such that for any $m \in \bbR^{\cardS \times \cardS \times \cardO}$ and any $x, s \in \calS, o \in \calO$, $T m (x \mid s, o) \doteq \sum_{a} \pi(a | s, o)\sum_{s', r} p(s', r | s, a) ( \beta(s', o) \mathbb{I}(x = s') + (1 - \beta(s', o)) m(x | s', o)))$. We further define $T^n m \doteq T(T^{n-1} m)$ for any $n \geq 2$ and any $m \in \bbR^{\cardS \times \cardS \times \cardO}$. The generalized Banach fixed point theorem shows that if $T^n$ is a contraction mapping for any integer $n \geq 1$ (this is called a $n$-stage contraction), then $Tm = m$ has a unique fixed point. The unique fixed point immediately leads to the existence of the unique solution of \eqref{eq: Intra-option Dynamics Model Equation}. The existence of the unique solution and that $m^p$ is a solution imply that $m^p$ is the unique solution.

The only work left is to verify the following contraction property:
\begin{align}\label{eq: n-stage contraction}
    \norm{T^{\cardS} m_1 - T^{\cardS} m_2}_\infty \leq \gamma \norm{m_1 - m_2}_\infty,
\end{align}
where $m_1$ and  $m_2$ are arbitrary members in $\bbR^{\cardS \times \cardS \times \cardO}$, and $\gamma < 1$ is some constant.

Consider the difference between $T^{\cardS} m_1$ and $T^{\cardS} m_2$ for arbitrary $m_1, m_2 \in \bbR^{|\calS \times \calS \times \calO|}$. For any $x, s \in \calS, o \in \calO$, we have
\begin{align*}
    & T^{\cardS} m_1 (x \mid s, o) - T^{\cardS} m_2 (x \mid s, o) \\
    & = \sum_{a} \pi(a \mid s, o)\sum_{s', r} p(s', r | s, a) (1 - \beta(s', o)) (T^{\cardS-1}m_1(x \mid s', o) - T^{\cardS-1}m_2(x \mid s', o))\\
    & = \sum_{s_1} \Pr(S_{t+1} = s_1 \mid S_t = s, O_t = o) (1 - \beta(s_1, o)) (T^{\cardS-1}m_1(x \mid s_1, o) - T^{\cardS-1}m_2(x \mid s_1, o))\\
    & = \sum_{s_1} \Pr(S_{t+1} = s_1 \mid S_t = s, O_t = o) (1 - \beta(s_1, o)) \sum_{s_2} \Pr(S_{t+2} = s_2 \mid S_{t+1} = s_1, O_{t+1} = o) (1 - \beta(s_2, o)) \\
    & (T^{\cardS-2}m_1(x \mid s_2, o) - T^{\cardS-2}m_2(x | s_2, o))\\
    & \vdots\\
    & = \sum_{s_1, \cdots, s_{\cardS}} \Pr(S_{t+1} = s_1, \cdots, S_{t+\cardS} = s_{\cardS} \mid S_t = s, O_t = o) \prod_{i =1}^{\cardS} (1 - \beta(s_i, o)) (m_1(x \mid s_{\cardS}, o) - m_2(x | s_{\cardS}, o))\\
    & \leq \sum_{s_1, \cdots, s_{\cardS}} \Pr(S_{t+1} = s_1, \cdots, S_{t+\cardS} = s_{\cardS} \mid S_t = s, O_t = o) \prod_{i =1}^{\cardS} (1 - \beta(s_i, o)) \norm{m_1 - m_2}_\infty.
\end{align*}
Here $\tilde p(s, o) \doteq \sum_{s_1, \cdots, s_{\cardS}} \Pr(S_{t+1} = s_1 \cdots, S_{t+\cardS} = s_{\cardS} \mid S_t = s, O_t = o) \prod_{i =1}^{\cardS} (1 - \beta(s_i, o))$ is the probability of executing option $o$ for $\cardS$ steps starting from $s$ without termination. If $\tilde p(s, o) = 0$, then option $o$ will surely terminate within the first $\abs{\calS}$ steps and if $\tilde p(s, o) = 1$, then option $o$ will surely not terminate within the first $\abs{\calS}$ steps.

If option $o$ would surely not terminate within the first $\cardS$ steps ($\tilde p(s, o) = 1$), then it would surely not terminate forever. This is because there are only $\cardS$ number of states, and thus an option could visit all states that are possible to be visited by the option within the first $\cardS$ steps. $\tilde p(s, o) = 1$ means that option $o$ has a zero probability of terminating in all states that are possible to be visited by option $o$. This non-termination of a state-option pair implies that the expected option length is infinite, which is contradict to our assumption of finite expected option lengths (\cref{sec: background}). Therefore $\tilde p(s, o) = 1$ is not allowed by our assumption and thus $\tilde p(s, o) < 1$. So there must exist some $\gamma(s, o) < 1$ such that $\tilde p(s, o) \leq \gamma(s, o)$. With $\gamma \doteq \max_{s, o} \gamma (s, o)$, we obtain \eqref{eq: n-stage contraction}.
\end{proof}

\subsection{Theorem~\ref{thm: intra-option model learning}}\label{app: thm: intra-option model learning}

We first provide a formal statement of \cref{thm: intra-option model learning}. The formal theorem statement needs stepsizes to be specific for each state-option pair. We rewrite (\ref{eq: Intra-option Dynamics Model learning}–\ref{eq: Intra-option Duration Model learning}) to incorporate such stepsizes:
\fontsize{9.5}{6}
\begin{align}
    \dynamicsmodel_{t+1}(x \mid S_t, o) &\doteq \dynamicsmodel_{t}(x \mid S_t, o) + \alpha_t(S_t, o) \rho_t(o) \Big( \beta(S_{t+1}, o) \mathbb{I}(S_{t+1} = x) \nonumber \\
    &\quad + \big( 1 - \beta(S_{t+1}, o) \big) \dynamicsmodel_t(x \mid S_{t+1}, o) - \dynamicsmodel_t(x \mid S_t, o) \Big), \quad \forall\ x \in \statespace, \label{eq: Intra-option Dynamics Model learning, formal}\\
    \rewardmodel_{t+1}(S_t, o) &\doteq \rewardmodel_{t}(S_t, o) + \alpha_t(S_t, o) \rho_t(o) \Big( R_{t+1} + \big( 1 - \beta(S_{t+1}, o) \big) \rewardmodel_t(S_{t+1}, o) - \rewardmodel_{t}(S_t, o) \Big) \label{eq: Intra-option Reward Model Learning, formal}\\
    \durationmodel_{t+1}(S_t, o) &\doteq \durationmodel_{t}(S_t, o) + \alpha_t(S_t, o) \rho_t(o) \Big( 1 + \big( 1 - \beta(S_{t+1}, o) \big) \durationmodel_{t}(S_{t+1}, o) - \durationmodel_{t}(S_t, o) \Big) \label{eq: Intra-option Duration Model learning, formal}.
\end{align}
\normalsize
\begin{theorem}[Convergence of the intra-option model learning algorithm, formal]
If $0 \leq \alpha_t(s, o) \leq 1$, $\sum_t \alpha_t(s, o) = \infty$ and $\sum_t \alpha_t^2(s, o) < \infty$, and $\alpha_t(s, o) = 0$ unless $s = S_t$, then the intra-option model-learning algorithm (\ref{eq: Intra-option Dynamics Model learning, formal}–\ref{eq: Intra-option Duration Model learning, formal}) converges almost surely, $\dynamicsmodel_t$ to $\dynamicsmodeltarget$, $\rewardmodel_t$ to $\rewardmodeltarget$, and $\durationmodel_t$ to $\durationmodeltarget$.
\end{theorem}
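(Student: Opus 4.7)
The plan is to cast each of the three updates (\ref{eq: Intra-option Dynamics Model learning, formal})--(\ref{eq: Intra-option Duration Model learning, formal}) as an asynchronous stochastic approximation iteration for the fixed-point equation it corresponds to in \cref{thm: model equations}, and then appeal to a standard convergence result. Consider the dynamics-model update first; the reward and duration updates are structurally identical and are handled in the same way. Writing the increment as $\dynamicsmodel_{t+1}(x\mid S_t,o) - \dynamicsmodel_{t}(x\mid S_t,o) = \alpha_t(S_t,o)\,\bigl(H_t \dynamicsmodel_t - \dynamicsmodel_t\bigr)(x\mid S_t,o)$, the first step is to verify that $\mathbb{E}[H_t M \mid \calF_t, S_t=s] = T M(x\mid s,o)$, where $T$ is exactly the operator defined in the proof of \cref{thm: model equations}. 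The importance-sampling ratio $\rho_t(o)=\pi(A_t\mid S_t,o)/\pi(A_t\mid S_t,O_t)$ absorbs the mismatch between the behavior option $O_t$ and the option $o$ being updated, since for any function $f$,
\begin{equation*}
\mathbb{E}\bigl[\rho_t(o)\,f(A_t,S_{t+1},R_{t+1})\bigm|\calF_t,S_t=s\bigr]
= \sum_a \pi(a\mid s,o)\sum_{s',r} p(s',r\mid s,a)\,f(a,s',r).
\end{equation*}

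Second, I would exhibit the iteration as Robbins--Monro with a martingale-difference noise term: defining $M_{t+1} \doteq H_t\dynamicsmodel_t - T\dynamicsmodel_t$, it satisfies $\mathbb{E}[M_{t+1}\mid\calF_t]=0$ and has conditionally bounded second moment because the MDP is finite (all quantities $\beta(\cdot,\cdot)$, $\mathbb{I}(S_{t+1}=x)$, and $\rho_t(o)$ are bounded, and $\dynamicsmodel_t$ stays in $[0,1]^{\cardS\times|\calO|\times\cardS}$ once initialized there). The step-size assumption gives the usual Robbins--Monro conditions component-wise, and the hypothesis that $\alpha_t(s,o)=0$ unless $s=S_t$ together with the "infinite visitation" premise ensures every component is updated infinitely often with comparable relative frequency.

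Third, I would invoke an asynchronous stochastic approximation theorem for pseudo-contractions (e.g.\ Tsitsiklis 1994, or Borkar 2008, Chapter~7, using the ODE $\dot m = Tm - m$). The essential ingredient is that $T$ has a unique fixed point and the associated ODE has that fixed point as a globally asymptotically stable equilibrium. Both follow from the proof of \cref{thm: model equations}: $T$ is an $\cardS$-stage max-norm contraction with unique fixed point $\dynamicsmodeltarget$, which makes $\dynamicsmodeltarget$ the unique globally asymptotically stable equilibrium of the limiting ODE. Hence $\dynamicsmodel_t \to \dynamicsmodeltarget$ almost surely. Replaying the argument with $(H_t,T)$ instantiated for (\ref{eq: Intra-option Reward Model Learning, formal}) and (\ref{eq: Intra-option Duration Model learning, formal}) yields $\rewardmodel_t\to\rewardmodeltarget$ and $\durationmodel_t\to\durationmodeltarget$, noting that boundedness of $\rewardmodel_t$ and $\durationmodel_t$ iterates can be obtained either by a pathwise argument using the finite reward set and finite expected option length, or by the standard stability-from-contraction result in Borkar (2008, Chapter~3).

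The main obstacle is the step-three appeal: the operator $T$ is not a one-step max-norm contraction, only an $\cardS$-stage contraction, so the off-the-shelf asynchronous Q-learning analysis of Tsitsiklis (1994) does not apply verbatim. I would therefore either argue directly via the ODE method (which only requires that the limiting ODE have a unique globally asymptotically stable equilibrium, a property that an $n$-stage contraction does provide) or transform $T$ via a weighted max-norm in which it becomes a genuine one-step contraction. A secondary subtlety is a priori boundedness of $\rewardmodel_t$ and $\durationmodel_t$; I expect this is handled cleanly by showing that the drift term pushes the iterates back toward the bounded true targets whenever they escape a large enough ball, which is a direct consequence of the same contraction property.
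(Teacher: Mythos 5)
Your proposal is correct and follows essentially the same route as the paper: cast each update as an asynchronous stochastic approximation iteration whose expected operator is the $T$ from the proof of \cref{thm: model equations}, use the importance-sampling ratio to correct for the behavior option, and invoke Tsitsiklis's (1994) framework together with the $\cardS$-stage contraction property. The one point you flag as the "main obstacle" is resolved in the paper exactly along the lines you anticipate — by generalizing Tsitsiklis's Theorem~3, replacing its one-stage contraction assumption (Assumption~5) with an $m$-stage contraction assumption and noting the original proof carries over — rather than via the ODE method or a weighted max-norm, but these are interchangeable ways of handling the same issue.
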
 
Here the assumptions on $\alpha_t$ guarantee that each state-option pair is updated for an infinite number of times. Because the three update rules are independent, we only show convergence of the first update rule; the other two can be shown in the same way. 

\begin{proof}
We apply a slight generalization of Theorem 3 by Tsitsiklis (1994) to show the above theorem. The generalization replaces Assumption 5 (an assumption for Theorem 3) by:
\begin{assumption}
There exists a vector $x^* \in \bbR^n$, a positive vector $v$, a positive integer $m$, and a scalar $\beta \in [0, 1)$, such that 
\begin{align*}
    \norm{F^m(x) - x^*}_v \leq \beta \norm{x - x^*}_v, \quad \forall x \in \bbR^n.
\end{align*}
\end{assumption}
That is, we replace the one-stage contraction assumption by a $m$-stage contraction assumption. The proof of Tsitsiklis' Theorem 3 also applies to its generalized form and is thus omitted here.

Notice that our update rule \eqref{eq: Intra-option Dynamics Model learning, formal} is a special case of the general update rule considered by Theorem 3 (equations 1-3), and is thus a special case of its generalized version. Therefore we only need to verify the above $m-$stage contraction assumption, as well as Assumption 1, 2, and 3 required by Theorem 3. According to the proof in \cref{app: Proof of thm: model equations}, the operator $T$ associated with the update rule \eqref{eq: Intra-option Dynamics Model learning} is a $\abs{\calS}-$stage contraction (and thus is a $\abs{\calS}-$stage pseudo-contraction). Other assumptions (Assumptions 1, 2, 3) required by Theorem 3 are also satisfied given our step-size, and finite MDP assumptions.
\end{proof}

\subsection{\cref{thm: interruption}}\label{app: interruption}

\begin{proof}
We first show that 
\begin{align}
    & \sum_{o'} \mu'(o' \mid s) \sum_{s', r, l} \hat p(s', r, l \mid s, o') (r - l r(\mu) + v_\mu(s')) \nonumber\\
    & \geq \sum_{o} \mu(o \mid s) \sum_{s', r, l} \hat p(s', r, l \mid s, o) (r - l r(\mu) + v_\mu(s')) = v_\mu(s) \label{eq: proof of interruption 1}.
\end{align}

Note that for all $s$, $o$ and its corresponding $o'$, $\mu(o \mid s) = \mu'(o' \mid s)$. In order to show \eqref{eq: proof of interruption 1}, we show $\sum_{s', r, l} \hat p(s', r, l \mid s, o') (r - l r(\mu) + v_\mu(s')) \geq \sum_{s', r, l} \hat p(s', r, l \mid s, o) (r - l r(\mu) + v_\mu(s'))$ for all $s, o$ and corresponding $o'$.
\begin{align*}
    & \sum_{s', r, l} \hat p(s', r, l \mid s, o') (r - l r(\mu) + v_\mu(s')) \\
    & = \bbE [\hat R_n - \hat L_n r(\mu) + v_\mu(\hat S_{n+1}) \mid S_n = s, O_n = o'] \\
    & = \bbE [\hat R_n - \hat L_n r(\mu) + v_\mu(\hat S_{n+1}) \mid S_n = s, O_n = o', \text{Not encountering an interruption}] \\
    & + \bbE [\hat R_n - \hat L_n r(\mu) + v_\mu(\hat S_{n+1}) \mid S_n = s, O_n = o', \text{Encountering an interruption}] \\
    & \geq \bbE [\hat R_n - \hat L_n r(\mu) + v_\mu(\hat S_{n+1}) \mid S_n = s, O_n = o', \text{Not encountering an interruption}] \\
    & + \bbE [\beta(s')(\hat R_n - \hat L_n r(\mu) + v_\mu(\hat S_{n+1})) + (1 - \beta(s')) (\hat R_n - \hat L_n r(\mu) + q_\mu(\hat S_{n+1}, o)) \\
    & \mid S_n = s, O_n = o', \text{Encountering an interruption}] \\
    & = \sum_{s', r, l} \hat p(s', r, l \mid s, o) (r - l r(\mu) + v_\mu(s')).
\end{align*}
The above inequality holds because $\hat S_{n+1}$ is the state where termination happens and thus $q_\mu(\hat S_{n+1}, o) \leq v_\mu(\hat S_{n+1})$. The last equality holds because $\bbE [\beta(s')(\hat R_n - \hat L_n r(\mu) + v_\mu(\hat S_{n+1})) + (1 - \beta(s')) (\hat R_n - \hat L_n r(\mu) + q_\mu(\hat S_{n+1}, o)) \mid S_n = s, O_n = o', \text{Encountering an interruption}]$ is the expected differential return when the agent could interrupt its old option but chooses to stick on the old option. \eqref{eq: proof of interruption 1} is shown.

Now write the l.h.s. of \eqref{eq: proof of interruption 1} in the matrix form
\begin{align*}
    \sum_{o'} \mu'(o'\mid s) \sum_{s', r, l} \hat p(s', r, l | s, o') (r - l r(\mu) + v_\mu(s')) = r_{\mu'} (s) - l_{\mu'}(s) r(\mu) + (P_{\mu'} v_\mu)(s),
\end{align*}
where $r_{\mu'}(s) \doteq \sum_{o'} {\mu'}(o'\mid s) \sum_{s', r, l} \hat p(s', r, l | s, o') r$ is the expected one option-transition reward, $l_{\mu'}(s) \doteq \sum_{o'} {\mu'}(o'\mid s) \sum_{s', r, l} \hat p(s', r, l | s, o') l$ is the expected one option-transition length, and $P_{\mu'}(s, s') \doteq \sum_{o'} {\mu'}(o'\mid s) \sum_{r, l} \hat p(s', r, l | s, o')$ is the probability of terminating at $s'$.

Combined with the r.h.s. of \eqref{eq: proof of interruption 1}, we have 
\begin{align*}
    r_{\mu'} (s) - l_{\mu'}(s) r(\mu) + (P_{\mu'} v_\mu)(s) \geq v_\mu(s).
\end{align*}

Iterating the above inequality for $K-1$ times, we have
\begin{align*}
    \sum_{k=0}^{K-1}(P_{\mu'}^k r_{\mu'} (s) - P_{\mu'}^k l_{\mu'}(s) r(\mu)) + P_{\mu'}^K v_\mu(s) \geq v_\mu (s)\\
    \sum_{k=0}^{K-1}(P_{\mu'}^k r_{\mu'} (s) - P_{\mu'}^k l_{\mu'}(s) r(\mu)) \geq v_\mu (s) - P_{\mu'}^K v_\mu (s).
\end{align*}

Divide both sides by $\sum_{k=0}^{K-1} P_{\mu'}^k l_{\mu'}(s) $ and take $K \to \infty$:
\begin{align*}
    \lim_{K \to \infty} \frac{1}{\sum_{k=0}^{K-1} P_{\mu'}^k l_{\mu'}(s) } \sum_{k=0}^{K-1}(P_{\mu'}^k r_{\mu'} (s) - P_{\mu'}^k l_{\mu'}(s) r(\mu)) \geq \lim_{K \to \infty} \frac{1}{\sum_{k=0}^{K-1} P_{\mu'}^k l_{\mu'}(s) } (v_\mu(s) - P_{\mu'}^K v_\mu(s)).
\end{align*}

For the l.h.s.:
\begin{align*}
    \lim_{K \to \infty} \frac{1}{\sum_{k=0}^{K-1} P_{\mu'}^k l_{\mu'}(s) } \sum_{k=0}^{K-1}(P_{\mu'}^k r_{\mu'} (s) - P_{\mu'}^k l_{\mu'}(s) r(\mu))) = \lim_{K \to \infty} \frac{\sum_{k=0}^{K-1} P_{\mu'}^k r_{\mu'} (s)}{\sum_{k=0}^{K-1} P_{\mu'}^k l_{\mu'}(s) } -  r(\mu) = r(\mu') - r(\mu).
\end{align*}

For the r.h.s.:
\begin{align*}
    \lim_{K \to \infty} \frac{1}{\sum_{k=0}^{K-1} P_{\mu'}^k l_{\mu'}(s) } (v_\mu(s) - P_{\mu'}^K v_\mu(s)) = 0.
\end{align*}

Therefore $r(\mu') - r(\mu) \geq 0$. 

Finally, note that a strict inequality holds if the probability of interruption when following policy $\mu'$ is non-zero.
\end{proof}

\newpage
\section{Additional Empirical Results} \label{app: additional empirical results}

\subsection{Inter-option Learning}
\label{app: additional empirical results: Inter-option Learning}

\begin{figure*}[h!]
\centering
\begin{subfigure}{.5\textwidth}
    \centering
    \includegraphics[width=0.9\textwidth]{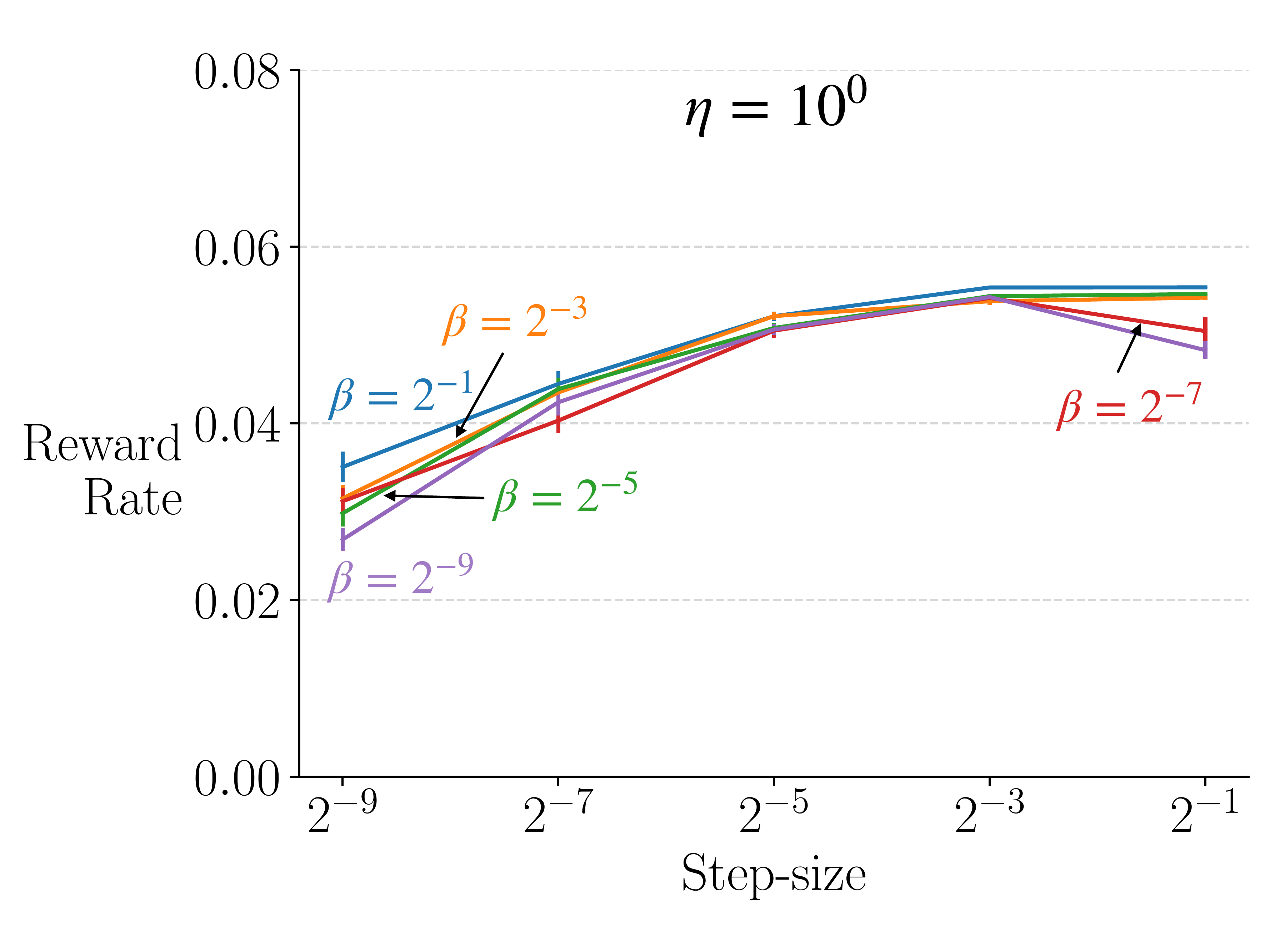}
\end{subfigure}%
\begin{subfigure}{.5\textwidth}
    \centering
    \includegraphics[width=0.9\textwidth]{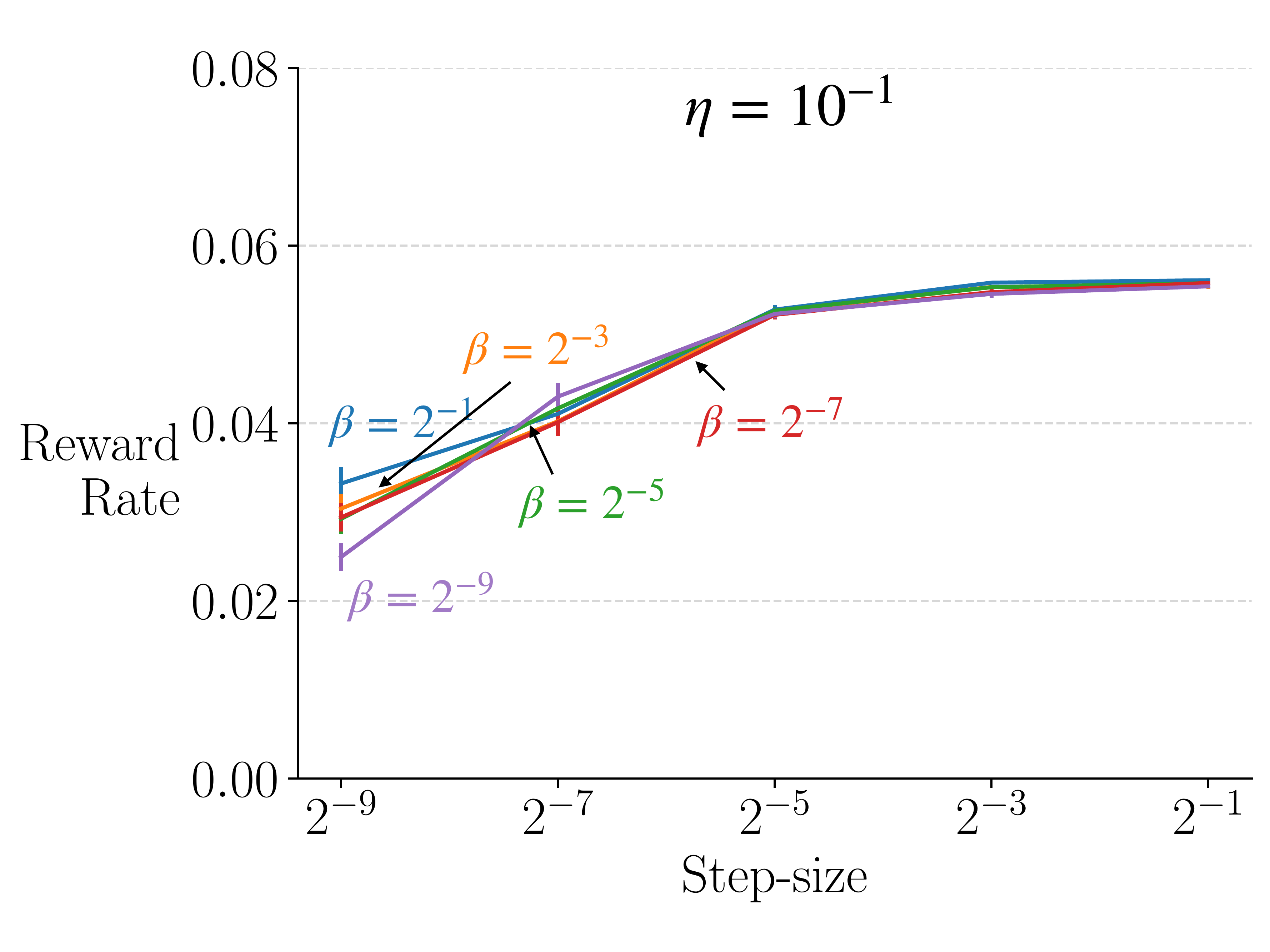}
\end{subfigure}
\begin{subfigure}{.5\textwidth}
    \centering
    \includegraphics[width=0.9\textwidth]{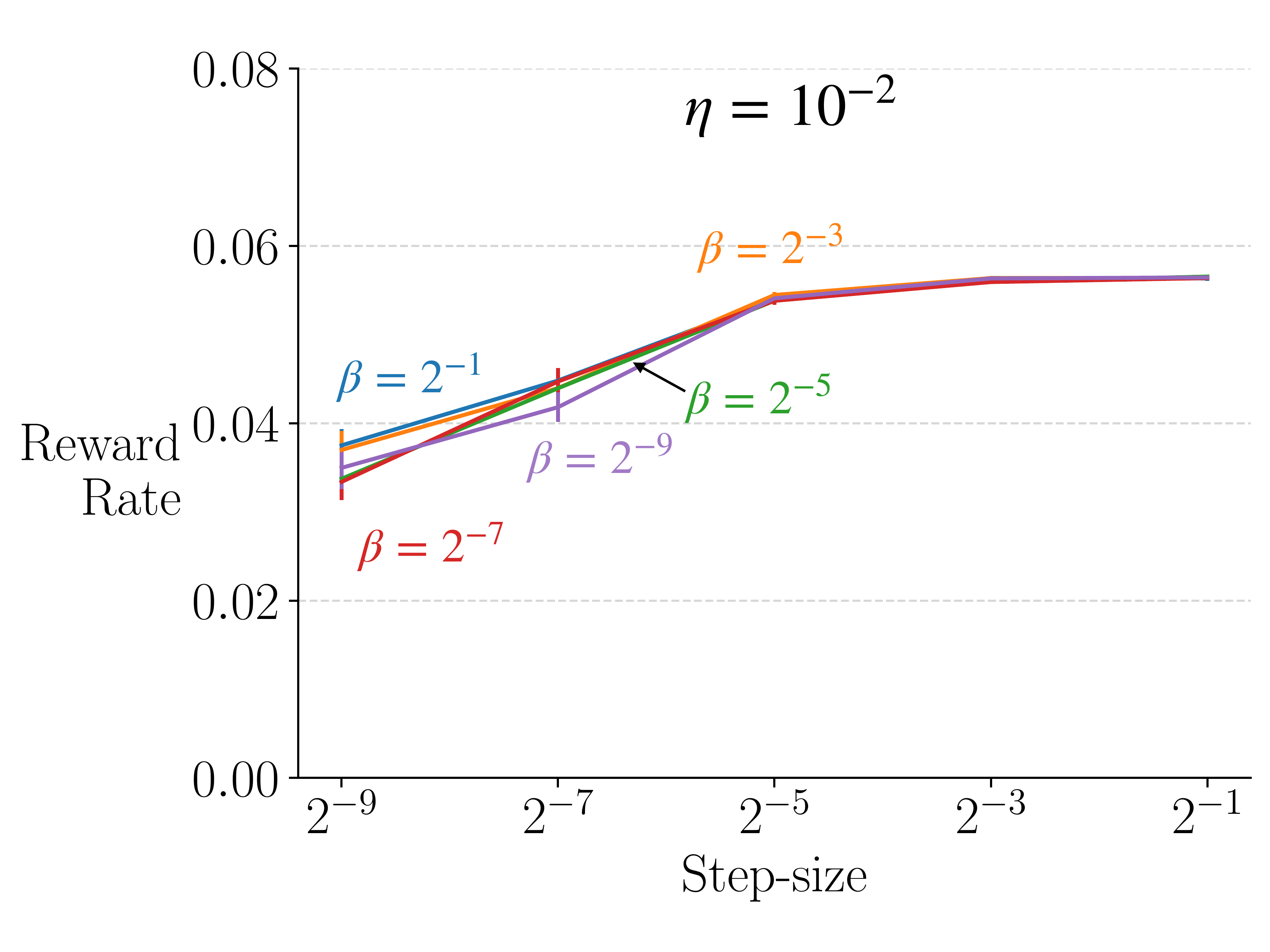}
\end{subfigure}%
\begin{subfigure}{.5\textwidth}
    \centering
    \includegraphics[width=0.9\textwidth]{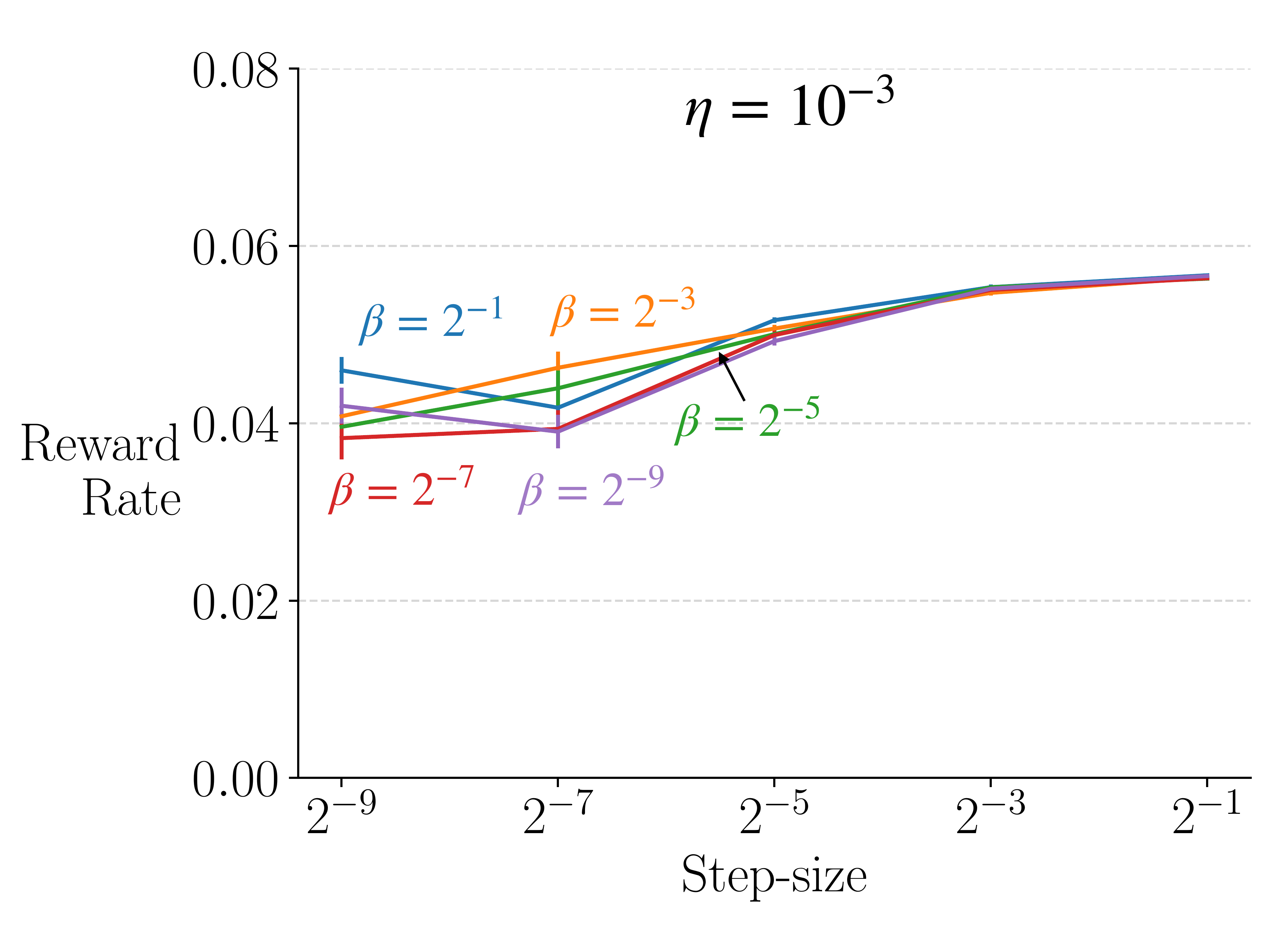}
\end{subfigure}
\begin{subfigure}{.5\textwidth}
    \centering
    \includegraphics[width=0.9\textwidth]{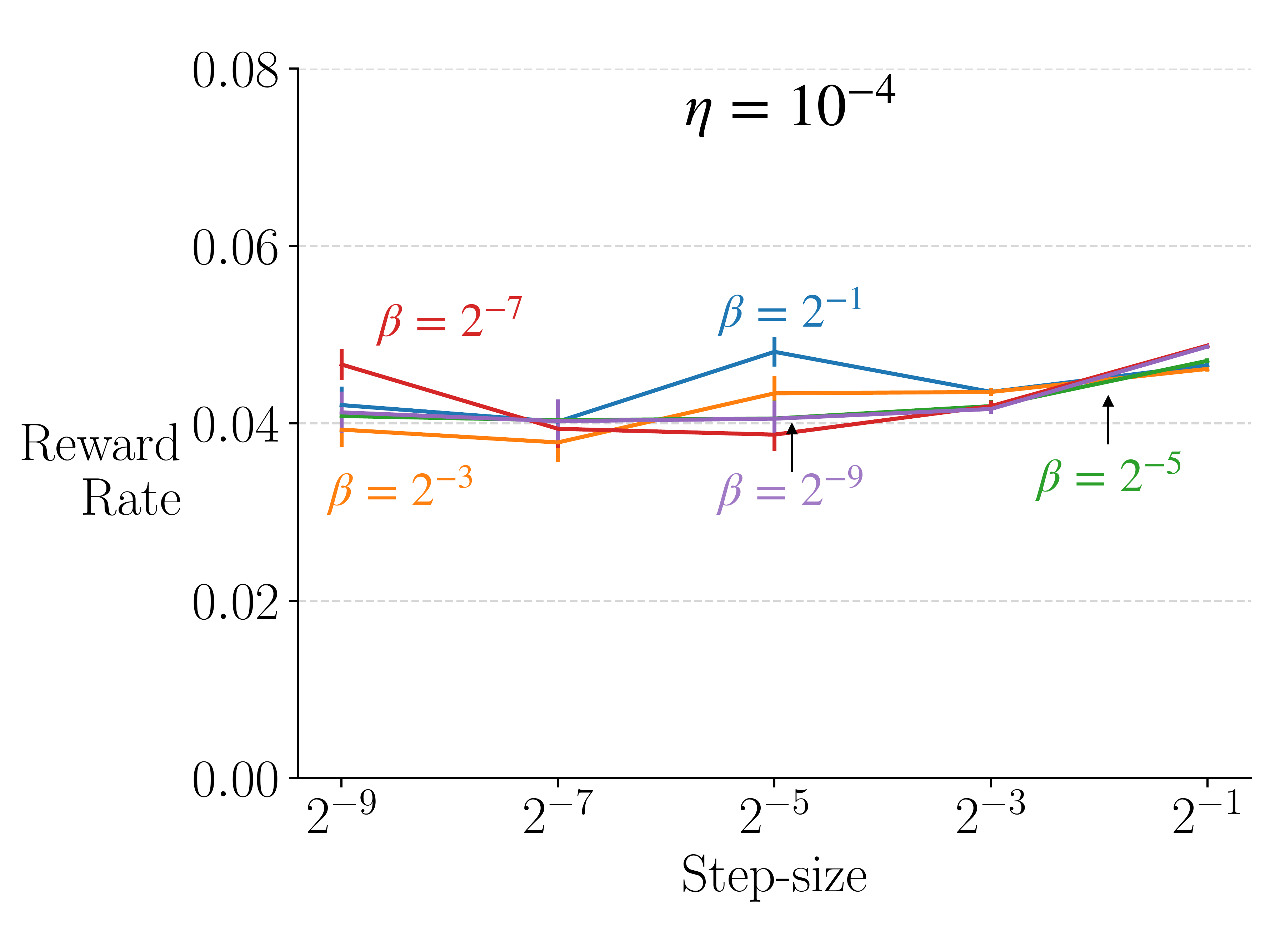}
\end{subfigure}%
    \caption{Plots showing a parameter study for inter-option Differential Q-learning and the set of options $ \calO = \calH + \calA$ in the continuing Four-Room domain when the goal was to go to \texttt{G1}. 
    Same experimental setups are used as what was described in \cref{sec: inter-option}.
    The x-axis denotes step size $\alpha$; the y-axis denotes the rate of the rewards averaged over all 200,000 steps of training, reflecting the rate of learning. The error bars denote one standard error. The algorithm's rate of learning varied little over a broad range of its parameters $\alpha, \beta$ and $\eta$. Small standard error bars show that the algorithm's performance varied little over multiple runs.}
    \label{fig: inter-option_learning_para_study}
\end{figure*}

\begin{figure*}[h!]
\centering
\begin{subfigure}{.5\textwidth}
    \centering
    \includegraphics[width=0.9\textwidth]{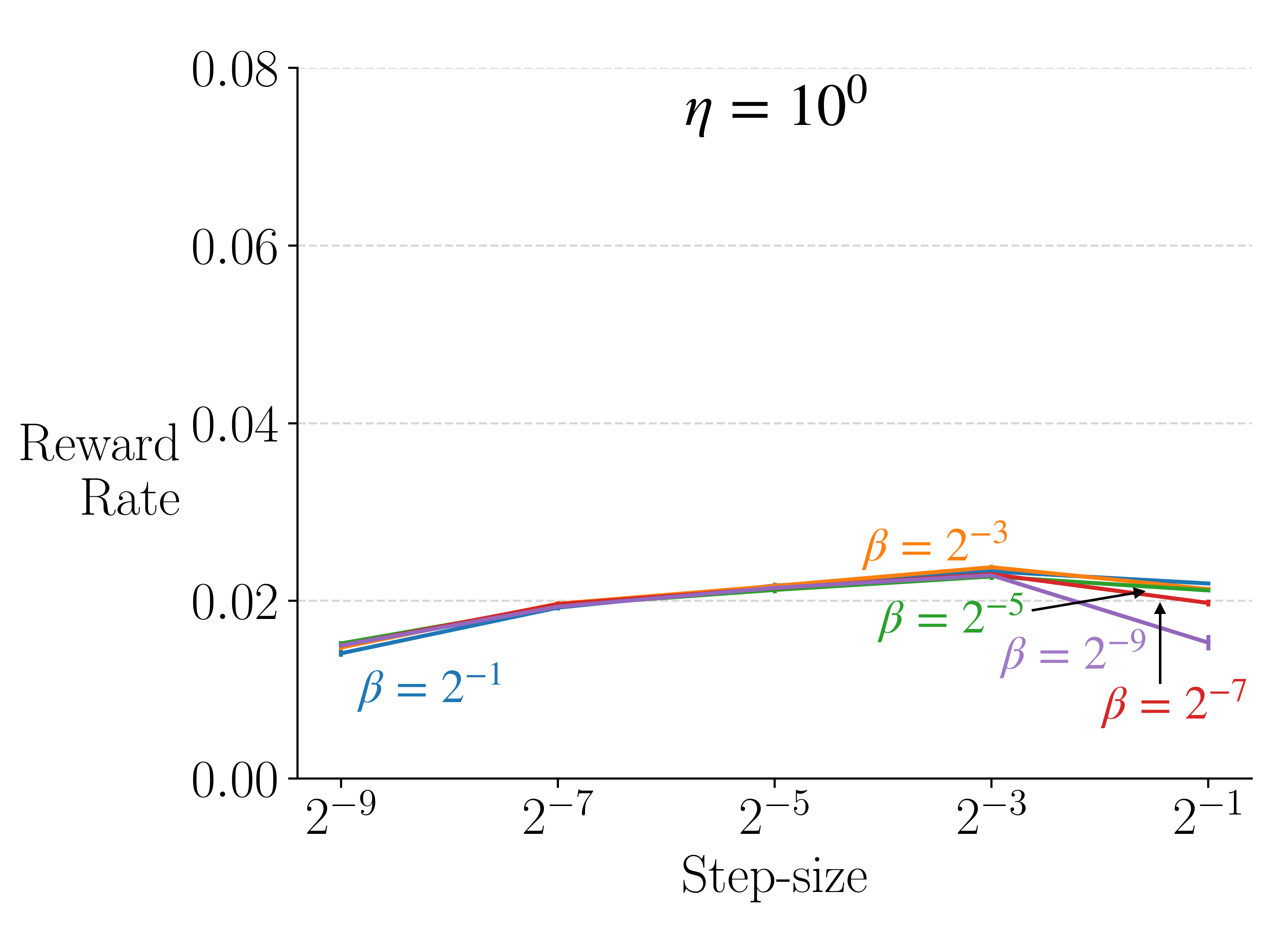}
\end{subfigure}%
\begin{subfigure}{.5\textwidth}
    \centering
    \includegraphics[width=0.9\textwidth]{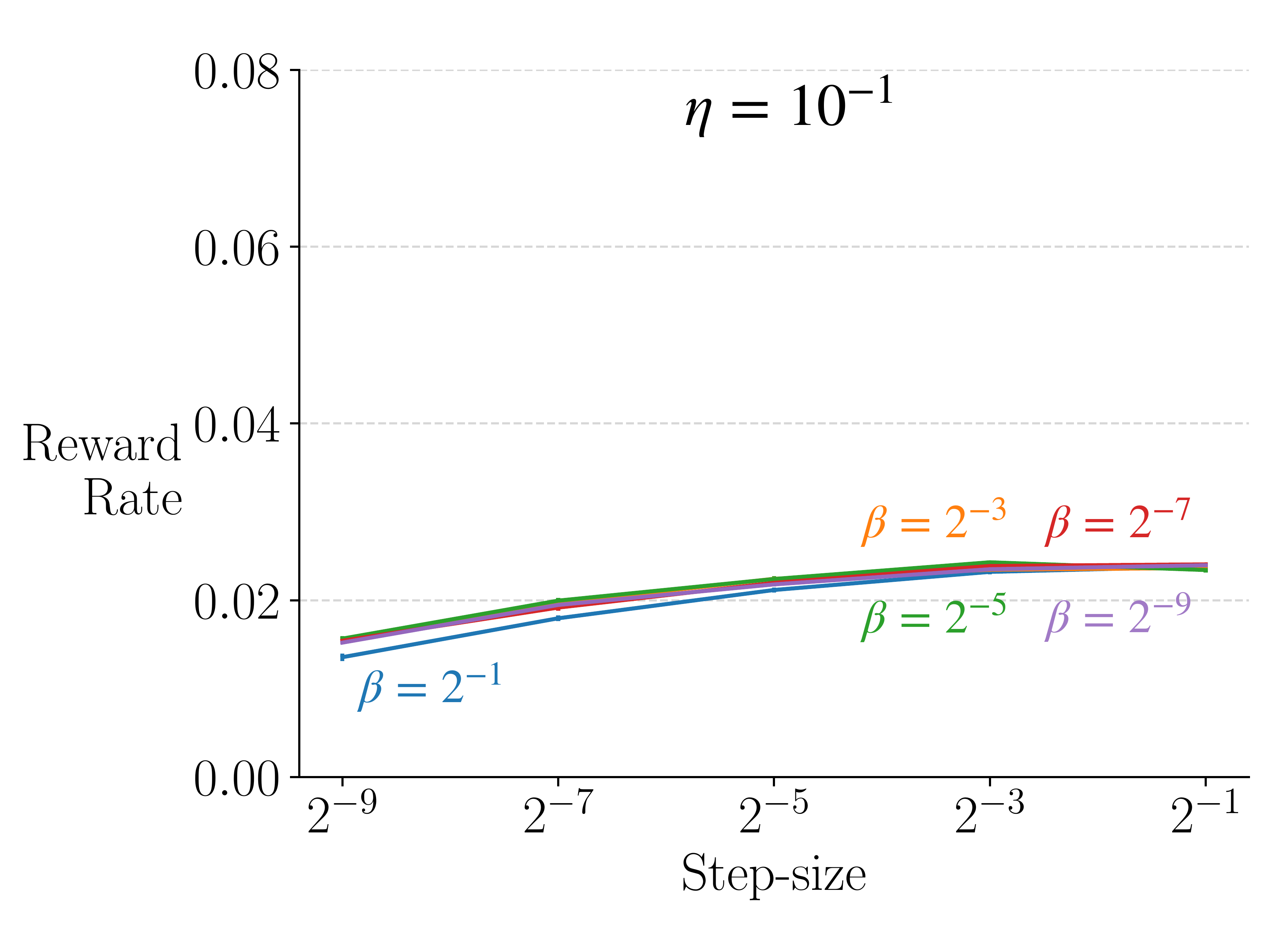}
\end{subfigure}
\begin{subfigure}{.5\textwidth}
    \centering
    \includegraphics[width=0.9\textwidth]{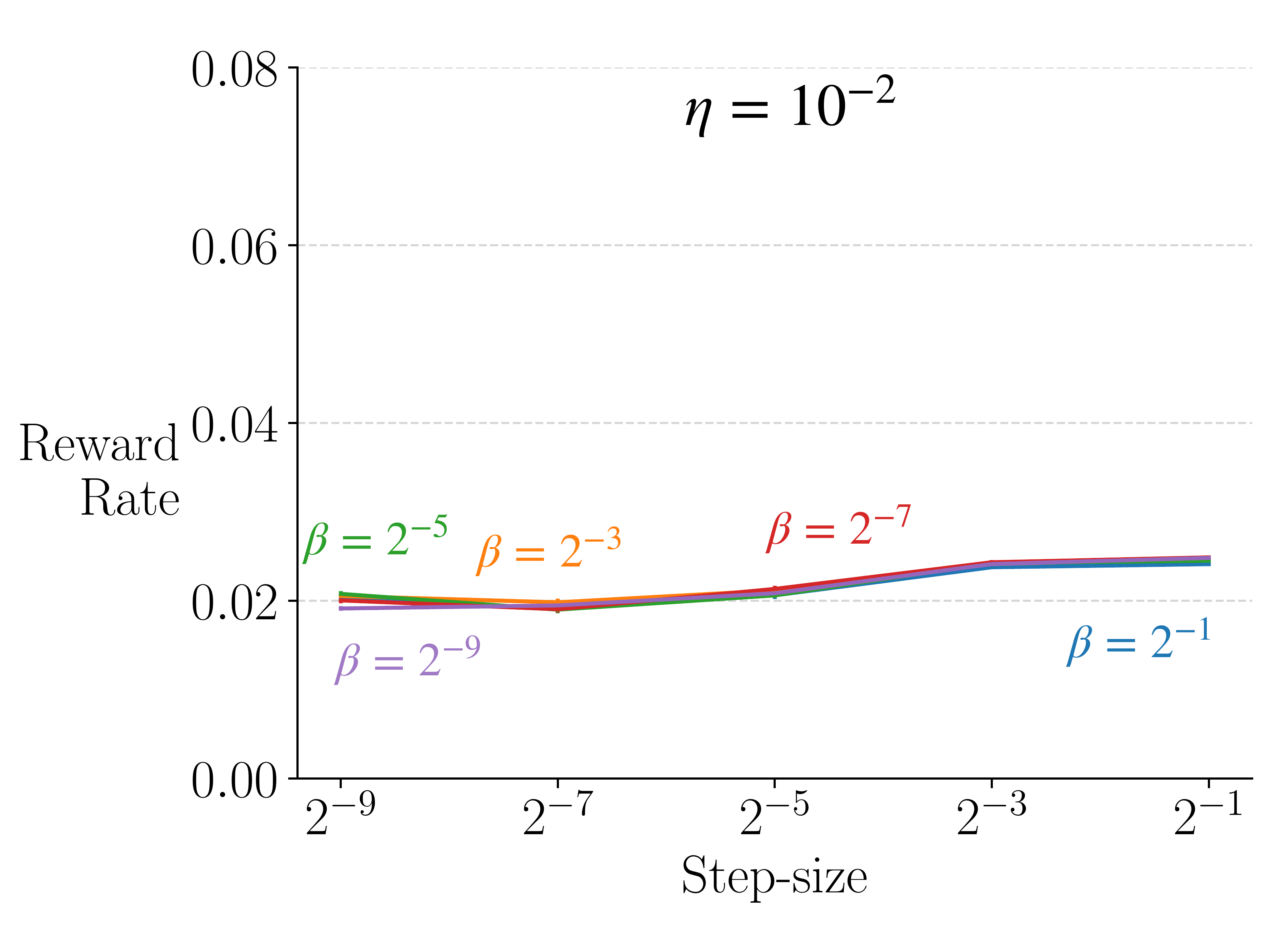}
\end{subfigure}%
\begin{subfigure}{.5\textwidth}
    \centering
    \includegraphics[width=0.9\textwidth]{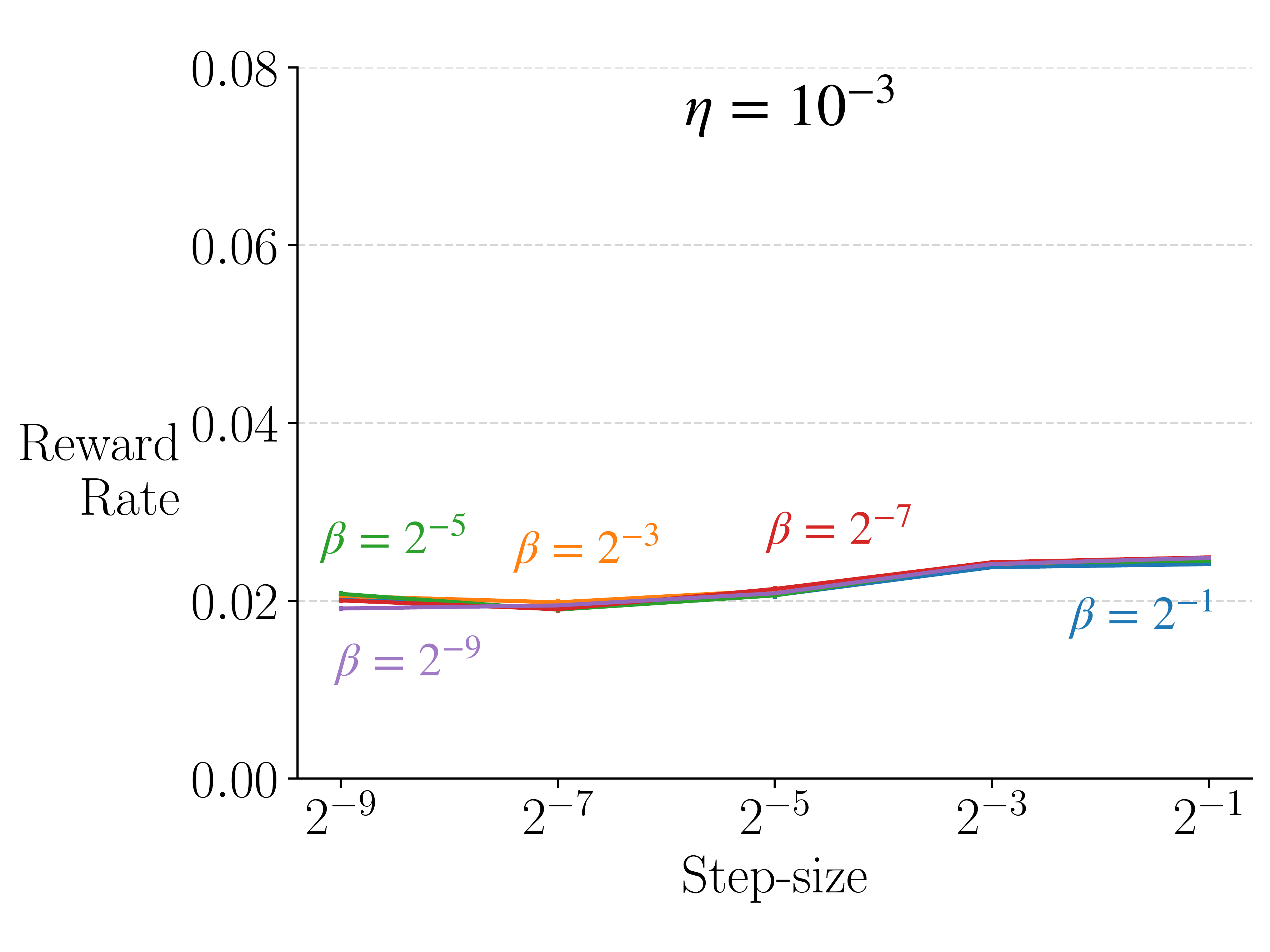}
\end{subfigure}
\begin{subfigure}{.5\textwidth}
    \centering
    \includegraphics[width=0.9\textwidth]{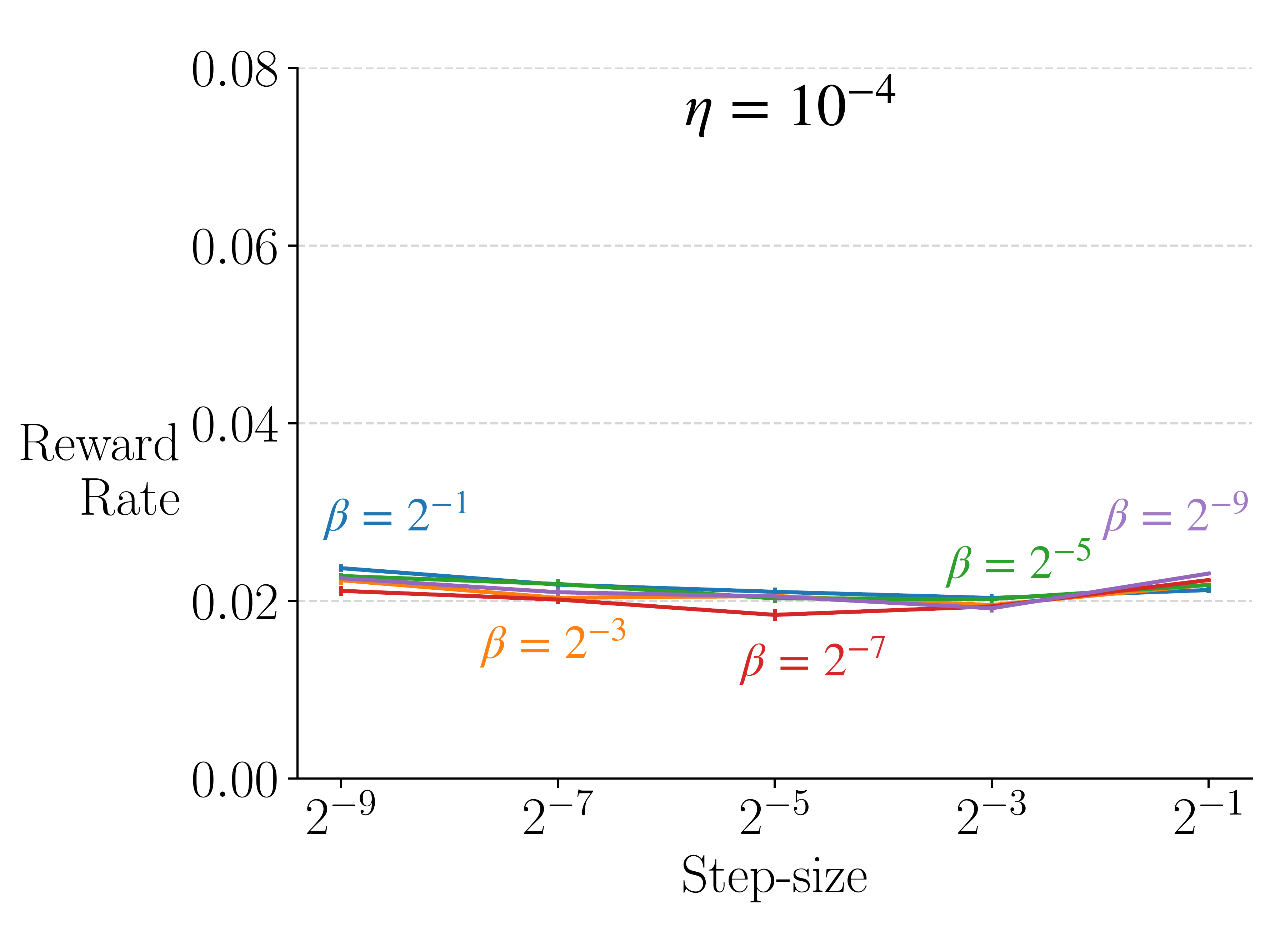}
\end{subfigure}%
    \caption{Plots showing a parameter study for inter-option Differential Q-learning and the set of options $ \calO = \calH$ in the continuing Four-Room domain when the goal was to go to \texttt{G1}. The experimental setting and the plot axes are the same as mentioned in \cref{fig: inter-option_learning_para_study}. Compared with \cref{fig: inter-option_learning_para_study}, it can be seen that the algorithm's rate of learning with $\calO = \calH$ was worse than it with $\calO = \calH + \calA$. This is because there is no hallway option from $\calH$ can takes the agent to \texttt{G1}. The algorithm's rate of learning varied little over a broad range of its parameters $\alpha, \beta$ and $\eta$, and also varied little over multiple runs.}
    \label{fig: inter-option_learning_options_para_study}
\end{figure*}

\begin{figure*}[h!]
\centering
\begin{subfigure}{.5\textwidth}
    \centering
    \includegraphics[width=0.9\textwidth]{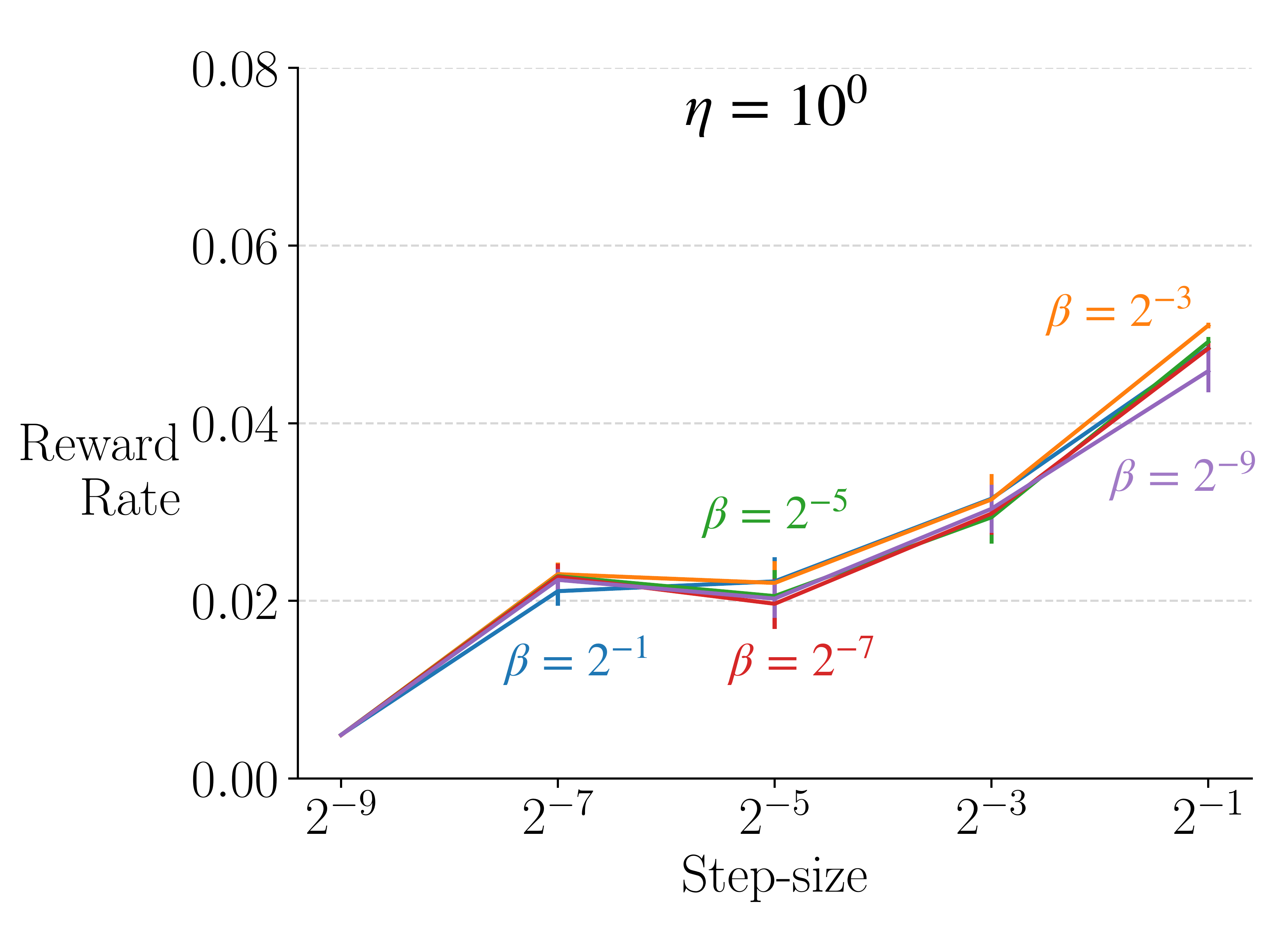}
\end{subfigure}%
\begin{subfigure}{.5\textwidth}
    \centering
    \includegraphics[width=0.9\textwidth]{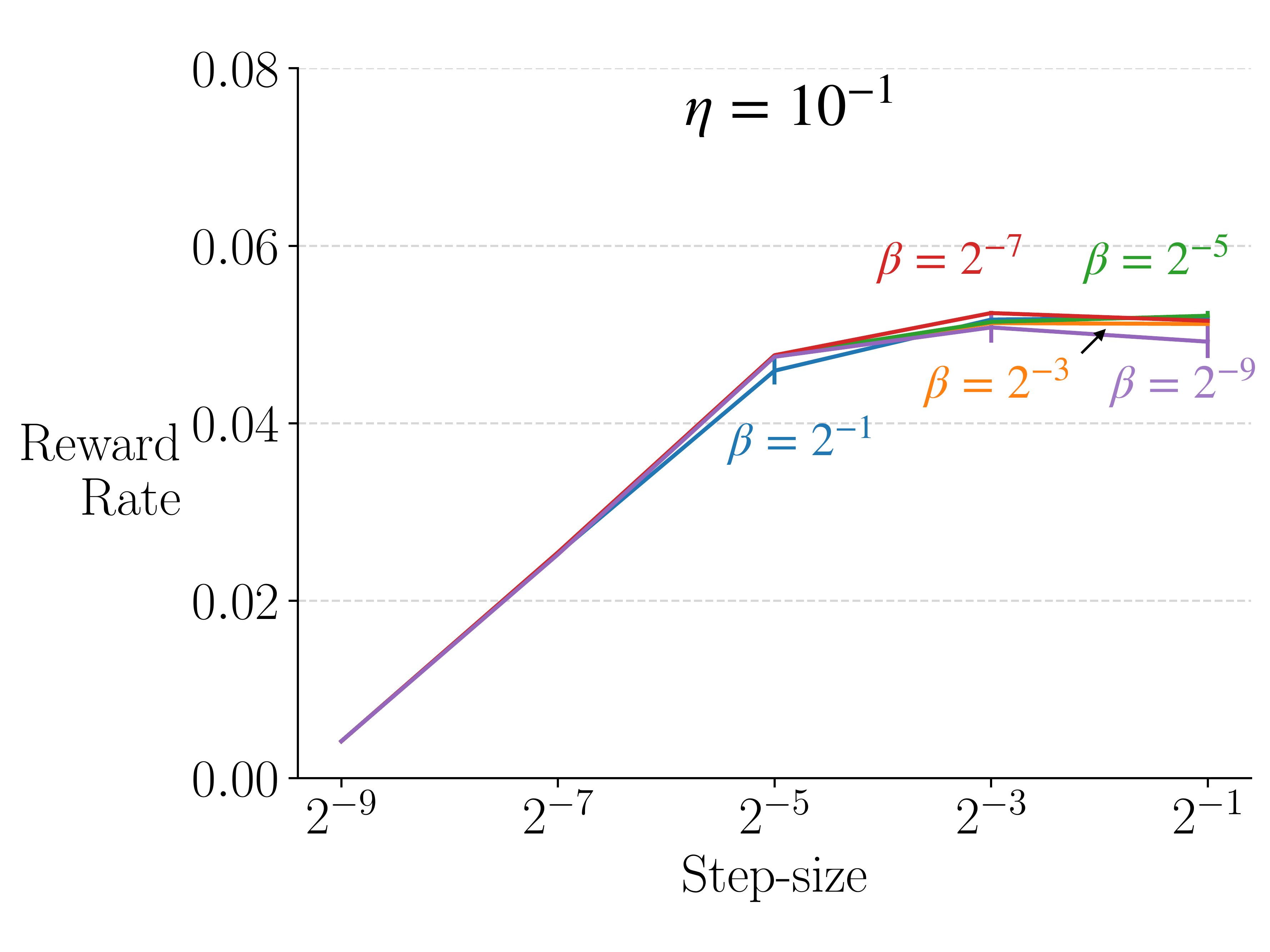}
\end{subfigure}
\begin{subfigure}{.5\textwidth}
    \centering
    \includegraphics[width=0.9\textwidth]{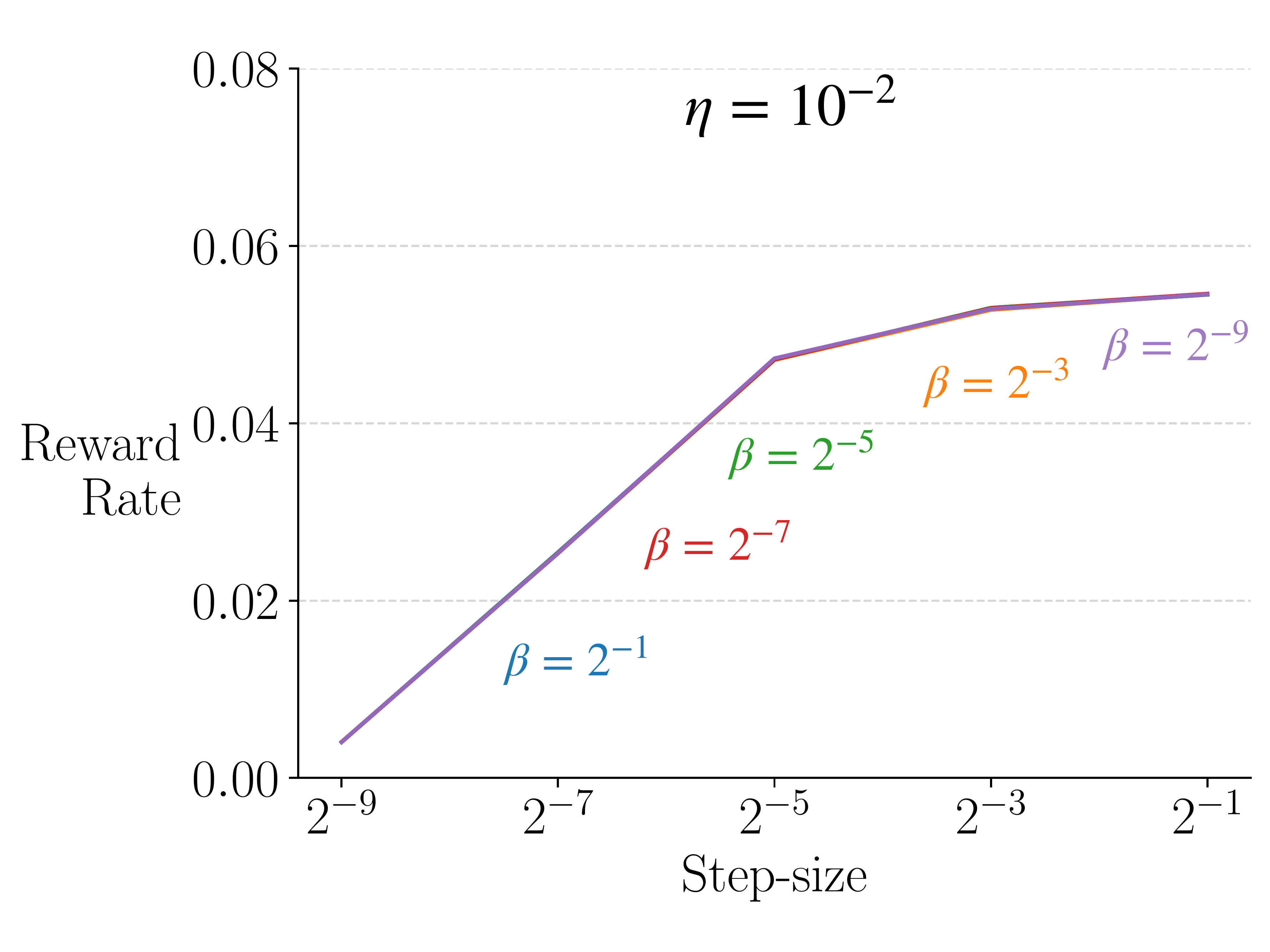}
\end{subfigure}%
\begin{subfigure}{.5\textwidth}
    \centering
    \includegraphics[width=0.9\textwidth]{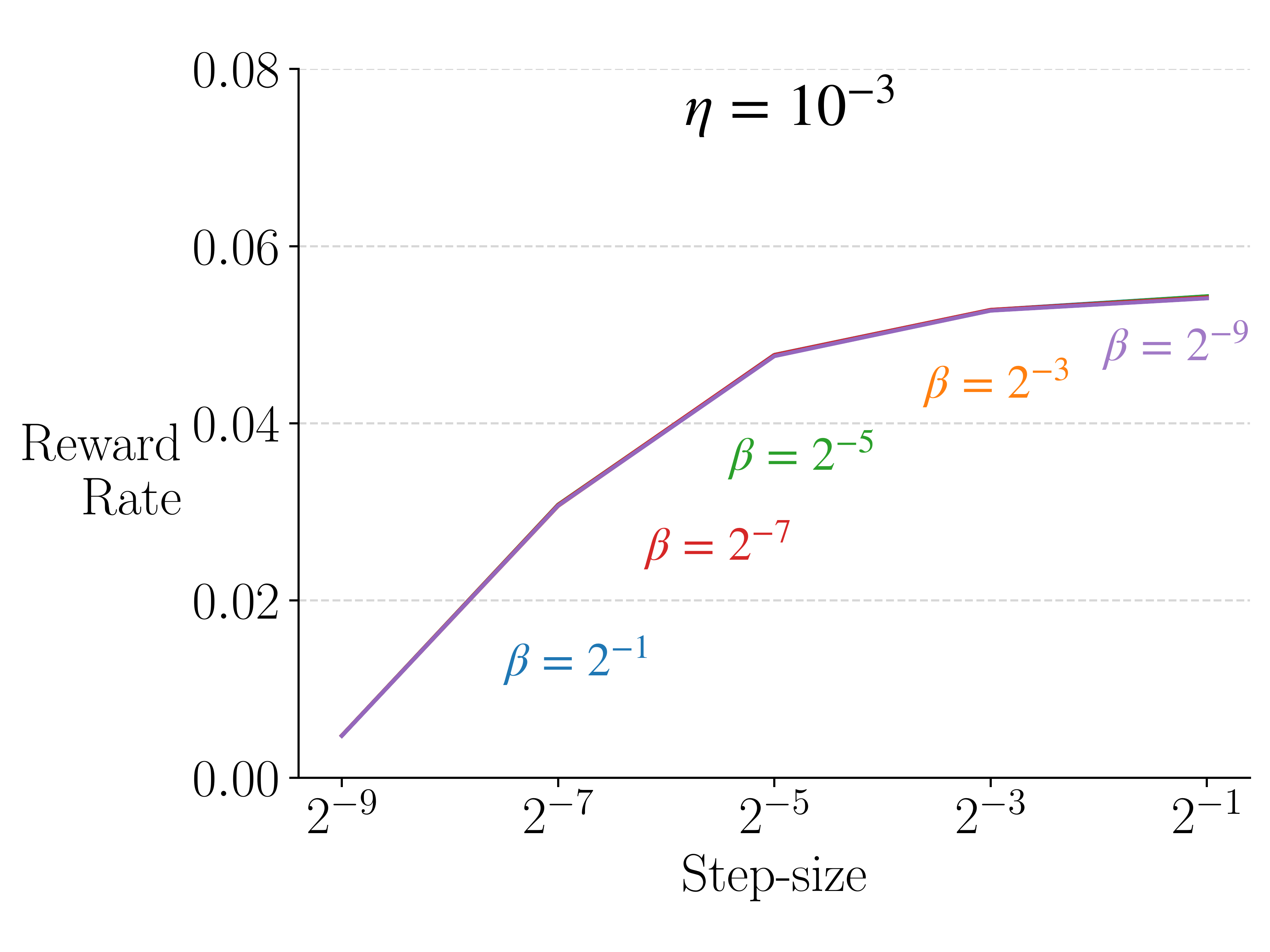}
\end{subfigure}
\begin{subfigure}{.5\textwidth}
    \centering
    \includegraphics[width=0.9\textwidth]{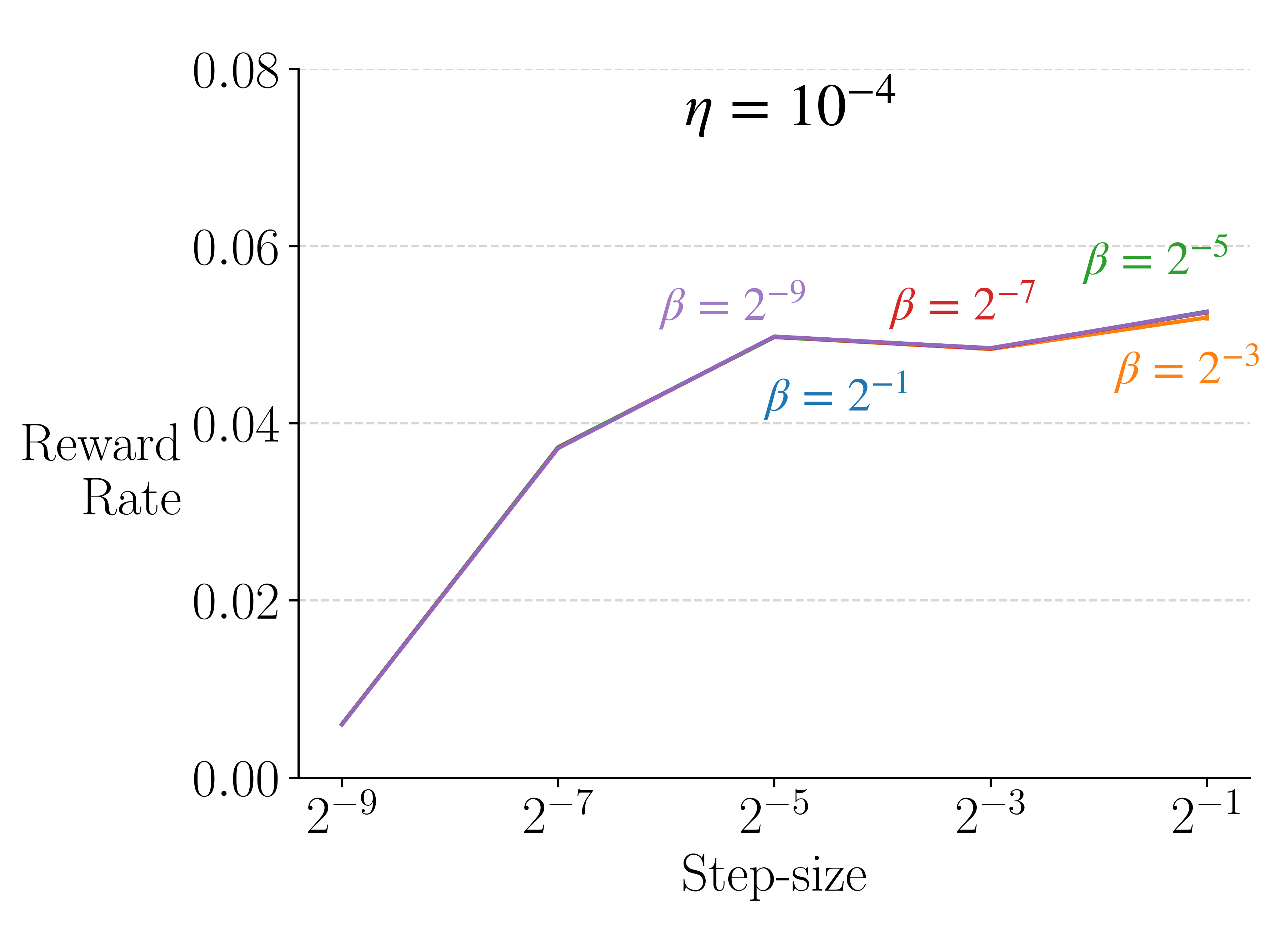}
\end{subfigure}%
    \caption{Plots showing a parameter study for inter-option Differential Q-learning and the set of options $\calO = \calA$ in the continuing Four-Room domain when the goal was to go to \texttt{G1}. Note that with options being primitive actions, the algorithm becomes exactly the same as Differential Q-learning by Wan et al.\ (2021). The experimental setting and the plot axes are the same as mentioned in \cref{fig: inter-option_learning_para_study}. Compared with \cref{fig: inter-option_learning_para_study}, it can be seen that the algorithm's rate of learning with $\calO = \calA$ was worse than it with $\calO = \calH + \calA$, particularly for small $\alpha$. The algorithm's rate of learning did not vary too much over a broad range of its parameters $\beta$ and $\eta$, and also varied little over multiple runs. The algorithm's performance is more sensitive to the choice of $\alpha$.}
    \label{fig: inter-option_learning_actions_para_study}
\end{figure*}

\newpage\phantom{blabla}

\newpage\phantom{blabla}

\newpage
\subsection{Intra-option Q-learning}
\label{app: additional empirical results: Intra-option Learning}

\begin{figure*}[h!]
\centering
\begin{subfigure}{.5\textwidth}
    \centering
    \includegraphics[width=0.9\textwidth]{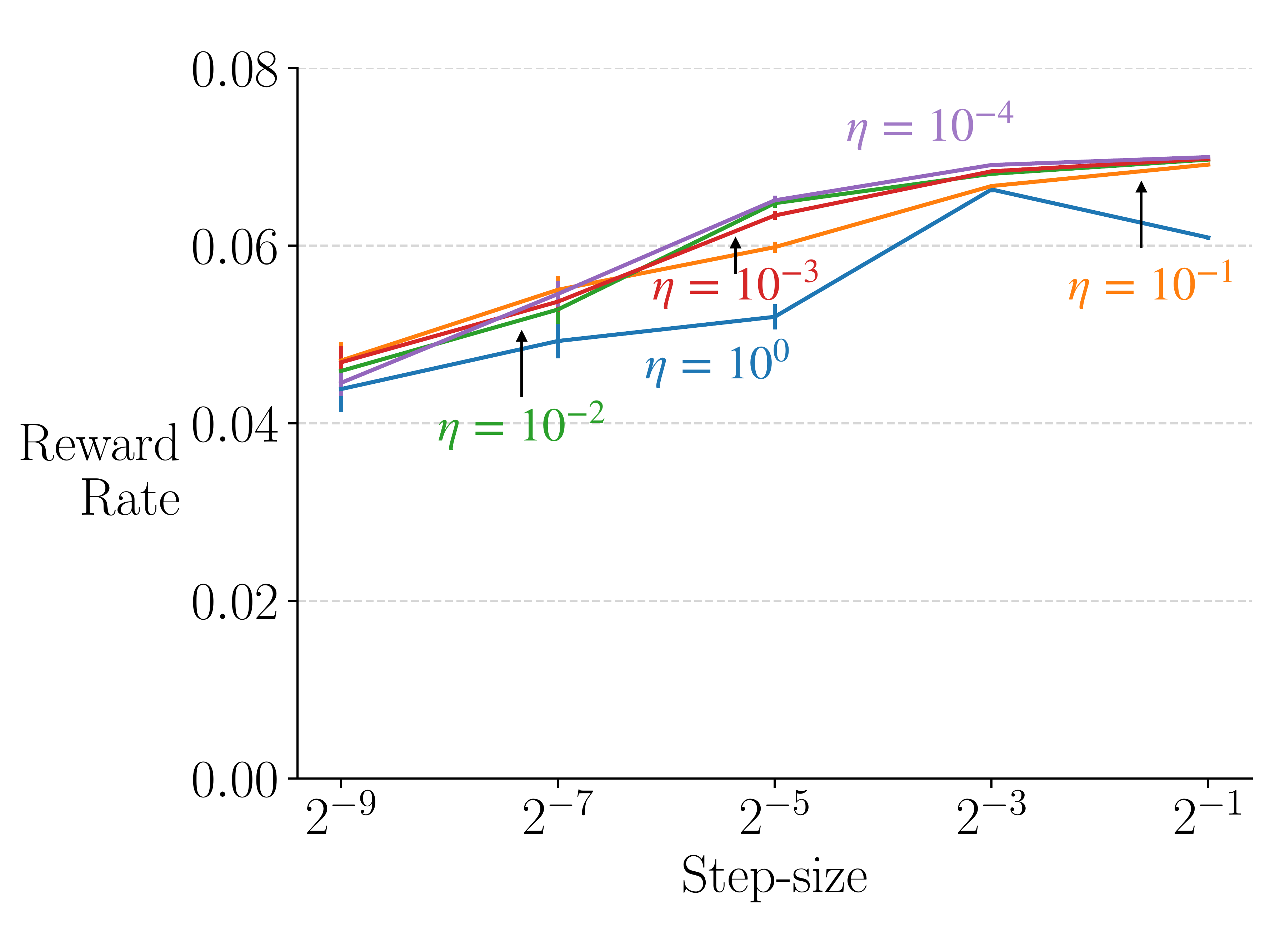}
\end{subfigure}
    \caption{Plots showing a parameter study for intra-option Differential Q-learning with the set of options $\calO = \calH$ in the continuing Four-Room domain when the goal was to go to \texttt{G2}. The algorithm used a behavior policy consisting only of primitive actions. The hallway options were never executed.. The experimental setting and the plot axes are the same as mentioned in \cref{sec: intra-option value}.
    The algorithm's rate of learning varied little over a broad range of its parameters $\alpha$ and $\eta$, and also varied little over multiple runs.}
    \label{fig: intra-option random actions para study}
\end{figure*}

\subsection{Interruption}
\label{app: additional empirical results: Interruption}
\begin{figure*}[h!]
\centering
\begin{subfigure}{.5\textwidth}
    \centering
    \includegraphics[width=0.9\textwidth]{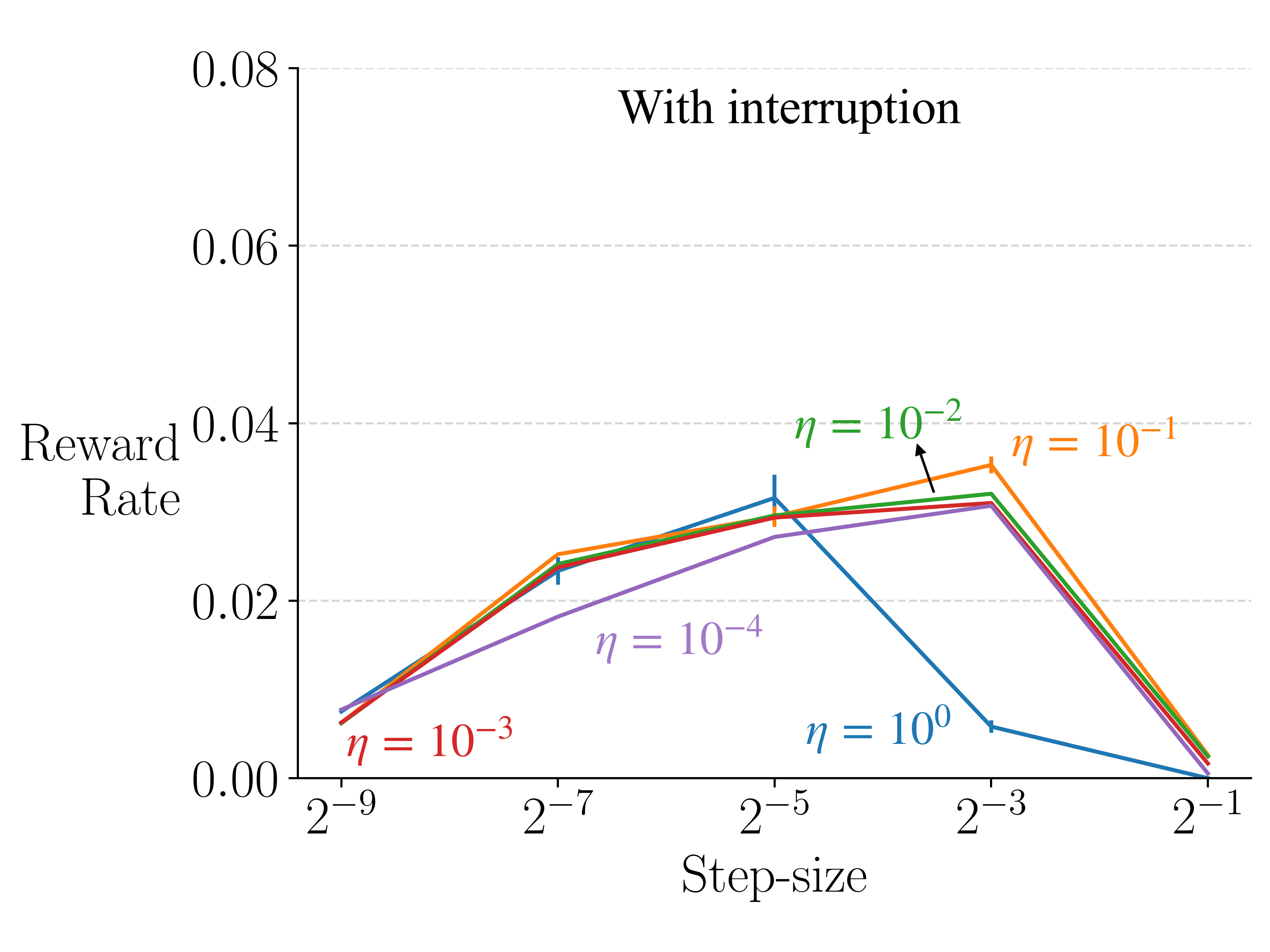}
\end{subfigure}%
\begin{subfigure}{.5\textwidth}
    \centering
    \includegraphics[width=0.9\textwidth]{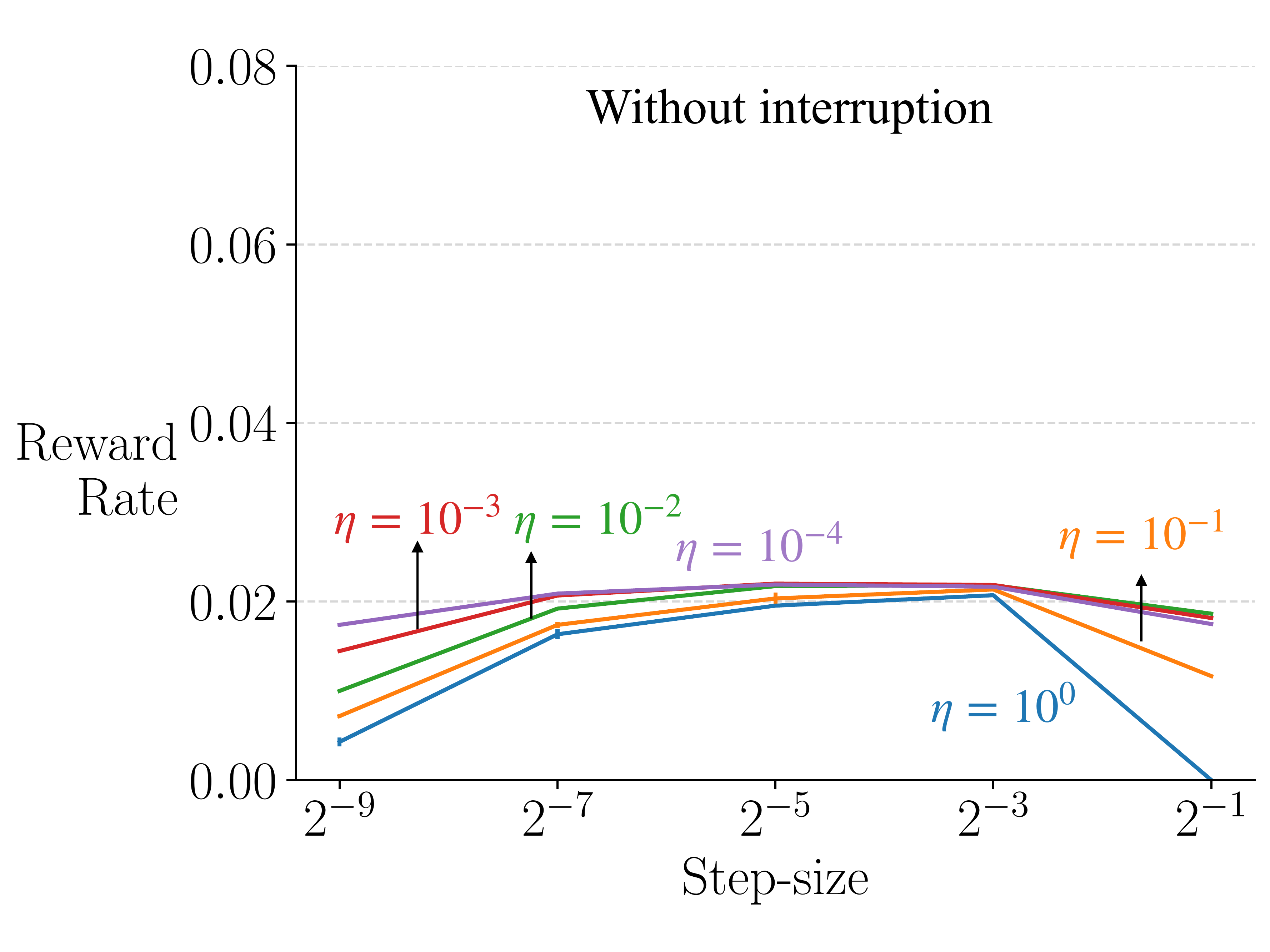}
\end{subfigure}
    \caption{
    Plots showing parameter studies for intra-option Differential Q-learning with and without interruption in the continuing Four-Room domain when the goal was to go to \texttt{G3}. The algorithm used the set of hallway options $\calO = \calH$. The experimental setting and the plot axes are the same as mentioned in \cref{sec: interruption}.
    The algorithm's rate of learning with interruption was higher than it without interruption for medium sized choices of $\alpha$. When a large or small $\alpha$ was used, interruption produced a worse rate of learning.
    The algorithm's rate of learning varied not too much over a broad range of its parameters $\eta$ and varied little over multiple runs, regardless of interruption. The algorithm's rate of learning was more sensitive to $\alpha$ when interruption is used.}
    \label{fig: interruption parameter study}
\end{figure*}

\subsection{Prediction Experiments}
\label{app: additional empirical results: Prediction}
We also performed a set of experiments to show that both inter- and intra-option Differential Q-evaluation can learn the reward rate well. The tested environment is the same as the one used to test inter-option Differential Q-learning (with \texttt{G1}). The set of options consists of 4 primitive actions and 8 hallway options. For each state, the behavior policy randomly picks an option. The target policy is an optimal policy, which induces a reward rate 0.0625. We ran both inter- and intra option Differential Q-evaluation in this problem. The parameters used are the same with those used in inter- and intra-option Differential Q-learning experiments. The sensitivity of the two algorithms w.r.t. the parameters is shown in Figure~\ref{fig: inter-option Q-evaluation parameter study} and Figure~\ref{fig: intra-option Q-evaluation parameter study}. Inter-option algorithm's reward rate error is quite robust to $\beta$. Intra-option algorithm's reward rate error is generally better than Inter-option algorithm's reward rate error unless a large stepsize like $\alpha = 0.5$ is used. 

\begin{figure*}[h!]
\centering
\begin{subfigure}{.5\textwidth}
    \centering
    \includegraphics[width=0.9\textwidth]{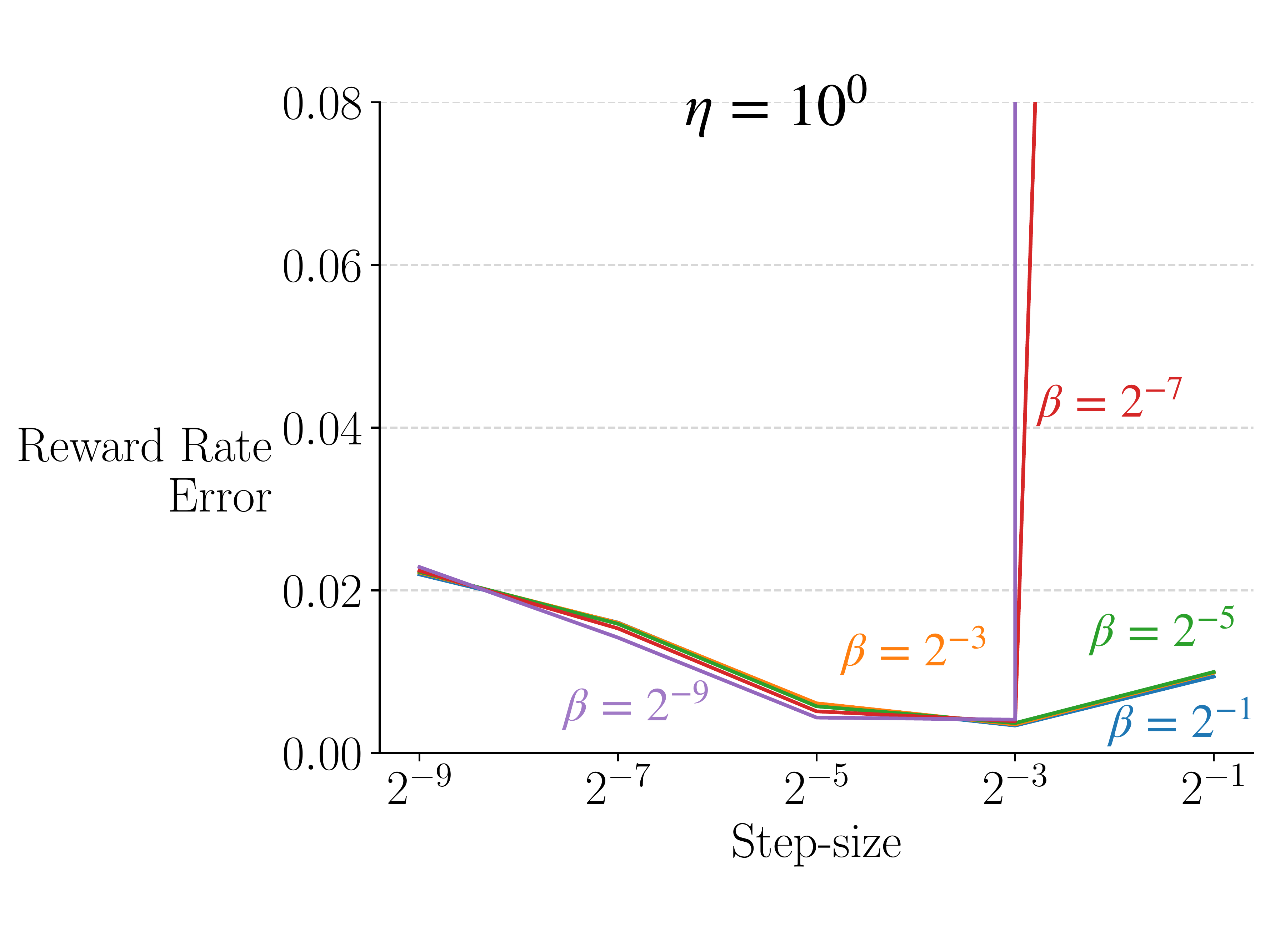}
\end{subfigure}%
\begin{subfigure}{.5\textwidth}
    \centering
    \includegraphics[width=0.9\textwidth]{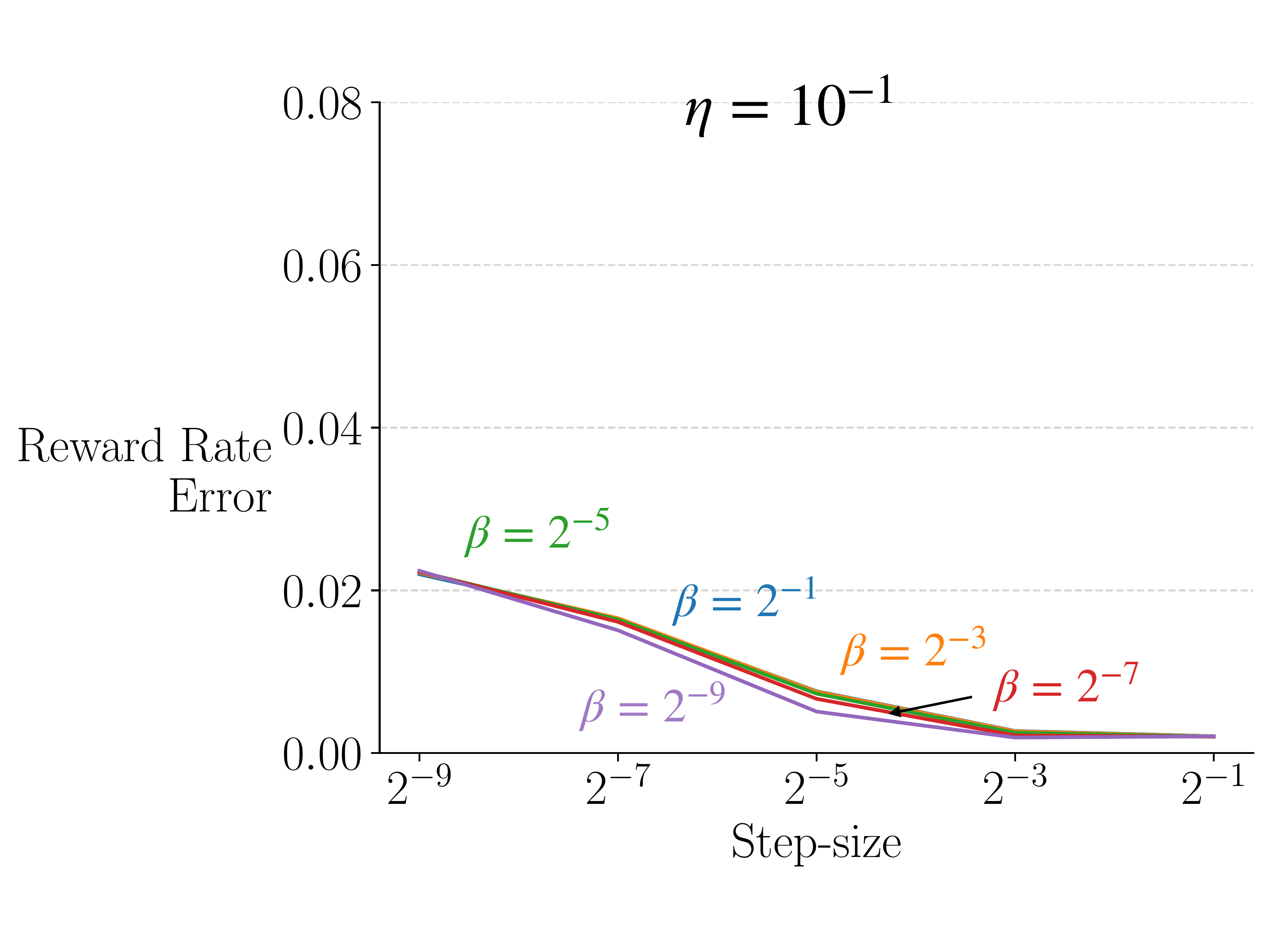}
\end{subfigure}
\begin{subfigure}{.5\textwidth}
    \centering
    \includegraphics[width=0.9\textwidth]{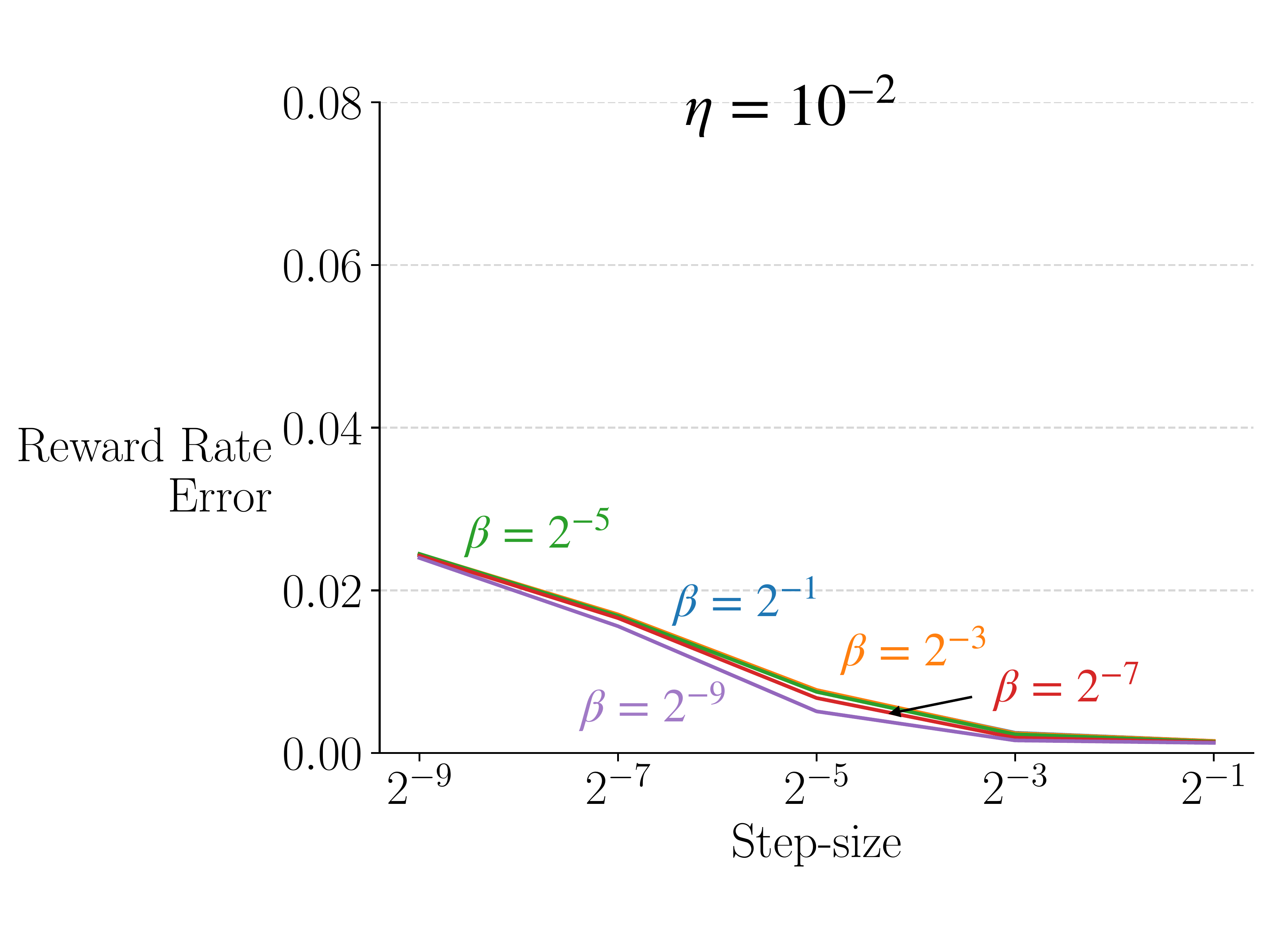}
\end{subfigure}%
\begin{subfigure}{.5\textwidth}
    \centering
    \includegraphics[width=0.9\textwidth]{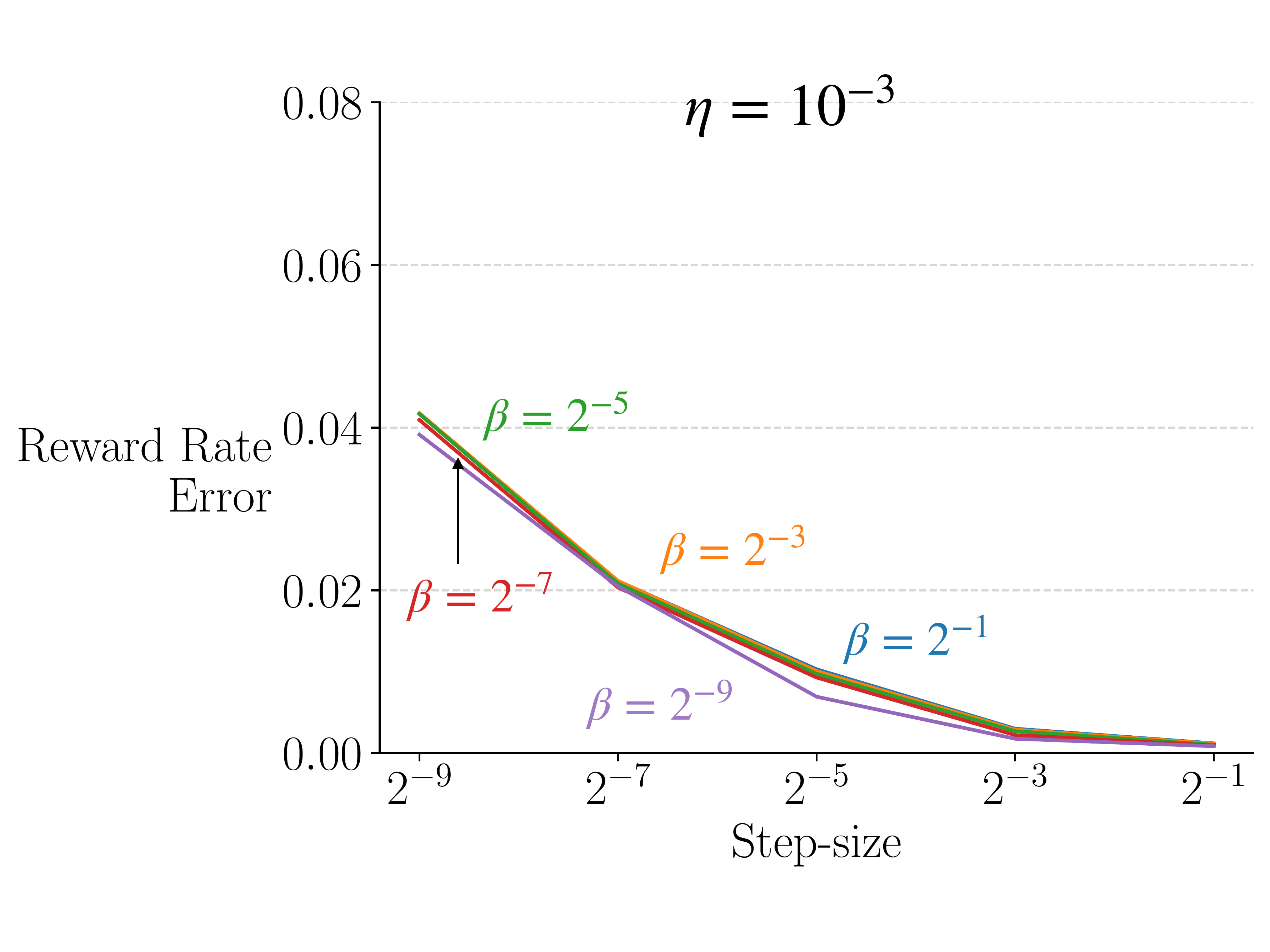}
\end{subfigure}
\begin{subfigure}{.5\textwidth}
    \centering
    \includegraphics[width=0.9\textwidth]{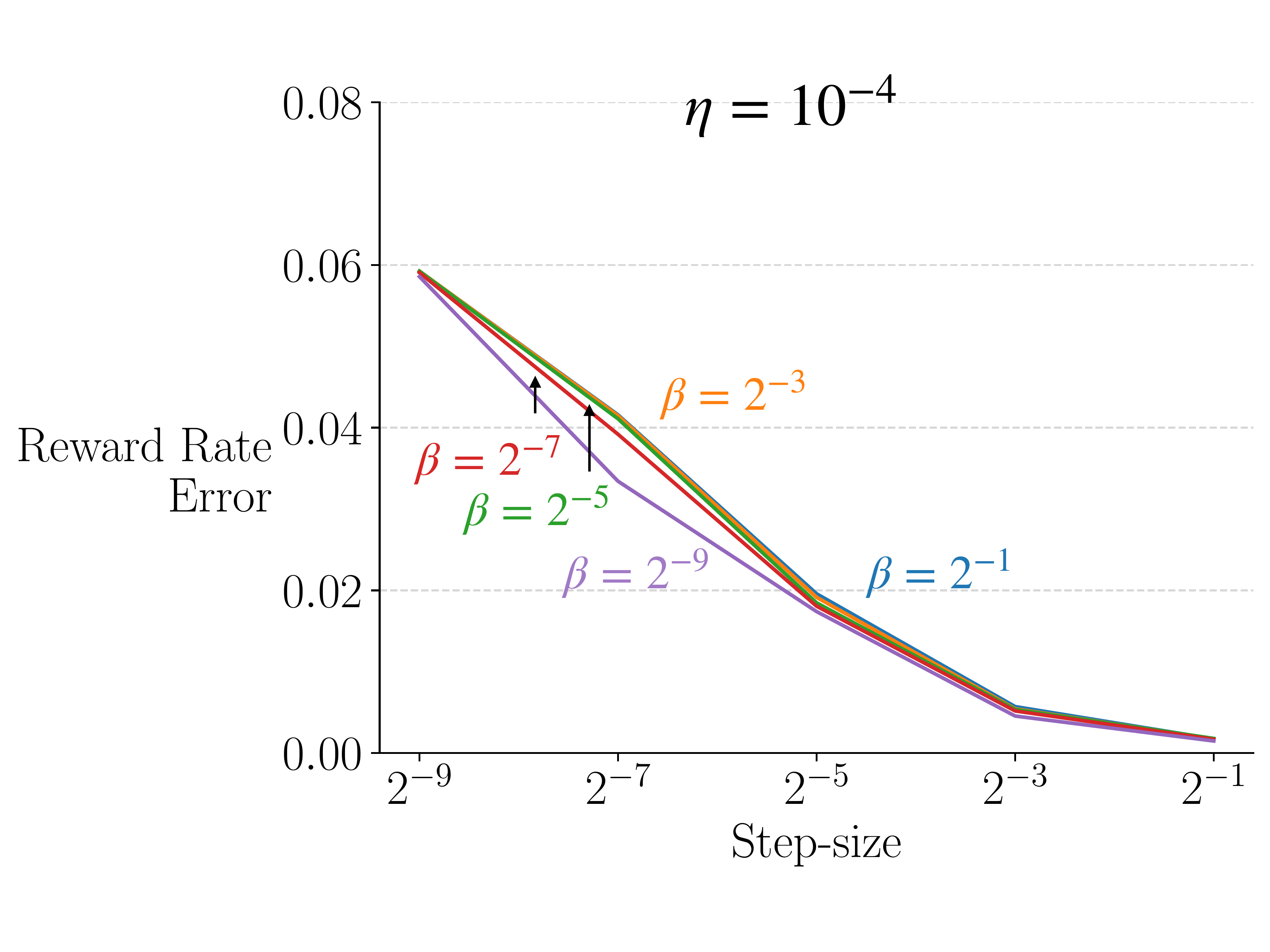}
\end{subfigure}%
\caption{
Plots showing parameter studies for inter-option Differential Q-evaluation in the continuing Four-Room domain when the goal was to go to \texttt{G1}. The algorithm used the set of primitive actions and the set of hallway options $\calO = \calA + \calH$. The y-axis is the absolute difference between the optimal reward rate 0.0625 and the estimated reward rate, averaged over all 200,000 steps.}
\label{fig: inter-option Q-evaluation parameter study}
\end{figure*}

\begin{figure*}[h!]
\centering
\begin{subfigure}{.5\textwidth}
    \centering
    \includegraphics[width=0.9\textwidth]{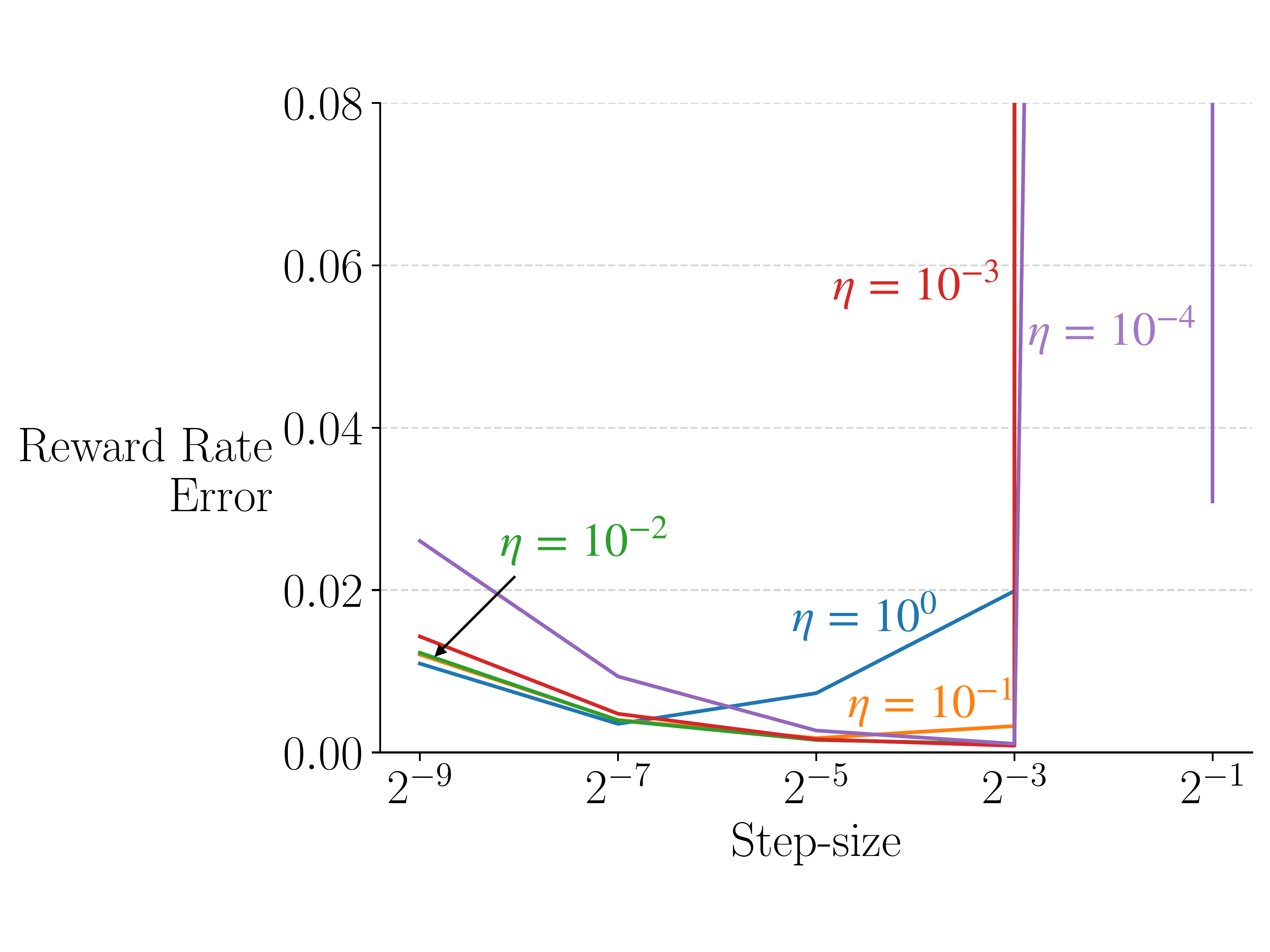}
\end{subfigure}%
\caption{
Plots showing parameter studies for intra-option Differential Q-evaluation in the continuing Four-Room domain when the goal was to go to \texttt{G1}. The setting is the same as the one used for intra-option Differential Q-evaluation.
}
\label{fig: intra-option Q-evaluation parameter study}
\end{figure*}

\section{Additional Discussion}\label{app: additional discussion}

\subsection{Two Failed Attempts on Extending Differential Q-learning to an Inter-option Algorithm}\label{app: additional discussion: Other Attempts on Extending Differential Q-learning}
The authors have tried two other ways of extending Differential Q-learning to an Inter-option Algorithm (cf.\ \cref{sec: inter-option}). While these two ways appear to work properly at the first glance, they do not actually. We now show these two approaches and explain why they do not work properly.

The first extension uses, for each option, the average-reward rate per-step instead of the total reward as the reward of the option. In particular, such an extension use update rules \eqref{eq: possible extension of Diff Q} and \eqref{eq: possible extension of Diff bar R}, but with TD error defined as:
\begin{align} \label{eq: Inter-option Differential Q-learning wrong TD error}
    \delta_n' & \doteq \hat R_n / \hat L_n - \bar R_n + \max_o Q_n (\hat S_{n+1}, o) - Q_n(\hat S_{n}, \hat O_n)
\end{align}
Unfortunately, such an extension can not guarantee convergence to a desired point. Specifically, the extension, if converges, will converge to a solution of $\mathbb{E}[\delta_n'] = 0$, which is not necessarily a solution of the Bellman equation $\mathbb{E}[\delta_n] = 0$ (Equation \ref{eq: SMDP Bellman optimality equation}). 

An alternative approach to avoid the instability issue is to shrink the entire update, not the option's cumulative reward, by the sample length:
\begin{align}
    Q_{n+1}(\hat S_n, \hat O_n) & \doteq Q_{n}(\hat S_n, \hat O_n) + \alpha_{n} \delta_{n} / \hat L_n, \label{eq: possible extension of Diff Q 2}\\
    \bar R_{n+1} & \doteq \bar R_{n} + \eta \alpha_{n} \delta_{n} / \hat L_n. \label{eq: possible extension of Diff bar R 2}
\end{align}
Still, the above two updates can not guarantee convergence to the desired values because, again, $\bbE[\delta_n / \hat L_n] = 0$ does not imply that the Bellman equation $\mathbb{E}[\delta_n] = 0$ is satisfied. 


\subsection{Pseudocodes}
\label{app: additional discussion: pseudocodes}

\begin{algorithm}[h]
\DontPrintSemicolon
\SetAlgoLined
\KwIn{Behavioral policy $\mu_b$'s parameters (e.g., $\epsilon$ for $\epsilon$-greedy)}
\SetKwInput{AP}{Algorithm parameters}
\SetKwRepeat{Do}{do}{while}
\AP{step-size parameters $\alpha, \eta, \beta$}
Initialize $Q(s,o)\ \forall\,s\in\calS,o\in\calO, \bar{R}$ arbitrarily (e.g., to zero); $L(s,o) \gets 1\ \forall\ s\in\calS,o\in\calO$ \\
Obtain initial $S$ \\
 \While{still time to train}
 {
    Initialize $\hat{L}\gets 0, \hat R \gets 0, S_{tmp} \gets S$ \\
    $O \gets$ option sampled from $\mu_b(\cdot \mid S)$ \\
    \Do{$O$ doesn't terminate in $S'$}
    {
        Sample primitive action $A \sim \pi(\cdot \mid S,O)$ \\ 
        Take action $A$, observe $R, S'$ \\
        $\hat{L} \gets \hat{L} + 1$ \\
        $\hat R \gets \hat R + R$ \\
        $S \gets S'$ \\
    }
    $S \gets  S_{tmp}$ \\
    $L(S,O) \gets  L(S,O) + \beta \big( \hat{L} - L(S,O) \big)$ \\
    $\delta \gets  \hat R - \bar{R} \cdot L(S,O) + \max_{o}Q(S',o) - Q(S,O)$ \\
    $Q(S,O) \gets  Q(S,O) + \alpha \delta / L(S,O)$ \\
    $\bar{R} \gets  \bar{R} + \eta \alpha \delta / L(S,O)$ \\
    $S \gets S'$ \\
 }
 return $Q$
 \caption{Inter-option Differential Q-learning}
 \label{algo:inter-option-diff-q}
\end{algorithm}

\begin{algorithm}[h]
\DontPrintSemicolon
\SetAlgoLined
\KwIn{Behavioral policy $\mu_b$, target policy $\mu$}
\SetKwInput{AP}{Algorithm parameters}
\SetKwRepeat{Do}{do}{while}
\AP{step-size parameters $\alpha, \eta, \beta$}
Initialize $Q(s,o)\ \forall\,s\in\calS,o\in\calO, \bar{R}$ arbitrarily (e.g., to zero); $L(s,o) \gets 1\ \forall\ s\in\calS,o\in\calO$ \\
Obtain initial $S$ \\
 \While{still time to train}
 {
    Initialize $\hat{L}\gets 0, \hat R \gets 0, S_{tmp} \gets S$ \\
    $O \gets$ option sampled from $\mu_b(\cdot \mid S)$ \\
    \Do{$O$ doesn't terminate in $S'$}
    {
        Sample primitive action $A \sim \pi(\cdot \mid S,O)$ \\ 
        Take action $A$, observe $R, S'$ \\
        $\hat{L} \gets \hat{L} + 1$ \\
        $\hat R \gets \hat R + R$ \\
        $S \gets S'$ \\
    }
    $S \gets  S_{tmp}$ \\
    $L(S,O) \gets  L(S,O) + \beta \big( \hat{L} - L(S,O) \big)$ \\
    $\delta \gets  \hat R - \bar{R} \cdot L(S,O) + \sum_o \mu(o \mid S') Q(S',o) - Q(S,O)$ \\
    $Q(S,O) \gets  Q(S,O) + \alpha \delta / L(S,O)$ \\
    $\bar{R} \gets  \bar{R} + \eta \alpha \delta / L(S,O)$ \\
    $S \gets S'$ \\
 }
 return $Q$
 \caption{Inter-option Differential Q-evaluation (learning)}
 \label{algo:inter-option-diff-q}
\end{algorithm}

\begin{algorithm}[h]
\DontPrintSemicolon
\SetAlgoLined
\KwIn{Behavioral policy $\mu_b$'s parameters (e.g., $\epsilon$ for $\epsilon$-greedy)}
\SetKwInput{AP}{Algorithm parameters}
\SetKwRepeat{Do}{do}{while}
\AP{step-size parameters $\alpha, \eta$}
Initialize $Q(s,o)\ \forall\,s\in\calS,o\in\calO, \bar{R}$ arbitrarily (e.g., to zero) \\
Obtain initial $S$ \\
 \While{still time to train}
 {
    $O \gets$ option sampled from $\mu_b(\cdot \mid S)$ \\
    \Do{$O$ doesn't terminate in $S$}
    {
        Sample primitive action $A \sim \pi(\cdot \mid S,O)$ \\ 
        Take action $A$, observe $R, S'$ \\
        $\Delta = 0$\\
        \For{all options $o$}
        {
            $\rho \gets \pi(A \mid S,o) / \pi(A \mid S,O)$ \\
            $\delta \gets  R - \bar{R} + \Big( \big( 1-\beta(S',o) \big) Q(S',o) + \beta(S',o) \max_{o'} Q(S',o') \Big) - Q(S,o)$ \\
            $Q(S,o) \gets Q(S,o) + \alpha \rho \delta$ \\
            $\Delta \gets \Delta + \eta \alpha \rho \delta$
        }
        $\bar{R} \gets \bar{R} + \Delta$ \\
        $S \gets S'$ \\
    }
 }
 return $Q$
 \caption{Intra-option Differential Q-learning}
 \label{algo:intra-option-diff-q}
\end{algorithm}

\begin{algorithm}[h]
\DontPrintSemicolon
\SetAlgoLined
\KwIn{Behavioral policy $\mu_b$'s parameters (e.g., $\epsilon$ for $\epsilon$-greedy)}
\SetKwInput{AP}{Algorithm parameters}
\SetKwRepeat{Do}{do}{while}
\AP{step-size parameters $\alpha, \eta$}
Initialize $Q(s,o)\ \forall\,s\in\calS,o\in\calO, \bar{R}$ arbitrarily (e.g., to zero) \\
Obtain initial $S$ \\
$O \gets$ option sampled from $\mu_b(\cdot|S)$ \\
 \While{still time to train}
 {
    {
        \If{$O \notin \argmax Q(S, \cdot)$ }
        {
            $O \gets$ option sampled from $\mu_b(\cdot|S)$ \\
        }
        Sample primitive action $A \sim \pi(\cdot|S,O)$ \\ 
        Take action $A$, observe $R, S'$ \\
        $\Delta = 0$\\
        \For{all options $o$}
        {
            $\rho \gets \pi(A|S,o) / \pi(A|S,O)$ \\
            $\delta \gets R - \bar{R} + \Big( \big( 1-\beta(S',o) \big) Q(S',o) + \beta(S',o) \max_{o'} Q(S',o') \Big) - Q(S,o)$ \\
            $Q(S,o) \gets Q(S,o) + \alpha \rho \delta$ \\
            $\Delta \gets \Delta + \eta \alpha \rho \delta$
        }
        $\bar{R} \gets \bar{R} + \Delta$ \\
    }
    $S = S'$ \\
 }
 return $Q$
 \caption{Intra-option Differential Q-learning with interruption}
 \label{algo:intra-option-diff-q}
\end{algorithm}

\begin{algorithm}[h]
\DontPrintSemicolon
\SetAlgoLined
\KwIn{Behavioral policy $\mu_b$, target policy $\mu$}
\SetKwInput{AP}{Algorithm parameters}
\SetKwRepeat{Do}{do}{while}
\AP{step-size parameters $\alpha, \eta$}
Initialize $Q(s,o)\ \forall\,s\in\calS,o\in\calO, \bar{R}$ arbitrarily (e.g., to zero) \\
Obtain initial $S$ \\
 \While{still time to train}
 {
    $O \gets$ option sampled from $\mu_b(\cdot \mid S)$ \\
    \Do{$O$ doesn't terminate in $S$}
    {
        Sample primitive action $A \sim \pi(\cdot \mid S,O)$ \\ 
        Take action $A$, observe $R, S'$ \\
        $\Delta = 0$\\
        \For{all options $o$}
        {
            $\rho \gets \pi(A \mid S,o) / \pi(A \mid S,O)$ \\
            $\delta \gets  R - \bar{R} + \Big( \big( 1-\beta(S',o) \big) Q(S',o) + \beta(S',o) \sum_{o'} \mu(o' \mid S') Q(S',o') \Big) - Q(S,o)$ \\
            $Q(S,o) \gets Q(S,o) + \alpha \rho \delta$ \\
            $\Delta \gets \Delta + \eta \alpha \rho \delta$
        }
        $\bar{R} \gets \bar{R} + \Delta$ \\
        $S \gets S'$ \\
    }
 }
 return $Q$
 \caption{Intra-option Differential Q-evaluation (learning)}
 \label{algo:intra-option-diff-q}
\end{algorithm}

\begin{algorithm}[h]
\DontPrintSemicolon
\SetAlgoLined
\KwIn{Behavioral policy $\mu_b$'s parameters (e.g., $\epsilon$ for $\epsilon$-greedy)}
\SetKwInput{AP}{Algorithm parameters}
\SetKwRepeat{Do}{do}{while}
\AP{step-size parameters $\alpha, \beta, \eta$; number of planning steps per time step $n$}
Initialize $Q(s,o), P(x\mid s, o), R(s, o)\ \forall\,s,x\in\calS,o\in\calO, \bar{R}$, arbitrarily (e.g., to zero); $L(s,o)=1\ \forall\ s\in\calS,o\in\calO$; $T \gets$ False\\
 \While{still time to train}
 {
    $S \gets$ current state\\
    $O \gets$ option sampled from $\mu_b(\cdot \mid S)$\\
    \While{$T$ is False}
    {
        Sample primitive action $A \sim \pi(\cdot \mid S,O)$ \\ 
        Take action $A$, observe $R', S'$ \\
        \For{all options $o$ such that $\pi(A\mid S, o) > 0$}
        {
            $\rho \gets \pi(A \mid S,o) / \pi(A \mid S,O)$ \\
            \For{all states $x \in \calS$}
            {
                $P(x \mid S, o) \gets P(x \mid S, o) + \beta \rho \Big( \beta(S', o) \bbI(S'=x) + \big( 1 - \beta(S', o) \big) P(x \mid S', o) - P(x \mid S, o) \Big)$
            }
            $R(S, o) \gets R(S, o) + \beta \rho \Big( R' + \big( 1 - \beta(S', o) \big) R(S', o) - R(S, o) \Big) $\\
            $L(S, o) \gets L(S, o) + \beta \rho \Big( 1 + \big( 1 - \beta(S', o) \big) L(S', o) - L(S, o) \Big) $
        }
        $T \gets$ indicator of termination sampled from $\beta(S', O)$\\
        \For{all of the $n$ planning steps}{
            $S \gets$ a random previously observed state\\
            $O \gets$ a random option previously taken in $S$\\
            $S' \gets$ a sampled state from $P(\cdot \mid S, O)$ \\
            $\delta \gets R(S, O) - L(S, O)\bar{R} + \max_{o} Q(S',o) - Q(S,O)$ \\
            $Q(S,O) \gets Q(S,O) + \alpha \rho \delta/L(S, O)$ \\
            $\bar{R} \gets \bar{R} + \eta \alpha \rho \delta/L(S, O)$ \\
        }
    }
 }
 return $Q$
 \caption{Combined Algorithm: Intra-option Model-learning + Inter-option Q-planning}
 \label{algo:inter-option-diff-q}
\end{algorithm}

\end{document}